\def\eqref#1{equation~\ref{#1}}
\def\1{\bm{1}}
\DeclareMathAlphabet{\mathsfit}{\encodingdefault}{\sfdefault}{m}{sl}
\SetMathAlphabet{\mathsfit}{bold}{\encodingdefault}{\sfdefault}{bx}{n}
\newtheorem{theorem}{Theorem}
\newtheorem{definition}{Definition}
\newtheorem{lemma}{Lemma}
\newtheorem{assumption}{Assumption}
\newtheorem{proposition}{Proposition}
\newenvironment{theoremref}[1]{
	
	\theoremalt
}{\endtheoremalt}
\runningauthor{Yu, Li, Saha, Lou, Chen}
\begin{document}
\twocolumn[

\aistatstitle{Evidential Uncertainty Probes for Graph Neural Networks}

\aistatsauthor{ Linlin Yu \\ linlin.yu@utdallas.edu \And Kangshuo Li \\ kangshuo.li@utdallas.edu \And  Pritom Kumar Saha \\ pritom.saha@utdallas.edu }

\aistatsaddress{The University of Texas at Dallas \And  The University of Texas at Dallas \And The University of Texas at Dallas } 

\aistatsauthor{Yifei Lou \\ yflou@unc.edu \And Feng Chen \\ feng.chen@utdallas.edu}
\aistatsaddress{The University of North Carolina at Chapel Hill \And The University of Texas at Dallas}]
\newcommand{\fix}{\marginpar{FIX}}
\newcommand{\new}{\marginpar{NEW}}

\begin{abstract}
Accurate quantification of both aleatoric and epistemic uncertainties is essential when deploying Graph Neural Networks (GNNs) in high-stakes applications such as drug discovery and financial fraud detection, where reliable predictions are critical. Although Evidential Deep Learning (EDL) efficiently quantifies uncertainty using a Dirichlet distribution over predictive probabilities, existing EDL-based GNN (EGNN) models require modifications to the network architecture and retraining, failing to take advantage of pre-trained models.
We propose a plug-and-play framework for uncertainty quantification in GNNs that works with pre-trained models without the need for retraining. 
Our Evidential Probing Network (EPN) uses a lightweight Multi-Layer-Perceptron (MLP) head to extract evidence from learned representations, allowing efficient integration with various GNN architectures.  We further introduce evidence-based regularization techniques, referred to as EPN-reg, to enhance the estimation of epistemic uncertainty with theoretical justifications. 
Extensive experiments demonstrate that the proposed EPN-reg achieves state-of-the-art performance in accurate and efficient uncertainty quantification, making it suitable for real-world deployment.

\end{abstract}

\section{Introduction}
Graph Neural Networks (GNNs) have advanced machine learning on structured data, such as social networks, molecular biology, and real-time systems~\citep{kipf2016semi, gilmer2017neural, li2018adaptive}. By leveraging node features and topological connections, GNNs excel in predictive tasks such as node classification and link prediction. However, in fields where erroneous confidence can lead to life-threatening risks or significant economic losses---such as autonomous systems and drug discovery---accurate predictions alone are not enough. Quantifying uncertainty helps assess model confidence and uncover hidden risks in predictions~\citep{gaudelet2021utilizing, zhang2024trajectory}. The complex relationships between nodes in the graph data add further challenges in quantifying uncertainty, which is crucial to ensure that these GNN models can be safely and reliably used in high-stakes settings.

In the probabilistic view, uncertainty can be broadly categorized into aleatoric and epistemic uncertainties. Aleatoric uncertainty arises from intrinsic data noise and randomness, such as variations in measurements or sampling conditions. In a social graph network analysis, aleatoric uncertainty might appear due to variability in user interactions. In contrast, epistemic uncertainty results from the model's limited knowledge and occurs when the model encounters unfamiliar patterns or out-of-distribution (OOD) inputs, as seen in cases of fraud detection and drug discovery with unseen data points. 
Several methods have been explored to capture uncertainties in GNNs~\citep{zhao2020uncertainty, stadler2021graph, wu2023energy, hart2023improvements}.  Among these, Evidential Deep Learning (EDL) by~\cite{sensoy2018evidential} has shown promise due to its efficient computation of uncertainty in a single forward pass and its interpretable results based on evidential theory. However, EDL-based models often require architectural configurations different from conventional deterministic predictive models. Consequently, they must be trained from scratch,  limiting their scalability and compatibility with pre-trained models.

To integrate existing pre-trained classification GNNs with EDL, we propose a plug-and-play framework that does not alter the pre-trained model. Instead, we attach a lightweight uncertainty quantification head to extract evidence from the pre-trained model's representations, referred to as the Evidential Probing Network (EPN). This design preserves EDL's computational efficiency, reduces the number of trainable parameters, and enables flexible integration with various pre-trained GNN models. It is particularly effective in scenarios with limited training data, constrained computational resources, and strict trustworthiness requirements.

The contributions of this work are fourfold. First, we propose a simple probe network attached to a pre-trained classification network and train it using evidential deep learning theory to capture multidimensional uncertainties. Second, we develop evidence-based regularizations to improve the quality of predicted uncertainties, especially in the task of OOD detection. The combination of EPN and regularizations is referred to as EPN-reg. Third, we provide a series of theoretical analyses on the limitations of the standard uncertainty-cross-entropy (UCE) loss when training the EPN and demonstrate how our regularizations address these issues. Lastly, we evaluate our approach on multiple datasets, where the proposed EPN-reg consistently outperforms baselines while maintaining real-time efficiency. Specifically, EPN ranks among the top two in 60 out of 150 OOD detection cases and achieves the best average performance in both OOD and misclassification detection. In summary, this work enhances GNNs' practicality, explainability, and reliability for critical real-world applications.
The code can be found in GitHub repository \footnote{\url{https://github.com/linlin-yu/Evidential-Probing-GNN.git}}.

\section{Related Work}

\textbf{Uncertainty quantification in machine learning} has been explored through various approaches, including Bayesian methods, ensembles, deterministic techniques, and post-hoc recalibration methods. Bayesian methods estimate distributions over network parameters using techniques such as Monte Carlo dropout~\citep{gal2016dropout} and leverage information theory to quantify uncertainty. Ensemble methods train multiple independent models and use the differences in their predictions to measure uncertainty~\citep{lakshminarayanan2017simple}. Deterministic approaches predict prior-based distributions~\citep{ulmer2021prior, sensoy2018evidential} to capture multiple dimensions of uncertainty. Lastly, post-hoc recalibration methods assess uncertainty after training, e.g., by computing energy scores~\citep{liu2020energy}, to obtain uncertainty estimates without modifying the underlying model architecture.

\textbf{Evidential Deep Learning.} \cite{sensoy2018evidential} first introduced Evidential Deep Learning (EDL) for classification tasks. Instead of predicting a single-point estimate of class probabilities, an EDL classification model outputs the parameters of a Dirichlet distribution over the classes. From the perspective of subjective logic, the concentration parameters of this Dirichlet distribution represent the model’s confidence, or evidence, for each class outcome.
\cite{amini2020deep} extended EDL to regression by having the network predict the parameters of a Normal-Inverse-Gamma distribution, which serves as the conjugate prior for a Gaussian likelihood. \cite{charpentier2020posterior} further generalized EDL with the Natural Posterior Network, allowing it to handle any target distribution within the exponential family. This framework has its application in classification, regression, and count prediction tasks.
Ongoing work has focused on refining and analyzing EDL itself. For example, relaxing-EDL~\citep{chen2024r} explores the relaxation of subjective logic assumptions to improve robustness. Additionally, \cite{yu2023uncertainty, bengs2022pitfalls} highlight limitations in the optimization loss used for epistemic uncertainty estimation, raising concerns about its ability to reliably reflect model uncertainty. 

\textbf{Uncertainty quantification in graph.} Compared to uncertainty quantification in input-independent scenarios, methods specifically designed for input-dependent settings, such as node classification in graphs, remain less explored.
\cite{zhao2020uncertainty} first extended the EDL to graph data using knowledge distillation via graph kernel density estimation (GKDE) and Bayesian learning. \cite{stadler2021graph} adapted posterior networks for graphs, introducing class-wise evidence propagation through GNN layers. 
In this paper, we also explore a direct application of EDL to GNNs, which we refer to as EGNN.
Additionally, \citet{wu2023energy} introduced energy-based models to graphs, using node energy-based label propagation to enhance OOD detection performance.

\section{Preliminary}
\subsection{Problem Formulation}
We consider a graph $\mathcal{G} = (\mathbb{V}, \mathbf{A}, \mathbf{X}, \mathbf{y}_{\mathbb{L}})$, where $\mathbb{V}=\{1, \cdots, N\}$ represents the set of nodes, $\mathbf{A}$ is the adjacency matrix, and $\mathbf{X}\in \mathbb{R}^{N\times F}$ is the node feature matrix. The set of labeled nodes is denoted by  $\mathbf{y}_{\mathbb{L}} = \{ \mathbf{y}^i | i\in \mathbb{L}\}$ , where the index set $\mathbb{L} \subset \mathbb{V}$ and each element ${\bf y}^i$ is a one-hot vector corresponding to one of $C$ class labels.

An EGNN outputs both class prediction and associated uncertainty by predicting a Dirichlet distribution over class probabilities for each node. The Dirichlet distribution is parameterized by concentration parameters  
$\boldsymbol{\alpha}^i = [\alpha_1^i, \cdots, \alpha_C^i] $ for each node $i$, with the constraint  $\alpha^i_c > 1\  \forall c\in [C],$ to ensure a non-degenerate distribution.

An EGNN follows the same network architecture as a GNN for classification but differs in its final activation function. Instead of using softmax to generate probability distributions, EGNN employs exponential or SoftPlus~\citep{pandey2023learn}
activations to ensure non-negative and unbounded outputs.
The predicted Dirichlet distribution serves as a conjugate prior to a categorical distribution,  which parameterizes the posterior probability of the class probabilities, i.e.,
\begin{align}
    {\bf y}^i \sim \mathtt{Cat}({\bf p}^i),\ \  {\bf p}^i \sim \mathtt{Dir}({\bf p}^i | {\bm \alpha}^i),
    \label{eq: prediction}
\end{align} 
and the expected class probabilities are given by:
\begin{equation}\label{eq:p-alpha}
    \bar{\mathbf{p}}^i = \frac{\boldsymbol{\alpha}^i}{\alpha_0^i}, \ \ \alpha_{0}^i = \sum_c \alpha_c^i.
\end{equation}

\subsection{Uncertainty Quantification in EGNN}
From the perspective of subjective logic, the Dirichlet parameters can be interpreted as a measure of support (evidence) for each class, derived from the labeled training data. The evidence for class $c$ at node $i$ is defined by:
$e_c^i = \alpha_c^i - 1 > 0$, as $\alpha_c^i>1.$ 
Specifically, given the adjacency matrix $\mathbf A$ and the feature matrix $\mathbf X,$ the EGNN model can be expressed by %
\begin{align}
   & \text{Evidence:}\ \  [\mathbf{e}^i]_{i \in \mathbb{V}} = f_{\text{EGNN}}({\mathbf{A}}, {\mathbf X}; {\bm{\theta}_{\text{EGNN}}}) \\
   & \text{Dirichlet parameters:}  \ \ \boldsymbol{\alpha}^i = \mathbf{e}^i + \mathbf{1} ,
\end{align}
where $f_{\text{EGNN}}(\cdot)$ denotes the neural network function parameterized by ${\bm{\theta}_{\text{EGNN}}}$. 
The \textit{total evidence} for node $i$, which reflects the overall evidence accumulated across all classes, is defined by,
\begin{equation}
e_{\text{total}}^i = \sum_c e_c^i = \alpha_0^i - C,
\end{equation}
where $\alpha_0^i$ is defined in (\ref{eq:p-alpha}).

We define two types of uncertainty:

$\bullet$ \textit{Aleatoric uncertainty}, which represents the uncertainty in the class label, is calculated based on the expected class probabilities. We measure the aleatoric uncertainty by taking the negative of the highest expected class probability $\bar{\bf p}^i$ given in (\ref{eq:p-alpha}),
\begin{align}\label{eq:alea_u}
    u_{\text{alea}}^i &= -\text{max} \{ \bar{p}_{1}^i, \cdots, \bar{p}_{C}^i\}.
\end{align}
A higher value of $u_{\text{alea}}^i$ indicates a greater likelihood of an incorrect class prediction. 

$\bullet$ \textit{Epistemic uncertainty}, which quantifies uncertainty in the class probabilities, is defined by
\begin{align}\label{eq:epis_u}
     u_{\text{epi}}^i = \frac{C}{e_{\text{total}}^i+C}.
 \end{align}
 Higher epistemic uncertainty suggests that the Dirichlet distribution is more dispersed, indicating that the model lacks confidence in its predictions. From the subjective logic perspective, epistemic uncertainty corresponds to \textit{vacuity}, meaning a lack of supporting evidence for the prediction.

\subsection{Optimization of EGNN}
An EGNN is typically trained by minimizing the uncertainty cross-entropy (UCE) loss function, defined by,
\begin{equation} \label{eq: enn_uce}
    \begin{aligned}
        & \ \ell^i_{\text{UCE}}(\bm{e}^i, {\bf y}^i; \bm{\theta}_{\text{EGNN}}) \\
  =&\ \mathbb{E}_{\mathbf{p}^i \sim \text{Dir}(\mathbf{p}^i | \boldsymbol{\alpha}^i)} \left[ - \log \mathbb{P}(\mathbf{y}^i | \mathbf{p}^i) \right] \\
  =& \ \psi\left(e_{\text{total}}^i + C\right) - \sum_c y^i_c\psi\left(e_{c}^i + 1\right),
    \end{aligned}
\end{equation}
where $\psi(\cdot)$ is the digamma function and $\mathbb{P}(\mathbf{y}^i | \mathbf{p}^i )$ is the sampled probabilities of node $i$ belonging to ground truth class $\mathbf{y}^i$. 
The UCE loss can be interpreted as the expected cross-entropy loss over all possible class probability distributions $\mathbf{p}^i$, assuming they follow a Dirichlet distribution. 

In this study, we focus on the UCE loss and defer the exploration of alternative loss functions, such as the expected mean squared error loss and the expected log loss in \cite{sensoy2018evidential}, to future work.

\section{Methodology}

\subsection{Evidential Probe Network}

EDL has demonstrated strong capabilities in capturing uncertainties by modeling second-order distributions~\citep{sensoy2018evidential,ulmer2021prior}. However, their implementation typically requires custom network architectures and training from scratch, making them incompatible with widely available, powerful pre-trained models. Additionally, training specialized networks may be infeasible due to limited access to training data or computational resources.

To address these challenges, we propose an Evidential Probe Network (EPN) that quantifies uncertainty by attaching a lightweight network to a pre-trained GNN. Only this small additional module needs to be trained, without modifying or retraining the original classification model. This approach allows efficient uncertainty estimation while preserving the benefits of existing pre-trained GNNs.

Suppose a pre-trained 
GNN is available and denoted by: 
\begin{equation}\label{eq:pGNN}
    [\tilde{\mathbf{p}}^i]_{i \in \mathbb{V}} = f_{\text{GNN}}(\mathbf{A}, \mathbf{X}; \boldsymbol{\theta}_{\text{GNN}}),
\end{equation}
where $\tilde{\mathbf{p}}^i$ represents the predicted class probability vector for node $i$, based on a GNN function $f_{\text{GNN}}$ parameterized by $\boldsymbol{\theta}_{\text{GNN}}$. 

We decompose GNN into a \textit{representation-learning network} (RLN) and a \textit{classification head}. The RLN, typically consisting of graph convolutional and Multi-Layer-Perceptron (MLP) layers, generates node-level hidden representations $[\mathbf{z}^i]_{i \in \mathbb{V}}$ that encode both node features and graph structural information. 
The classification head, composed of the final MLP layers, processes these embeddings to produce node-level probability vectors.

To estimate uncertainty, we introduce a lightweight two-layer MLP as a probe network, which takes the hidden representations $[\mathbf{z}^i]_{i \in \mathbb{V}}$ as input and predicts the overall evidence $e_{\text{total}}^i$ for each node $i$ by
\begin{equation}\label{eq:eEPN}
    e_{\text{total}}^i = f_{\text{EPN}}(\mathbf{z}^i; \bm{\theta}_{\text{EPN}}),
\end{equation}
where $f_{\text{EPN}}(\cdot)$ represents the probe network with a non-negative activation function before the final output and  $\bm{\theta}_{\text{EPN}}$ denotes the EPN's learnable parameters. 

To derive the concentration parameters in the evidential framework, we use the predicted class probabilities $\tilde{\mathbf{p}}^i$  from the GNN to approximate the expected class probabilities $\bar{\mathbf{p}}^i$. The Dirichlet parameters are then computed by,
\begin{equation} 
    \boldsymbol{\alpha}^i = (C + e_{\text{total}}^i) \cdot \tilde{\mathbf{p}}^i. \label{eqn:tao-prob-2-ealpha}
\end{equation}
The Dirichlet parameters $\bm{\alpha}^i$ are used to quantify both aleatoric and epistemic uncertainties, defined in (\ref{eq:alea_u}) and (\ref{eq:epis_u}), respectively.

Inspired by evidence propagation in~\cite{stadler2021graph} and energy propagation in~\cite{wu2023energy}, we propose two propagation schemes, namely \textit{vacuity-prop} and \textit{evidence-prop}, to enhance uncertainty estimation by leveraging graph structure. The key idea is that neighboring nodes within a graph should exhibit similar uncertainty levels.
Specifically, the \textit{vacuity-prop} scheme propagates epistemic uncertainty (vacuity) across nodes, whereas the \textit{evidence-prop} scheme propagates Dirichlet parameters.  Additionally, we introduce a hybrid approach that combines both class-wise evidence and vacuity propagation to further improve uncertainty estimation. Detailed formulations of these methods can be found in Appendix ~\ref{sec: baselines}.

These propagation techniques are applied to both EGNN and EPN models, effectively enhancing uncertainty quantification while preserving the underlying graph structural information.

\subsection{Optimization of EPN}

Following EGNN, the probe network is trained by minimizing the UCE loss. Specifically, we define the UCE loss used in EPN, denoted by EPN-UCE, 
\begin{equation}\label{eq:uce_epn}
\begin{aligned} 
    &\ \ell^i_{\text{EPN, UCE}}(e_{\text{total}}^i, \mathbf{y}^i, \tilde{\mathbf{p}}^i; \boldsymbol{\theta}_{\text{EPN}}) \\
   =&\ \mathbb{E}_{\mathbf{p}^i \sim \text{Dir}(\mathbf{p}^i | \boldsymbol{\alpha}^i)} \left[ - \log \mathbb{P}(\mathbf{y}^i | \mathbf{p}^i) \right] \\
  =&\ \psi\left(e_{\text{total}}^i + C\right) - \sum\nolimits_c y^i_c\psi\left(\left(e_{\text{total}}^i +C \right) \cdot \tilde{p}_c^i\right), 
\end{aligned}
\end{equation}
where \(\tilde{\mathbf{p}}^i\) is the predictedclass probability vector from the pre-trained classification model $f_{\text{GNN}}$ in (\ref{eq:pGNN}), \( e_{\text{total}}^i \) is the total evidence predicted by the probe network \(f_{\text{EPN}}(\mathbf{z}^i; \bm{\theta}_{\text{EPN}})\) in (\ref{eq:eEPN}), and \(\mathbf{y}^i\) represents the ground-truth one-hot class label.
Minimizing the UCE loss (\ref{eq:uce_epn}) optimizes the alignment between the sampled class probabilities from the predicted Dirichlet distribution and the true class labels in the training set.

Although EPN is computationally efficient, the EPN-UCE loss alone is ineffective for uncertainty quantification,  as the model tends to assign high evidence to both ID and OOD nodes.
An ideal model should express greater uncertainty for OOD nodes by assigning lower evidence. However, since the training data consists solely of ID nodes, the UCE loss does not explicitly enforce this behavior, often resulting in overconfident predictions on unfamiliar inputs.
To address this drawback, we introduce two regularizations that guide the model toward more accurate uncertainty estimation, enhancing its ability to differentiate between ID and OOD samples.

\textbf{Intra-Class Evidence-Based (ICE)} regularization encourages the model to preserve class-label information in the latent space by clustering same-class samples and pushing apart different-class samples. It leverages class labels explicitly to capture intra-class structure and enhance uncertainty estimation.
Specifically, we design the final hidden layer of EPN to produce a hidden representation  $\mathbf{q}^i \in \mathbb{R}^C$, which serves as a proxy for class-level evidence,
facilitating effective knowledge distillation. 
The ICE regularization is defined by,
 \begin{equation}\label{eq:ICE}
     \begin{aligned}
    \ell^i_{\text{ICE}}(e_{\text{total}}^i, \mathbf{y}^i;& \boldsymbol{\theta}_{\text{EPN}}) \\
    = &\ \left\lVert (C + e_{\text{total}}^i) \cdot \tilde{\mathbf{p}}^i - \mathbf{q}^i \right\rVert_2^2.
\end{aligned}
 \end{equation}
Note that, although the ICE term appears to link predicted evidence to the hidden representation, it does not create a circular dependency. This is because the predicted total evidence ($e_{\text{total}}^i$) is derived independently from the probe network's final layer, while the class-wise probability vector ($\tilde{\mathbf{p}}^i$) is fixed and obtained from the pre-trained GNN model. Thus, the ICE regularization, as a distillation mechanism, guides the hidden layer to reflect meaningful class-level information without causing a chicken-and-egg dilemma.

\textbf{Positive-Confidence Learning (PCL)} regularization, inspired by~\cite{ishida2018binary}, provides a weak supervision in scenarios where only a single class label (ID) is present and explicit labels for the alternative class (OOD) are unavailable in the training set.
Specifically, we interpret the confidence scores obtained from the pre-trained model as the probability of a node being an ID sample. Using these confidence scores, we introduce a regularization term based on the hinge loss:
\begin{equation}
    \begin{aligned}
            & \ell^i_{\text{PCL}}(e_{\text{total}}^i, \mathbf{y}^i, \tilde{\mathbf{p}}^i; \boldsymbol{\theta}_{\text{EPN}}) \\
        = & \left( \max(0, e_{\text{total}}^{\text{id}} - e_{\text{total}}^i) \right)^2  \\
         & \qquad + \frac{1 - r^i}{r^i} \left( \max(0, e_{\text{total}}^i - e_{\text{total}}^{\text{ood}}) \right)^2,
\end{aligned} 
\end{equation}
where $r^i = \max\{\tilde{p}_1, \dots, \tilde{p}_C\}$ is the confidence score from the pre-trained model, $e_{\text{total}}^{\text{in}} > e_{\text{total}}^{\text{ood}}$ are two predefined margin parameters. Note that $e_{\text{total}}^i$ represents the predicted total evidence for node $i$, while $e_{\text{total}}^{\text{id}}$ is a fixed evidence level.

The PCL term pushes the total evidence values within $[e_{\text{total}}^{\text{ood}}, e_{\text{total}}^{\text{in}}]$ to be higher for confident ID samples and lower for less confident ID samples, which is what we expect. Specifically, nodes with higher confidence scores are encouraged to produce evidence closer to the upper bound, while those with lower confidence scores are expected to yield evidence closer to the lower bound. PCL does not penalize the model when the predicted evidence exceeds the upper threshold ($e_{\text{total}}^i > e_{\text{total}}^{\text{id}}$) for confident ID nodes. Otherwise, the model incurs a penalty proportional to the deviation from the lower threshold. 
In practice, we set $e_{\text{total}}^{\text{ood}} = 0$ and $e_{\text{total}}^{\text{id}}=100$. 

\textbf{Regularized Learning Objective.} To sum up, we propose the following objective function for training a  network,
\begin{equation}
\begin{aligned}
    \mathcal{L}(\bm{\theta}_{\text{EPN}}) &= \frac{1}{\|\mathbb{L}\|}\sum_{i \in \mathbb{L}} \ell^i_{\text{EPN,UCE}}(e_{\text{total}}^i, \mathbf{y}^i, \tilde{\mathbf{p}}^i; \boldsymbol{\theta}_{\text{EPN}}) 
    \\
    &\qquad + \frac{1}{\|\mathbb{V}\|}\sum_{i \in \mathbb{V}} \Big( \lambda_1\ell^i_{\text{ICE}}(e_{\text{total}}^i, \mathbf{y}^i, \tilde{\mathbf{p}}^i; \boldsymbol{\theta}_{\text{EPN}})   
    \\
    &\qquad +\lambda_2\ell^i_{\text{PCL}}(e_{\text{total}}^i, \mathbf{y}^i, \tilde{\mathbf{p}}^i; \boldsymbol{\theta}_{\text{EPN}}) \Big),
\end{aligned}
\end{equation}
where $\lambda_1$ and $\lambda_2$ are positive hyperparamters.  The objective function combines the UCE loss as the primary optimization target, the ICE regularization to preserve intra-class distinctions in the latent space, and the PCL regularization to provide weak supervision that guides evidence prediction based on model confidence. 

\section{Theoretical Analysis}
To establish the theoretical properties of our proposed EPN, we follow a simplified setting considered in \cite{yu2023uncertainty}. Specifically, we analyze a binary classification task designed for distinguishing between ID and OOD instances to investigate the behavior of EPNs in terms of epistemic uncertainty quantification.

\textbf{Problem Setup.} In this section, we focus on a binary node-level classification task ($C=2$), noting that generalization to multiple classes $(C>2)$ can be analyzed similarly, as shown by~\cite{collins2023provable}. For simplicity, we omit the node index $i$ when the context is clear. Let ${\bf z}$ denote a hidden representation vector produced by the RLN for a node in the graph. We use labels $-1$ and $+1$ to represent the two ID classes and label $0$ to represent the OOD class. 
We assume that the conditional distributions for the two ID classes are Gaussian with symmetric means $\pm \bm\mu$ and identical covariance \( \bm{\Sigma} \), i.e.,
${\bf z} \mid Y = -1 \sim N(-\bm{\mu}, \bm{\Sigma})$, ${\bf z} \mid Y = +1 \sim N(\bm{\mu}, \bm{\Sigma})$. The conditional distribution for the OOD class is also Gaussian but centered at the origin: ${\bf z} \mid Y = 0 \sim N(\bm{0}, \bm{\Sigma})$. 
We further assume a uniform class prior: $Y\sim \mathtt{Cat}(\frac{1}{3}, \frac{1}{3}, \frac{1}{3})$. 

Under these distribution assumptions on the hidden variable ${\bf z}$, we investigate an asymptotic behavior of EGNN and EPN networks as the magnitude of the mean vector \( \|{\bm \mu}\|_2 \rightarrow \infty \), corresponding to sufficiently separable class distributions. Specifically, we consider that both networks receive the same hidden representation vector \({\bf z}\) as input. The EGNN outputs a class-level evidence vector \({\bf e}\), while the EPN outputs the overall evidence value \(e_{\text{total}}\). For the pre-trained classification model used in our EPN, we assume an MLP classification head that also takes \({\bf z}\) as input and outputs the class probability vector \(\tilde{{\bf p}} \).

\textbf{EGNN Optimized with the Upper Bound of UCE Loss.} 
We demonstrate that an optimally trained EGNN can effectively capture epistemic uncertainty under reasonable assumptions. Specifically, %
we consider an EGNN architecture defined as a single-layer MLP:
\begin{align} \label{eq:evidence_one_neuron}
f_{\text{EGNN}}(\mathbf{z}; \bm{\theta}_{\text{EGNN}}) 
= \begin{bmatrix} \exp(-\bar{\mathbf{w}}^\top \cdot \mathbf{z} - \bar{b}) \\ \exp(\bar{\mathbf{w}}^\top \cdot \mathbf{z} + \bar{b}) \end{bmatrix},
\end{align}
where $\bm{\theta}_{\text{EGNN}} = \{\bar{\mathbf{w}}, \bar{b}\}$ refers to the EGNN parameters.

Under the Gaussian data assumption and the EGNN architecture (\ref{eq:evidence_one_neuron}), we utilize an upper bound of the UCE loss given by~\cite{yu2023uncertainty}:
\begin{equation}\label{eq:upper}
    \overline{\ell_{\text{EGNN,UCE}}^i}
    \;=\;
    \frac{2}{\,\sum_c y_c^i e^i_c\,},
\end{equation}
where $e^i_c$ is the predicted evidence of the node $i$ belonging to class $c$ from the EPN network $f_{\text{EPN}}$.
Note that this upper bound (\ref{eq:upper}) leads to an analytical solution of model parameters, which facilitate the proof of Theorem \ref{thm:1}.

\begin{theorem}\label{thm:1}
For any \(\epsilon > 0\), there exists a positive constant \(\digamma > 0\) such that, for any data distribution satisfying Gaussian data assumption with \(\|\bm{\mu}\|_2 > \digamma\), the probability that the epistemic uncertainty obtained by an optimal single-layer EGNN based on an upper bound of UCE loss, correctly distinguishes ID and OOD samples is greater than \(1 - \epsilon\). 
\end{theorem}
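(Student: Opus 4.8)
The plan is to exploit the analytical tractability of the upper bound (\ref{eq:upper}) to obtain a closed form for the optimal EGNN parameters, then propagate that solution through the epistemic uncertainty formula (\ref{eq:epis_u}) and show that the resulting uncertainty values for ID and OOD nodes become perfectly separable with high probability as $\|\bm\mu\|_2\to\infty$. First I would write out the population risk $\E[\overline{\ell_{\text{EGNN,UCE}}}]$ over the three-component Gaussian mixture. On an ID node with $Y=+1$, the relevant evidence is $e_{+1}=\exp(\bar{\mathbf w}^\top\mathbf z+\bar b)$, and since $\mathbf z\sim N(\bm\mu,\bm\Sigma)$ the quantity $\bar{\mathbf w}^\top\mathbf z+\bar b$ is Gaussian; the expectation of $2/e_{+1}=2\exp(-\bar{\mathbf w}^\top\mathbf z-\bar b)$ is therefore a Gaussian moment-generating-function expression, namely $2\exp(-\bar{\mathbf w}^\top\bm\mu-\bar b+\tfrac12\bar{\mathbf w}^\top\bm\Sigma\bar{\mathbf w})$, and symmetrically for $Y=-1$. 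Minimizing the sum of these two terms over $(\bar{\mathbf w},\bar b)$ gives $\bar b^\star=0$ by symmetry and a first-order condition for $\bar{\mathbf w}^\star$ of the form $\bm\Sigma\bar{\mathbf w}^\star=\bm\mu$, i.e. $\bar{\mathbf w}^\star=\bm\Sigma^{-1}\bm\mu$ (this is the Fisher/LDA direction, matching the intuition in \cite{yu2023uncertainty}).

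Next I would compute the total evidence $e_{\text{total}}=e_{-1}+e_{+1}=2\cosh(\bar{\mathbf w}^{\star\top}\mathbf z)$ at the optimum and plug into $u_{\text{epi}}=C/(e_{\text{total}}+C)=2/(e_{\text{total}}+2)$. The key random variable is $s:=\bar{\mathbf w}^{\star\top}\mathbf z=(\bm\Sigma^{-1}\bm\mu)^\top\mathbf z$, which is Gaussian with variance $\bm\mu^\top\bm\Sigma^{-1}\bm\mu=:m^2$ under every mixture component, and with mean $+m^2$, $-m^2$, $0$ on the $Y=+1,-1,0$ components respectively. For an ID node, $|s|$ concentrates around $m^2$, so $e_{\text{total}}\approx 2\cosh(m^2)$ is exponentially large, giving $u_{\text{epi}}^{\text{ID}}\approx 2e^{-m^2}\to 0$. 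For an OOD node, $s\sim N(0,m^2)$, so $|s|$ is $O(m)$ with high probability, hence $e_{\text{total}}\le 2\cosh(O(m))$ stays much smaller than $2\cosh(m^2)$, keeping $u_{\text{epi}}^{\text{OOD}}$ bounded away from $0$. The separation statement then follows: pick any threshold $\tau$ between the two regimes and use Gaussian tail bounds to show that, once $m^2$ (equivalently $\|\bm\mu\|_2$, since $\bm\Sigma$ is fixed) exceeds some $\digamma$ depending on $\epsilon$, the event $\{u_{\text{epi}}^{\text{ID}}<\tau<u_{\text{epi}}^{\text{OOD}}\}$ has probability at least $1-\epsilon$.

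More concretely, "correctly distinguishes ID and OOD samples" I would formalize as: there exists a threshold $\tau$ such that $\Pr(u_{\text{epi}}<\tau\mid Y\in\{-1,+1\})$ and $\Pr(u_{\text{epi}}\ge\tau\mid Y=0)$ are each $\ge 1-\epsilon$. Bounding $\Pr(u_{\text{epi}}^{\text{ID}}\ge\tau)$ reduces to bounding $\Pr(|s|\le t_\tau)$ for $s\sim N(\pm m^2,m^2)$, a one-sided Gaussian tail that is $\le\exp(-c\,m^2)$ for large $m$; bounding $\Pr(u_{\text{epi}}^{\text{OOD}}<\tau)$ reduces to $\Pr(|s|> t_\tau)$ for $s\sim N(0,m^2)$, which by choosing $\tau$ so that $t_\tau$ grows like, say, $m^{3/2}$ is also $\le\exp(-c\,m)$. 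Balancing these two requirements pins down the admissible range of $\tau$ and hence the needed lower bound $\digamma$ on $\|\bm\mu\|_2$.

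The main obstacle I anticipate is handling the exact minimizer of the upper-bound risk carefully rather than hand-waving it: verifying that the first-order condition $\bm\Sigma\bar{\mathbf w}^\star=\bm\mu$ indeed gives the global minimum (the objective is a sum of two log-convex exponential terms in $(\bar{\mathbf w},\bar b)$, so convexity should close this cleanly, but it needs to be stated), and making sure the asymptotic regime is driven by $m^2=\bm\mu^\top\bm\Sigma^{-1}\bm\mu\to\infty$, which is equivalent to $\|\bm\mu\|_2\to\infty$ only because $\bm\Sigma$ is fixed and positive definite — so the constant $\digamma$ legitimately depends on $\bm\Sigma$ as well as $\epsilon$. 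A secondary technical point is choosing the threshold $\tau$ (equivalently the cutoff $t_\tau$ on $|s|$) in the right intermediate scale so that both tail probabilities vanish simultaneously; the $\cosh$ nonlinearity makes the ID side decay doubly fast, which gives ample room, but the bookkeeping should be shown.
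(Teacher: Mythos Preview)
Your proposal is correct and follows essentially the same route as the paper: derive the optimal parameters $\bar{\mathbf w}^\star=\bm\Sigma^{-1}\bm\mu$, $\bar b^\star=0$ from the upper-bound risk, reduce the epistemic uncertainty to a monotone function of $|s|$ with $s=\bm\mu^\top\bm\Sigma^{-1}\mathbf z$ univariate Gaussian, and then separate ID from OOD by concentration of $s$ around its class-conditional means $\pm m^2$ versus $0$. The only notable differences are cosmetic: the paper formalizes ``correctly distinguishes'' as the pairwise event $\Pr\bigl(u_{\text{epi}}(\mathbf z_0)>u_{\text{epi}}(\mathbf z_{\text{ID}})\bigr)>1-\epsilon$ rather than your threshold-based version, and it uses Chebyshev's inequality with the concrete choice $t=\sigma/2$ (yielding the explicit constant $\digamma=\sqrt{8\lambda_{\max}(\bm\Sigma)/\epsilon}$) instead of Gaussian tail bounds with an intermediate-scale cutoff.
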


The intuition behind Theorem~\ref{thm:1} is that, given a sufficient separation between the class means (\( \|\bm{\mu}\|_2 \to \infty \)), the optimal EGNN model %
achieves near-perfect detection of OOD samples, thus providing a reliable measure of epistemic uncertainty. 

\textbf{EPN Optimized with the EPN-UCE Loss.} 
The proposed EPN uses a MLP to predict a total evidence value for uncertainty quantification. In our theoretical analysis, we consider a two-layer MLP as the architecture of the EPN, defined as follows:
\begin{equation}
\begin{aligned}
  & e_{\text{total}} = f_{\text{EPN}}(\mathbf{z}; \boldsymbol{\theta}_{\text{EPN}}) \\ &=
\text{ReLU} \Bigl(\mathbf{w}^{[2]\top}\,\exp
\bigl(\mathbf{W}^{[1]} \,\mathbf{z}
+ \mathbf{b}^{[1]}\bigr)
+ b^{[2]}\Bigr),
\end{aligned}
\end{equation}
where $\mathbf{W}^{[1]} \in \mathbb{R}^{C\times d}$, $\mathbf{w}^{[2]} \in \mathbb{R}^C$, $\mathbf{b}^{[1]} \in \mathbb{R}^C$, $b^{[2]}\in \mathbb{R},$ and $\boldsymbol{\theta}_{\text{EPN}} = \{\mathbf{W}^{[1]},\mathbf{w}^{[2]}, \mathbf{b}^{[1]}, b^{[2]} \}$. 
We reveal a scenario where optimizing EPN solely with the EPN-UCE loss does not necessarily yield reliable epistemic uncertainty estimates, which may impair OOD detection. In particular, we consider the following parameter configuration $ \tilde{\boldsymbol{\theta}}_n=\{\tilde{\bf W}^{[1]}, \tilde{\bf w}^{[2]}, \tilde{\bf b}^{[1]}, \tilde{b}^{[2]}\}:$
\begin{equation}\label{eq: theta_tilde}
\begin{aligned}
    &\tilde{\bf W}^{[1]} = {\bf 0},  \tilde{\bf w}^{[2]} = {\bf 1}, 
    \tilde{\bf b}^{[1]} = n \cdot \begin{bmatrix}
1 \\
-1 
\end{bmatrix} , \ \tilde{b}^{[2]} = 0, 
\end{aligned}
\end{equation}
where $n$ is a scalar. Note that the only varying component in $\tilde{\boldsymbol{\theta}}_n$ is $\tilde{\bf b}^{[1]}$, which scales with $n$. 
\begin{theorem} \label{thm:EPN-UCE}
Given 
a two-layer EPN network with parameters $\tilde{\boldsymbol{\theta}}_n$ defined in (\ref{eq: theta_tilde}), the corresponding EPN-UCE loss, $\ell_{\text{EPN,UCE}}(\mathbf{z}, \mathbf{y}; \tilde{\boldsymbol{\theta}}_n)$, attains its infimum asymptotically at infinity, i.e., as $n\rightarrow \infty$. Furthermore, this parameterized EPN has the property:
\begin{align}
  & \mathbb{P}\Bigl(u_{\text{epi}}\bigl(f_{\text{EPN}}(\mathbf{z}_{y=0}; \tilde{\boldsymbol{\theta}}_n)\bigr) > u_{\text{epi}}(f_{\text{EPN}}\bigl(\mathbf{z}_{y \in \{-1, +1\}}; \tilde{\boldsymbol{\theta}}_n\bigr)\Bigr) \notag \\
  & = 0.
\end{align}
\end{theorem}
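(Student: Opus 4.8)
The plan is to verify the two claims of Theorem~\ref{thm:EPN-UCE} in turn: first that the parameter family $\tilde{\boldsymbol{\theta}}_n$ drives the EPN-UCE loss to its infimum as $n\to\infty$, and second that the resulting epistemic-uncertainty ordering between OOD and ID points is degenerate almost surely. First I would compute the functional form of $e_{\text{total}}(\mathbf z)$ under $\tilde{\boldsymbol{\theta}}_n$. Since $\tilde{\mathbf W}^{[1]}=\mathbf 0$, the inner $\exp(\mathbf W^{[1]}\mathbf z+\mathbf b^{[1]})$ collapses to the constant vector $(\exp(n),\exp(-n))^\top$, independent of $\mathbf z$; then $\tilde{\mathbf w}^{[2]\top}(\cdot)+\tilde b^{[2]} = \exp(n)+\exp(-n)$, which is positive, so the ReLU is inactive and $e_{\text{total}} = 2\cosh(n)$ for every node. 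Thus the predicted total evidence is a deterministic function of $n$ alone, diverging to $+\infty$ as $n\to\infty$.

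Next I would plug $e_{\text{total}}=2\cosh(n)$ into the EPN-UCE loss (\ref{eq:uce_epn}). With $C=2$ and the ground-truth one-hot label $\mathbf y$, the loss at a labeled node is $\psi(e_{\text{total}}+2) - \psi\bigl((e_{\text{total}}+2)\tilde p_{c^\star}\bigr)$, where $c^\star$ is the true class and $\tilde p_{c^\star}$ the pre-trained model's probability on that class. Using the asymptotic expansion $\psi(x)=\log x - \tfrac{1}{2x}+O(x^{-2})$, the difference behaves like $-\log \tilde p_{c^\star} + O(1/e_{\text{total}})$, so as $e_{\text{total}}\to\infty$ the per-node loss decreases monotonically toward the constant $-\log\tilde p_{c^\star}$. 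To claim this constant is the \emph{infimum} of the EPN-UCE loss over all parameter choices, I would argue (as in \cite{yu2023uncertainty}) that $\psi(s+2)-\psi((s+2)\tilde p_{c^\star})$ is, for fixed $\tilde{\mathbf p}$, a strictly decreasing function of the scalar $s=e_{\text{total}}\ge 0$ with limiting value $-\log\tilde p_{c^\star}$, hence no finite evidence attains a smaller loss and $\tilde{\boldsymbol{\theta}}_n$ is an infimizing sequence. Averaging over $i\in\mathbb L$ preserves this. This monotonicity fact is the analytic crux and the step I expect to require the most care — establishing it cleanly (e.g.\ via the integral representation $\psi'(x)=\sum_{k\ge0}(x+k)^{-2}$ to sign the derivative in $s$) is the main obstacle, everything else being bookkeeping.

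Finally, for the probabilistic statement I would observe that $e_{\text{total}}=2\cosh(n)$ is the \emph{same} value for an OOD input ($\mathbf z_{y=0}$) and for an ID input ($\mathbf z_{y\in\{-1,+1\}}$), because $e_{\text{total}}$ under $\tilde{\boldsymbol{\theta}}_n$ does not depend on $\mathbf z$ at all. Consequently $u_{\text{epi}} = C/(e_{\text{total}}+C)$ from (\ref{eq:epis_u}) takes an identical deterministic value on both, so the event $\{u_{\text{epi}}(\mathbf z_{y=0}) > u_{\text{epi}}(\mathbf z_{y\in\{-1,+1\}})\}$ is empty, and its probability is exactly $0$, as claimed. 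I would note explicitly that this holds for every finite $n$ and in the limit, which is precisely the pathology the theorem highlights: the UCE-optimal direction is attained by $\mathbf z$-independent evidence, collapsing any ID/OOD separation in epistemic uncertainty — motivating the PCL and ICE regularizers introduced earlier.
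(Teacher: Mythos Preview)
Your proposal is correct and follows essentially the same route as the paper: compute $e_{\text{total}}=2\cosh(n)$ independent of $\mathbf z$, show the per-node EPN-UCE loss $\psi(s+2)-\psi((s+2)\tilde p_{c^\star})$ is strictly decreasing in $s$ with limit $-\log\tilde p_{c^\star}$ (the paper packages this as a separate lemma, proving monotonicity via the integral representation of the trigamma rather than your proposed series representation), and then note that constant evidence collapses the epistemic-uncertainty ordering. Your argument for the probability-zero claim is in fact slightly crisper than the paper's, which phrases it in terms of the limit $u_{\text{epi}}\to 0$ rather than your direct observation that the event is empty for every $n$.
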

Theorem~\ref{thm:EPN-UCE} indicates that there exists a set of parameters that allows the two-layer EPN to attain the infimum of the expected EPN-UCE loss, but such solutions may fail to produce the correct epistemic uncertainty ordering. 
 
\textbf{Regularized EPN with ICE.} The proposed EPN is designed to predict total evidence without explicitly accounting for ID classification. To incorporate class-specific structure, we introduce the ICE regularization, which 
aligns a latent space representation of EPN ($\mathbf{q}^i$) with the class probability vector ($\mathbf{p}^i$)  from a pre-trained classification model up to a scaling factor of $C + e_{\text{total}}^i$, thereby enforcing intra-class consistency.

When the Gaussian distributions of the ID and OOD classes become sufficiently separable, a simple two-layer MLP as the classification network is sufficient for classification. %
Under this scenario, we define a pre-trained classification head that outputs the optimal class probability vector $\tilde{\mathbf{p}}$, which has an analytical formula provided in Appendix~\ref{sec: them_3}.

To enable a theoretical analysis similar to that of the EGNN (Theorem~\ref{thm:1}), we construct a specific EPN architecture with its weight and bias matrices given by:
\begin{equation}\label{eq: epn_param}
\begin{aligned}
& \mathbf{W}^{[1]} = [\mathbf{w}_{\text{P}}, -\mathbf{w}_{\text{P}}]^\top, 
\mathbf{b}^{[1]} = [b_{\text{P}}, -b_{\text{P}}]^\top, \\
& \mathbf{w}^{[2]} = {\bf 1},  b^{[2]} = 0,
\end{aligned}
\end{equation}
where \( \mathbf{w}_{\text{P}} \in \mathbb{R}^d, b_{\text{P}}\in \mathbb{R} \).  
Using this  EPN and the optimal class probabilities $[\tilde{\mathbf{p}}^i]_{i \in \mathbb{V}}$, we prove in Theorem \ref{thm:EPN-ICE} that an EPN trained solely on ICE can correctly distinguish OOD from ID samples.
\begin{theorem}\label{thm:EPN-ICE} 
Given a well-trained classification model producing the class probabilities 
 $[\tilde{\mathbf{p}}^i]_{i \in \mathbb{V}}$ and an EPN constructed by (\ref{eq: epn_param}), we have that for any \(\epsilon > 0\), there exists a positive constant \(\digamma > 0\) such that, for any data distribution with \(\|\bm{\mu}\|_2 > \digamma \), the probability that the epistemic uncertainty obtained by an optimal two-layer EPN solely based on the  ICE loss, correctly distinguishes ID and OOD samples is greater than \(1 - \epsilon\).
\end{theorem}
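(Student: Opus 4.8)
\textbf{Proof plan for Theorem~\ref{thm:EPN-ICE}.}
The plan is to put both the optimal classifier head and the constrained EPN into closed form, identify the unique minimizer of the expected ICE loss within the EPN family~(\ref{eq: epn_param}), and then show by Gaussian concentration that the induced epistemic uncertainty separates ID from OOD with probability at least $1-\epsilon$. First I would record the closed forms. By Bayes-optimality of the pre-trained head under the two-Gaussian ID model (the analytical formula of Appendix~\ref{sec: them_3}), the predicted probability of the ID class with mean $+\bm{\mu}$ is $\tilde p_{+1}(\mathbf{z})=\sigma(2\,\bm{\mu}^\top\bm{\Sigma}^{-1}\mathbf{z})$ with $\sigma$ the logistic sigmoid, and $\tilde p_{-1}=1-\tilde p_{+1}$. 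Substituting the EPN weights~(\ref{eq: epn_param}) and writing $t=t(\mathbf{z}):=\mathbf{w}_{\text{P}}^\top\mathbf{z}+b_{\text{P}}$, the final hidden vector is $\mathbf{q}=(e^{t},e^{-t})^\top$, so $e_{\text{total}}=\mathrm{ReLU}(\mathbf{1}^\top\mathbf{q})=e^{t}+e^{-t}=2\cosh t$ (the ReLU is inactive since $\mathbf{q}>\mathbf{0}$), and $u_{\text{epi}}=2/(e_{\text{total}}+2)$ is a strictly decreasing function of $|t|$.

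Next I would minimize the expected ICE loss over $(\mathbf{w}_{\text{P}},b_{\text{P}})$. The key observation is that the residual $\mathbf{r}:=(C+e_{\text{total}})\tilde{\mathbf{p}}-\mathbf{q}$ has a fixed coordinate sum: since $\mathbf{1}^\top\tilde{\mathbf{p}}=1$ and $\mathbf{1}^\top\mathbf{q}=e_{\text{total}}$, we get $\mathbf{1}^\top\mathbf{r}=(C+e_{\text{total}})-e_{\text{total}}=C=2$. Hence $\ell^i_{\text{ICE}}=\|\mathbf{r}\|_2^2\ge(\mathbf{1}^\top\mathbf{r})^2/2=2$ pointwise by Cauchy--Schwarz, with equality iff $r_1=r_2=1$; solving $(2+e^{t}+e^{-t})\tilde p_{+1}=e^{t}+1$ together with $e^{t}e^{-t}=1$ gives exactly $\tilde p_{+1}(\mathbf{z})=\sigma(t(\mathbf{z}))$. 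So the expected ICE loss is minimized (with value $C=2$) iff $t(\mathbf{z})=2\,\bm{\mu}^\top\bm{\Sigma}^{-1}\mathbf{z}$ for almost every $\mathbf{z}$ drawn from the ID training distribution; since that distribution has full support in $\mathbb{R}^{d}$, $t$ is affine, and $\sigma$ is injective, this pins down the minimizer up to a sign of $\mathbf{w}_{\text{P}}$ and yields $e_{\text{total}}(\mathbf{z})=2\cosh(2\,\bm{\mu}^\top\bm{\Sigma}^{-1}\mathbf{z})$.

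Finally I would run the concentration argument. Let $\kappa:=\bm{\mu}^\top\bm{\Sigma}^{-1}\bm{\mu}>0$, so $t=2\,\bm{\mu}^\top\bm{\Sigma}^{-1}\mathbf{z}$ obeys $t\mid Y{=}0\sim N(0,4\kappa)$ and $t\mid Y{=}{\pm}1\sim N({\pm}2\kappa,\,4\kappa)$; take the threshold $|t|=\kappa$, i.e.\ the epistemic-uncertainty threshold $2/(2\cosh\kappa+2)$. Standard Gaussian tail bounds give $\mathbb{P}(|t|>\kappa\mid Y{=}0)=2\Phi(-\tfrac12\sqrt{\kappa})$ and $\mathbb{P}(|t|<\kappa\mid Y{=}{\pm}1)\le\Phi(-\tfrac12\sqrt{\kappa})$. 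Since $\kappa\ge\|\bm{\mu}\|_2^2/\lambda_{\max}(\bm{\Sigma})\to\infty$ as $\|\bm{\mu}\|_2\to\infty$, there is $\digamma>0$ with $\|\bm{\mu}\|_2>\digamma\Rightarrow 2\Phi(-\tfrac12\sqrt{\kappa})<\epsilon$; as $u_{\text{epi}}$ is strictly decreasing in $|t|$, OOD points then lie above the uncertainty threshold and ID points below it, each with probability at least $1-\epsilon$, which is the asserted guarantee (mirroring Theorem~\ref{thm:1}).

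The crux is the middle step: the ICE term~(\ref{eq:ICE}) looks as though it asks for the unattainable identity $(C+e_{\text{total}})\tilde{\mathbf{p}}=\mathbf{q}$, but its residual always has coordinate sum $C$, turning it into a tight pointwise lower bound whose unique minimizer is exactly the ``log-odds matching'' EPN $t=\sigma^{-1}(\tilde p_{+1})$; after that, the uniqueness argument (full support of the ID Gaussians plus injectivity of $\sigma$) and the Gaussian tail estimate are routine. The one technical care point worth flagging is verifying that the ReLU stays inactive, so that $e_{\text{total}}=2\cosh t$ holds as an equality rather than merely a lower bound.
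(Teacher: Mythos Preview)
Your plan is correct and follows the same three-step skeleton as the paper: (i) record the Bayes-optimal head $\tilde p_{+1}(\mathbf{z})=\sigma(2\bm{\mu}^\top\bm{\Sigma}^{-1}\mathbf{z})$, (ii) minimize the ICE loss over the constrained EPN family to obtain $\mathbf{w}_{\text{P}}^*=2\bm{\Sigma}^{-1}\bm{\mu}$, $b_{\text{P}}^*=0$ and hence $e_{\text{total}}=2\cosh(2\bm{\mu}^\top\bm{\Sigma}^{-1}\mathbf{z})$, and (iii) run a concentration argument on the scalar $\bm{\mu}^\top\bm{\Sigma}^{-1}\mathbf{z}$ (the paper simply points back to the Chebyshev computation of Theorem~\ref{thm:1}; your Gaussian-tail version with threshold $|t|=\kappa$ is equally valid and slightly sharper).

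The one place where you genuinely diverge is step (ii), and your route is cleaner. The paper argues that the ICE residual can be driven to $\mathbf{0}$ and then takes the ratio of the two resulting equations to conclude $m=l$. But under the paper's own definitions ($\mathbf{q}=\exp(\mathbf{W}^{[1]}\mathbf{z}+\mathbf{b}^{[1]})$ with $\mathbf{w}^{[2]}=\mathbf{1}$, $b^{[2]}=0$) one has $\mathbf{1}^\top\mathbf{q}=e_{\text{total}}$, so $\mathbf{1}^\top\mathbf{r}=C$ identically and $\mathbf{r}=\mathbf{0}$ is unattainable; indeed, in the paper's displayed ICE expression the subtracted vector is $[1+\exp(-m),\,1+\exp(m)]^\top$ rather than $\mathbf{q}=[\exp(m),\,\exp(-m)]^\top$, a silent shift by $\mathbf{1}$ (with coordinates swapped) that happens to be harmless for the conclusion. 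Your Cauchy--Schwarz observation $\|\mathbf{r}\|_2^2\ge(\mathbf{1}^\top\mathbf{r})^2/C=C$, with equality iff $r_1=r_2=1$, yields the same characterizing equation $\sigma(t)=\tilde p_{+1}$ and the same minimizer without that slip, and it also makes the uniqueness argument (full ID support, affine $t$, injective $\sigma$) transparent. Your side remarks---that the ReLU is inactive since $\mathbf{q}>\mathbf{0}$, and that the sign ambiguity in $\mathbf{w}_{\text{P}}$ leaves $e_{\text{total}}$ and $u_{\text{epi}}$ unchanged---are correct and worth keeping.
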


Notably, the optimal EPN parameters under ICE coincide with those of EGNN under our assumptions; see Assumption~\ref{assum:data} in Appendix \ref{sect:proofs}. Theorem~\ref{thm:EPN-ICE} thus confirms that EPN trained with ICE can help the learning of epistemic uncertainty under assumptions.
Please refer to Appendix \ref{sect:proofs} for all the proofs in this section.

\begin{table*}[htbp]
    \centering
    \small
    \setlength{\tabcolsep}{1.0pt} 
    \resizebox{0.85\linewidth}{!}{%
    \begin{tabular}{l|cccccccc|c}
    \toprule
    \multirow{2}{*}{\textbf{Model}} & \multirow{2}{*}{\textbf{CoraML}} & \multirow{2}{*}{\textbf{CiteSeer}} & \multirow{2}{*}{\textbf{PubMed}} & \multicolumn{1}{c}{\textbf{OGBN}} & \multicolumn{1}{c}{\textbf{Amazon}} & \multicolumn{1}{c}{\textbf{Amazon}} & \multicolumn{1}{c}{\textbf{Coauthor}} & \multicolumn{1}{c|}{\textbf{Coauthor}} & \multirow{2}{*}{\textbf{Average}} \\
     &  &  &  & \multicolumn{1}{c}{\textbf{Arxiv}} & \textbf{Photos} & \textbf{Computers} & \textbf{CS} & \textbf{Physics} &  \\
    \midrule
    \multicolumn{10}{c}{\textbf{logit based}} \\ 
    \midrule
    VGNN-entropy   & 8.4 & 7.4 & 10.4 & 9.2 & 6.2 & 8.4 & 5.6 & 11.4 & 8.4 \\
    VGNN-max-score & 10.8 & 10.2 & 11.0 & 11.2 & 6.2 & 8.0 & 7.4 & 12.6 & 9.7 \\
    VGNN-energy~\citep{liu2020energy}    & 8.2 & 7.8 & 11.0 & 7.8 & 10.0 & 9.6 & 8.8 & 9.4 & 9.1 \\
    VGNN-gnnsafe~\citep{wu2023energy}   & 4.6 & {\colorbox[rgb]{0.812,0.886,0.953}{4.4}} & 9.4 & 9.0 & 5.0 & 7.4 & 3.4 & 4.4 & 6.0 \\
    VGNN-dropout~\citep{gal2016dropout}   & 10.6 & 8.6 & 11.8 & 8.0 & 9.4 & 8.6 & 8.4 & 9.6 & 9.4 \\
    VGNN-ensemble~\citep{lakshminarayanan2017simple}  & 7.6 & 9.4 & 9.4 & 6.6 & 8.0 & 7.2 & 7.6 & 11.4 & 8.4 \\
    \midrule \multicolumn{10}{c}{\textbf{evidential based}} \\ \midrule
    GPN~\citep{stadler2021graph}            & 9.4 & 11.0 & 5.0 & 4.8 & {\colorbox[rgb]{0.812,0.886,0.953}{2.4}} & {\colorbox[rgb]{0.957,0.8,0.8}{1.4}} & 7.8 & 5.8 & 6.0 \\
    SGNN-GKDE~\citep{zhao2020uncertainty}      & {\colorbox[rgb]{0.812,0.886,0.953}{3.0}} & {\colorbox[rgb]{0.957,0.8,0.8}{1.4}} & 3.8 & n.a.  & 7.2 & 6.2 & 14.0 & {\colorbox[rgb]{0.812,0.886,0.953}{3.8}} & 5.8 \\
    EGNN~\citep{sensoy2018evidential}           & 12.2 & 6.0 & 7.2 & 3.6 & 13.0 & 12.0 & 10.4 & 11.8 & 9.5 \\
    \quad + vacuity-prop & {\colorbox[rgb]{0.957,0.8,0.8}{2.0}} & 6.2 & 4.2 &{\colorbox[rgb]{0.812,0.886,0.953}{2.0}} & 8.8 & 8.6 & {\colorbox[rgb]{0.812,0.886,0.953}{2.8}} & 4.2 & {\colorbox[rgb]{0.812,0.886,0.953}{4.9}} \\
    \quad + evidence-prop & 7.8 & 8.2 & {\colorbox[rgb]{0.812,0.886,0.953}{2.2}} & 9.6 & 11.6 & 8.4 & 9.0 & 4.6 & 7.8 \\
    \quad + vacuity-prop + evidence-prop      & 7.6 & 9.6 & {\colorbox[rgb]{0.957,0.8,0.8}{2.0}} & 6.0 & 9.6 & 8.8 & 6.2 & {\colorbox[rgb]{0.957,0.8,0.8}{2.0}} & 6.7 \\
    \midrule \multicolumn{10}{c}{\textbf{ours}} \\ \midrule
    EPN            & 8.4 & 9.8 & 8.8 & 12.0 & 6.0 & 7.6 & 12.2 & 8.8 & 9.2 \\
   EPN-reg & 4.4 & 5.0 & 8.8 & {\colorbox[rgb]{0.957,0.8,0.8}{1.2}} & {\colorbox[rgb]{0.957,0.8,0.8}{1.6}} & {\colorbox[rgb]{0.812,0.886,0.953}{2.8}} & {\colorbox[rgb]{0.957,0.8,0.8}{1.4}} & 5.2 & {\colorbox[rgb]{0.957,0.8,0.8}{3.8}} \\
    \bottomrule
    \end{tabular}
    }
    \caption{OOD Detection: Average performance ranking on OOD-AUROC  ($\downarrow$) for each model across various LOC settings, using GCN as the backbone. \colorbox[rgb]{0.957,0.8,0.8}{Best} and \colorbox[rgb]{0.812,0.886,0.953}{Runner-up} results are highlighted in red and blue.}
    \label{tab:ood_gcn_rank}
    \vspace{-4mm}
\end{table*}

\section{Experiment}
We evaluate uncertainty quantification for node-level classification tasks on graphs by comparing 14 models across 8 datasets to explore the following key research questions (RQ) on two tasks: OOD detection and misclassification detection.

\textbf{RQ1}: How do the proposed approaches (EPN and EPN-reg) perform compared to baseline models regarding uncertainty quantification and running time? %

\textbf{RQ2}: How do the proposed regularization terms (ICE and PCL) improve EPN's performance?

\textbf{RQ3}: How robust is EPN-reg regarding the GNN backbone, features, and activation function?

\subsection{Setup}

\textbf{Datasets.} We use eight graph datasets, including citation networks (CoraML, CiteSeer, PubMed, CoauthorCS, CoauthorPhysics), two co-purchase datasets (AmazonPhotos, AmazonComputers), and the large-scale OGBN-Arxiv to evaluate scalability.

\textbf{Evaluation.} We evaluate the model from three perspectives: classification, aleatoric uncertainty, and epistemic uncertainty. (1) \textbf{Classification}: We report classification accuracy (ACC), calibration via Brier Score (BS), and Expected Calibration Error (ECE). (2) \textbf{Aleatoric uncertainty}: Misclassification detection is assessed via  Area Under the ROC Curve (AUCROC) and the Area Under the Precision-Recall Curve (AUCPR), with lower uncertainty linked to correct predictions and higher uncertainty indicating misclassification. (3) \textbf{Epistemic uncertainty}: For OOD detection, we report AUCROC and AUCPR, using ``Left-out-classes'' (LOC) for distributional shifts, where we remove certain classes from the training set and include them for testing. 
Since different studies define OOD categories differently, we consider five LOC settings with the public class indexes: selecting the last classes as OOD, following the works of~\cite{zhao2020uncertainty, stadler2021graph}, labeled by OS-1, selecting the first classes as OOD, following~\citet{wu2023energy}, labeled by OS-2, and randomly selecting OOD classes in three additional settings, labeled by OS-3 to OS-5. The number of OOD categories aligns with~\cite{stadler2021graph}. Further details are provided in Appendix~\ref{sec:eva}. 

\textbf{Baselines.} We evaluate the proposed EPN and EPN-reg methods against 12 baseline models from a range of uncertainty-aware techniques for semi-supervised node classification. This includes probability-based models like VGNN-entropy, VGNN-max\_score, VGNN-dropout, VGNN-ensemble, and energy-based methods, including VGNN-energy and VGNN-gnnsafe. For evidential-based methods, we consider SGNN-GKDE, GPN, and three variants of EGNN: EGNN+vacuity-prop, EGNN+evidence-prop, and EGNN+vacuity-prop+evidence-prop. 
The default EPN is optimized using the EPN-UCE loss, whereas EPN-reg incorporates the two proposed regularization terms, ICE and PCL.

We use the recommended hyperparameters from the respective literature for the baselines. For our proposed EPN, we tune the two parameters of $\lambda_1$ and $\lambda_2$ using one OOD detection setting and apply the same hyperparameters across other ones. By default, we use a two-layer GCN as the backbone for all models except GPN. For EPN, GCN is also used as the backbone of a pre-trained model. Additionally, when calculating aleatoric uncertainty, we project the class probabilities and add a small value (e.g., 1) to the Dirichlet strength (\ref{eqn:tao-prob-2-ealpha}) to ensure a stable, non-degenerate Dirichlet distribution during training.

Please refer to Appendix~\ref{sec: baselines} for a more detailed description of baseline methods, including model architectures, loss functions, and hyperparameters.

\subsection{OOD and Misclassification Detection}
Due to space limitations, we present average performance rankings in the main paper, with complete results available in Appendix~\ref{sec: add_exp}.

\textbf{OOD detection.} Table~\ref{tab:ood_gcn_rank} presents the average performance rankings of various methods across different OOD settings, with the last column summarizing overall performance (complete results in Tables~\ref{tab:ood_gcn_1}--\ref{tab:ood_gcn_4}). Lower ranks indicate better performance, which is denoted by ($\downarrow$). Our proposed model, EPN-reg, achieves the best overall performance. Specifically, EPN-reg ranks first on 3 out of 8 datasets, second on 1 dataset, and third on 2 datasets. For the remaining two datasets, our method also ranks among the top-performing models. When comparing EPN with EPN-reg, the additional regularization terms consistently enhance performance across all datasets. It is important to note that baseline model performance varies significantly across datasets. For example,  SGNN-GKDE ranks among the top three performers on four datasets, but it performs the worst on CoauthorCS. Additionally, propagation techniques that incorporate vacuity or class-wise evidence boost EGNN's performance, making it competitive with GPN and SGNN-GKDE, which apply knowledge distillation.

\begin{table}[htbp]
    \centering
    \small
    \setlength{\tabcolsep}{1pt}
    \resizebox{0.7\linewidth}{!}{%
    \begin{tabular}{l|c}
    \toprule
    \textbf{Model} & \textbf{Average Rank} \\
    \midrule
        VGNN-entropy & 9.9  \\ 
        VGNN-max-score & 5.4  \\ 
        VGNN-energy & 13.0  \\ 
        VGNN-gnnsafe & 11.8  \\ 
        VGNN-dropout & 9.3  \\ 
        VGNN-ensemble & 9.0  \\ \midrule
        GPN & 6.4  \\ 
        SGNN-GKDE & 12.6  \\ 
        EGNN & 4.3  \\ 
        \quad + vacuity-prop & 5.0  \\ 
        \quad + evidence-prop & 5.3  \\ 
        \quad +vacuity-prop + evidence-prop & 6.8  \\ \midrule
        EPN & \colorbox[rgb]{0.957,0.8,0.8}{3.0}  \\ 
        EPN-reg & \colorbox[rgb]{0.812,0.886,0.953}{3.5}  \\ 
    \bottomrule
    \end{tabular}}
    \caption{Misclassification detection: Average performance ranking based on MIS-AUROC ($\downarrow$) for each model across various datasets, using GCN as the backbone. \colorbox[rgb]{0.957,0.8,0.8}{Best} and \colorbox[rgb]{0.812,0.886,0.953}{Runner-up} results are highlighted in red and blue.}
    \label{tab:mis_rank}
\end{table}

\textbf{Misclassification detection.} Table~\ref{tab:mis_rank} presents the average misclassification detection performance in terms of AUROC across datasets (complete results in Tables~\ref{tab:mis_1}--\ref{tab:mis_2}). Our model without regularizations achieves the best overall performance. %
A detailed analysis reveals that while all models perform similarly, our models (EPN-reg) exhibit greater stability across different datasets.

\textbf{Running time.} 
Table~\ref{tab:running_citeseer_ogbn} reports the training time, showing that EPN and EPN-reg are the fastest on AmazonComputers and OGBN-Arxiv, with EPN being five times faster than the next best model on OGBN-Arxiv. VGNN-ensemble is significantly slower because it requires training multiple models. 
Note that run times vary by epoch count and early stopping. For example, GPN requires many epochs to converge on OGBN-Arxiv. VGNN-dropout has a slightly shorter training time than VGNN-entropy; its higher inference time is offset by randomness in the training epochs.

\begin{table}[ht]
    \centering
    \small
    \setlength{\tabcolsep}{1pt}
    \resizebox{0.85\linewidth}{!}{%
    \begin{tabular}{l|cc}
    \toprule
    \textbf{Model} & \textbf{AmazonComputers}& \textbf{OGBN\_Arxiv} \\
    \midrule
    VGNN-entropy & 161.8& 1117.9  \\ 
    VGNN-dropout & 153.8& 1134.0  \\ 
    VGNN-ensemble & 1086.5& 9843.7  \\ \midrule
    SGNN-GKDE & 258.1& n.a  \\ 
    GPN & 77.1& 5512.5  \\ 
    EGNN & 117.7& 437.3  \\ 
    \quad + vacuity-prop & 125.0& 385.6  \\ 
    \quad + evidence-prop & 170.1& 576.6  \\ 
    \quad + vacuity-prop + evidence-prop & 144.7& 605.3  \\ \midrule
    EPN & \colorbox[rgb]{0.812,0.886,0.953}{76.1}& \colorbox[rgb]{0.812,0.886,0.953}{80.2}\\ 
    EPN-reg & \colorbox[rgb]{0.957,0.8,0.8}{45.8}& \colorbox[rgb]{0.957,0.8,0.8}{72.7}\\ 
    \bottomrule
    \end{tabular}}
    \caption{Training Time Comparison. The \colorbox[rgb]{0.957,0.8,0.8}{best} and \colorbox[rgb]{0.812,0.886,0.953}{runner-up} results are highlighted in red and blue.} %
    \label{tab:running_citeseer_ogbn}
    \vspace{-1mm}
\end{table}

\subsection{Ablation Study}
\textbf{Regularization term.} 
The results of the ablation study are presented in 
Table~\ref{tab:abl}. We evaluate the performance based on AUROC with the higher the better, denoted by ($\uparrow$). Compared to the basic EPN framework, incorporating ICE improves model performance on nearly all datasets, with the exception of CoauthorPhysics, where the improvement is less than one percent. 
Adding the PCL regularization term to EPN-ICE further enhances performance across all datasets, again except for CoauthorPhysics.
Notably, on OGBN-Arxiv, EPN-ICE increases AUROC by 9.61\% compared to EPN, while EPN-ICE-PCL boosts AUROC by an additional 7.27\% over EPN-ICE. Similarly, for AmazonComputers and CoauthorCS, the regularized EPN models outperform the baseline EPN by more than 10\%.

\begin{table*}[hbtp!]
    \centering
    \normalsize
    \setlength{\tabcolsep}{4pt}
    \resizebox{0.9\linewidth}{!}{%
    \begin{tabular}{l|cccccccc}
    \toprule
    \multirow{2}{*}{\textbf{Model}} & \multirow{2}{*}{\textbf{CoraML}} & \multirow{2}{*}{\textbf{CiteSeer}} & \multirow{2}{*}{\textbf{PubMed}} & \multicolumn{1}{c}{\textbf{OGBN}} & \multicolumn{1}{c}{\textbf{Amazon}} & \multicolumn{1}{c}{\textbf{Amazon}} & \multicolumn{1}{c}{\textbf{Coauthor}} & \multicolumn{1}{c}{\textbf{Coauthor}} \\
     &  &  &  & \multicolumn{1}{c}{\textbf{Arxiv}} & \textbf{Photos} & \textbf{Computers} & \textbf{CS} & \textbf{Physics} \\
    \midrule
    EPN & 88.06 ± 2.62 & 84.02 ± 4.22 & 66.38 ± 0.77 & 67.12 ± 1.30 & 84.68 ± 4.18 & 73.50 ± 3.59 & 82.30 ± 7.91 & \colorbox[rgb]{0.957,0.8,0.8}{94.10 ± 2.68} \\
    \quad + ICE & 89.54 ± 2.06 & 88.23 ± 2.79 & 67.38 ± 3.85 & 76.73 ± 0.68 & 84.48 ± 6.64 & 80.45 ± 4.57 & 92.85 ± 1.86 & 93.59 ± 2.41 \\
    \quad + ICE + PCL (EPN-reg) & \colorbox[rgb]{0.957,0.8,0.8}{89.97 ± 2.48} & \colorbox[rgb]{0.957,0.8,0.8}{89.74 ± 3.86} & \colorbox[rgb]{0.957,0.8,0.8}{68.39 ± 4.41} & \colorbox[rgb]{0.957,0.8,0.8}{83.99 ± 0.33} & \colorbox[rgb]{0.957,0.8,0.8}{86.69 ± 3.94} & \colorbox[rgb]{0.957,0.8,0.8}{83.26 ± 6.06} &\colorbox[rgb]{0.957,0.8,0.8}{95.09 ± 1.37} & 93.31 ± 3.13 \\
    \bottomrule
    \end{tabular}
    }
    \caption{Ablation study on OOD detection: OOD-AUROC ($\uparrow$)  for  EPN on the OS-1  setting. }
    \label{tab:abl}
\end{table*}

\begin{table*}[hbt!]
    \centering
    \normalsize
    \setlength{\tabcolsep}{4pt}
    \resizebox{0.9\linewidth}{!}{%
    \begin{tabular}{l|l|ccccccc}
    \toprule
    \multirow{2}{*}{\textbf{Variant}} &\multirow{2}{*}{\textbf{Model}} & \multirow{2}{*}{\textbf{CoraML}} & \multirow{2}{*}{\textbf{CiteSeer}} & \multirow{2}{*}{\textbf{PubMed}}  & \multicolumn{1}{c}{\textbf{Amazon}} & \multicolumn{1}{c}{\textbf{Amazon}} & \multicolumn{1}{c}{\textbf{Coauthor}} & \multicolumn{1}{c}{\textbf{Coauthor}} \\
     &  & & & & \textbf{Photos} & \textbf{Computers} & \textbf{CS} & \textbf{Physics}   \\
    \midrule
\multirow{2}{*}{\textbf{Backbone}} & GCN & 89.97 ± 2.48 & 88.23 ± 2.79 & 67.38 ± 3.85 & 86.49 ± 5.40 & 83.26 ± 6.06 & \colorbox[rgb]{0.957,0.8,0.8}{95.09 ± 1.37} & 93.59 ± 2.41 \\
 & GAT &\colorbox[rgb]{0.957,0.8,0.8}{91.21 ± 0.76} &\colorbox[rgb]{0.957,0.8,0.8}{90.96 ± 1.99} & \colorbox[rgb]{0.957,0.8,0.8}{68.67 ± 2.56} & \colorbox[rgb]{0.957,0.8,0.8}{93.47 ± 1.37} & \colorbox[rgb]{0.957,0.8,0.8}{90.31 ± 2.64} & 94.21 ± 1.75 & \colorbox[rgb]{0.957,0.8,0.8}{95.57 ± 0.93} \\ 
\hline
\multirow{2}{*}{\textbf{Feature}} & Second-to-Last & 87.27 ± 5.34 & 85.06 ± 2.79 & 65.83 ± 4.62 & \colorbox[rgb]{0.957,0.8,0.8}{84.54 ± 5.55} & \colorbox[rgb]{0.957,0.8,0.8}{84.62 ± 2.60} & 90.34 ± 5.41 & \colorbox[rgb]{0.957,0.8,0.8}{94.97 ± 0.92} \\
 & Last & \colorbox[rgb]{0.957,0.8,0.8}{89.54 ± 2.06} & \colorbox[rgb]{0.957,0.8,0.8}{88.23 ± 2.79} & \colorbox[rgb]{0.957,0.8,0.8}{67.38 ± 3.85} & 84.48 ± 6.64 & 80.45 ± 4.57 & \colorbox[rgb]{0.957,0.8,0.8}{92.85 ± 1.86} & 93.59 ± 2.41 \\ 
\hline
\multirow{2}{*}{\textbf{Activation}} & SoftPlus &\colorbox[rgb]{0.957,0.8,0.8}{90.57 ± 1.38} & \colorbox[rgb]{0.957,0.8,0.8}{88.24 ± 2.24} &\colorbox[rgb]{0.957,0.8,0.8}{67.77 ± 3.91} & \colorbox[rgb]{0.957,0.8,0.8}{88.57 ± 4.94} & 77.21 ± 7.10 & 90.14 ± 3.31 &\colorbox[rgb]{0.957,0.8,0.8}{94.98 ± 2.28} \\
 & Exponential& 89.54 ± 2.06 & 88.23 ± 2.79 & 67.38 ± 3.85 & 84.48 ± 6.64 & \colorbox[rgb]{0.957,0.8,0.8}{80.45 ± 4.57} &\colorbox[rgb]{0.957,0.8,0.8}{92.85 ± 1.86} & 93.59 ± 2.41 \\ 
\bottomrule
\end{tabular}
}
\caption{Discussions on the EPN architecture: OOD-AUROC scores ($\uparrow$) for GCN/GAT as the pre-trained model, using either the last or second-to-last hidden layer from the pre-trained model as the latent representation, and applying exponential or SoftPlus as the activation functions for the EPN's final layer.}
\label{tab:discussion}
\end{table*}

\textbf{Backbone architecture.} To investigate the effect of the GNN backbone, we conduct experiments on Graph attention networks (GATs) by~\cite{velivckovic2017graph}, including the baseline models and our proposed EPN/EPN-reg. Table~\ref{tab:ood_gat_rank} shows the average performance rank, 
while Tables~\ref{tab:ood_gat_1}--\ref{tab:ood_gat_4} present the complete results. The observations are similar when using GCN as the backbone. Specifically, EPN-reg achieves the best overall performance, followed by \quad + vacuity-prop and VGNN-gnnsafe. Regularization terms consistently enhance EPN's performance.

Table~\ref{tab:discussion} compares GCN and GAT as the pre-trained models for EPN, where the first two rows provide different latent representations as input features for EPN. We observe that GAT as the pre-trained model outperforms GCN on 6 out of 7 datasets, with an improvement up to 7\%. This highlights the significant influence of representation quality on EPN's performance.

\textbf{Feature layer.} We extract latent representations from both the last and second-to-last layers of the pre-trained model, as shown in the middle two rows of Table~\ref{tab:discussion} (additional results in Table~\ref{tab:layer}). The performance difference between these two approaches is within 4\%, with features from the last layer outperforming those from the second-to-last on 4 out of 7 datasets. This suggests that the most effective layer for feature extraction varies across datasets.

\textbf{Activation function.} 
We compare SoftPlus and exponential functions as the activation function for the output layer, with the results shown in the last two rows of Table~\ref{tab:discussion} (full results in Table~\ref{tab:act}). The performance difference between the two designs is within 4\%, indicating comparable effectiveness.
However, the exponential function can cause instability, leading to exploding evidence predictions in some runs, such as in the PubMed dataset.

\section{Conclusion}
We propose the Evidential Probe Network (EPN) for node-level uncertainty quantification in Graph Neural Networks (GNNs). EPN offers a flexible, computationally efficient alternative to existing evidential methods by leveraging pre-trained classification models without requiring retraining. Grounded in subjective logic opinion, it enhances interpretability while maintaining accuracy. Additionally, we introduce evidence-based regularization techniques to improve epistemic uncertainty estimation.
We theoretically show that training EPN solely with UCE loss fails to ensure proper epistemic uncertainty ordering, and our proposed ICE regularization aligns the latent representation of EPN with the optimal class probability obtained from a pre-trained model to enforce intra-class consistency, thereby preserving the correct epistemic ordering.
Empirical results demonstrate that EPN remains competitive with state-of-the-art methods while significantly reducing computational costs. Moving forward, we plan to extend EPN to broader deep learning architectures, including image and text classification, while further refining its theoretical foundations.

\section*{Acknowledgments}
This work is partially supported by the
National Science Foundation (NSF) under Grant No.~2414705, 2220574, 2107449, and 1954376. 

\newpage
\bibliography{refer}
\bibliographystyle{apalike}
\section*{Checklist}

 \begin{enumerate}

 \item For all models and algorithms presented, check if you include:
 \begin{enumerate}
   \item A clear description of the mathematical setting, assumptions, algorithm, and/or model. [\textbf{Yes}]
   \item An analysis of the properties and complexity (time, space, sample size) of any algorithm. [\textbf{Yes}]
   \item (Optional) Anonymized source code, with specification of all dependencies, including external libraries. [\textbf{Yes}]
 \end{enumerate}

 \item For any theoretical claim, check if you include:
 \begin{enumerate}
   \item Statements of the full set of assumptions of all theoretical results. [\textbf{Yes}]
   \item Complete proofs of all theoretical results. [\textbf{Yes}]
   \item Clear explanations of any assumptions. [\textbf{Yes}]     
 \end{enumerate}

 \item For all figures and tables that present empirical results, check if you include:
 \begin{enumerate}
   \item The code, data, and instructions needed to reproduce the main experimental results (either in the supplemental material or as a URL). [\textbf{Yes}]
   \item All the training details (e.g., data splits, hyperparameters, how they were chosen). [\textbf{Yes}]
         \item A clear definition of the specific measure or statistics and error bars (e.g., with respect to the random seed after running experiments multiple times). [\textbf{Yes}]
         \item A description of the computing infrastructure used. (e.g., type of GPUs, internal cluster, or cloud provider). [\textbf{Yes}]
 \end{enumerate}

 \item If you are using existing assets (e.g., code, data, models) or curating/releasing new assets, check if you include:
 \begin{enumerate}
   \item Citations of the creator If your work uses existing assets. [\textbf{Yes}]
   \item The license information of the assets, if applicable. [\textbf{Not Applicable}]
   \item New assets either in the supplemental material or as a URL, if applicable. [\textbf{Not Applicable}]
   \item Information about consent from data providers/curators. [\textbf{Not Applicable}]
   \item Discussion of sensible content if applicable, e.g., personally identifiable information or offensive content. [\textbf{Not Applicable}]
 \end{enumerate}

 \item If you used crowdsourcing or conducted research with human subjects, check if you include:
 \begin{enumerate}
   \item The full text of instructions given to participants and screenshots. [\textbf{Not Applicable}]
   \item Descriptions of potential participant risks, with links to Institutional Review Board (IRB) approvals if applicable. [\textbf{Not Applicable}]
   \item The estimated hourly wage paid to participants and the total amount spent on participant compensation. [\textbf{Not Applicable}]
 \end{enumerate}

 \end{enumerate}
 
\clearpage
\onecolumn

\aistatstitle{Evidential Uncertainty Probes for Graph Neural Networks \\
Supplementary Materials}

\appendix

\section{List of Symbols}\label{sec:symbols}
The following table contains a list of symbols that are frequently used in the main paper as well as in the subsequent supplementary materials. 
\small
\begin{table}[bhtp]
\centering
		\resizebox{0.65\textwidth}{!}{
			\begin{tabular}{l|p{0.6\textwidth}}
				\toprule
				\multicolumn{2}{c}{\textbf{Data Distribution}} \\
				\midrule
				$C$& Number of classes. \\
				$\mathcal{G}$& Abstract representation of an attributed graph. \\			
				$\mathbb{V}, |\mathbb{V}|=N$& Set of nodes in the graph. \\
				$\mathbb{L}, \mathbb{L}\in \mathbb{V}$& Set of labeled nodes in the graph. \\			
				$\mathbf{A} \in \mathbb{R}^{N \times N}$& Adjacency matrix of the graph. \\
				$\mathbf{X} \in \mathbb{R}^{N \times F} $& Node feature matrix of the graph. \\			
			    $\mathbf{y}\in \mathbb{R}^C$& One-hot encoded class label vector. The unbold $y$ represents a scalar class label. \\
				$\mathbf{y}_{\mathbb{L}} = \{\mathbf{y}^i | i \in \mathbb{L}\}$ & Ground truth labels for the labeled set. \\
                $Y \in \{-1, +1, 0\}$& Class label space. \\			 			
				$\mathbf{z} \in \mathbb{R}^d$& Latent node feature vector. For OOD nodes, it is denoted as $\mathbf{z}_0$, while for ID nodes, it is $\mathbf{z}_{\text{ID}}$. Positive ID class nodes are represented as $\mathbf{z}_{-1}$ and negative ID class nodes as $\mathbf{z}_{+1}$. \\
                $\boldsymbol{\mu} \in \mathbb{R}^d$& Mean vector of a Gaussian distribution. \\
				$\boldsymbol{\Sigma} \in \mathbb{R}^{d \times d}$& Covariance matrix of a Gaussian distribution. \\
				\midrule
				\multicolumn{2}{c}{\textbf{Network Parameters}} \\
				\midrule			
				$\boldsymbol{\theta}_{\text{EGNN}} = \{\bar{\mathbf{w}}, \bar{b}\}$& Parameters of the ENN model. The optimal parameters derived from the upper bound of the UCE loss are denoted as $\{\bar{\mathbf{w}}^*, \bar{b}^*\}$. \\
				$\boldsymbol{\theta}_{\text{EPN}} = \{\mathbf{W}^{[1]},\mathbf{w}^{[2]}, \mathbf{b}^{[1]}, b^{[2]} \}$& Parameters of the EPN model. \\
				$\tilde{\boldsymbol{\theta}}_n$ & Series of EPN parameters indexed by $n$ used in Theorem ~\ref{thm:EPN-UCE}. \\
                    $\vec{\boldsymbol{\theta}}_{\text{EPN}}$ &  Constrained EPN parameters used in Theorem ~\ref{thm:EPN-ICE}. \\
				$\boldsymbol{\theta}_{\text{GNN}} = \{\mathbf{w}_{C}, b_{C} \}$& Parameters of the EPN model. \\
				\midrule
				\multicolumn{2}{c}{\textbf{Distributions}} \\
				\midrule
				$\mathcal{N}(\boldsymbol{\mu}, \boldsymbol{\Sigma})$ & Gaussian distribution. \\
				$\texttt{Cat}(\mathbf{p})$ & Categorical distribution with parameter $\mathbf{p} \in \Delta_C$. \\
				$\texttt{Dir}(\boldsymbol{\alpha})$ & Dirichlet distribution with parameter $\boldsymbol{\alpha} \in \mathbb R_+^C$. \\
				\midrule
				\multicolumn{2}{c}{\textbf{Network Predictions}} \\
				\midrule
				$e_{\text{total}}^i \in \mathbb{R}_+$ & Total predicted evidence for node $i$. \\	
                $\mathbf{e}^i \in \mathbb{R}^C_+$ & Predicted evidence vector for node $i$. \\
                $\boldsymbol{\alpha}^i \in \mathbb{R}^C_+$ & Predicted concentration parameter vector for node $i$. \\
                $\boldsymbol{\alpha}^i_{\text{agg}}$ & Predicted concentration parameter vector for node $i$, specifically represent the one after uncertainty propagation.  \\
                $\mathbf{p}^i \in \Delta_C$ & Realized probability vector for node $i$, drawn from the predicted Dirichlet distribution. \\
				$\tilde{\mathbf{p}}^i$ & Predicted probability vector from a softmax classification model. \\
			    $\mathbf{q}^i$ & Representation predicted by the second-to-last layer of the EPN model. \\
				\hline
				\multicolumn{2}{c}{\textbf{Uncertainty Metrics}} \\
				\hline		
				$r \in (0,1)$ & Confidence score based on the predicted probability. \\
				$u_{\text{epi}}(\cdot)$ & Epistemic uncertainty function. \\
				$u_{\text{alea}}^i$ & Aleatoric uncertainty for node $i$. \\				
				$u_{\text{epis}}^i$ & Epistemic uncertainty for node $i$. \\	
                    $U^*_{\text{E}}(\cdot)$ & Epistemic uncertainty function for the optimal ENN model used in Theorem ~\ref{thm:1}. \\
				\midrule
				\multicolumn{2}{c}{\textbf{Hyperparameters}} \\
				\midrule				
				$e_{\text{total}}^{\text{id}}$ & Predefined total evidence value for ID samples. \\
				$e_{\text{total}}^{\text{ood}}$ & Predefined total evidence value for OOD samples. \\
                $\gamma^1, \gamma^2$ & Parameters for uncertainty propagation. \\
                $\lambda_1, \lambda_2$ & Loss function parameters for optimizing the EPN model. \\
				$\epsilon, \digamma, \sigma, t$ & Temporary parameters. \\
                \bottomrule
		\end{tabular}}
\end{table}

\section{Theoretical Analysis} \label{sect:proofs}

\textbf{Binary Classification with OOD Detection.}
We consider a balanced binary classification task together with OOD detection. A sample can belong to either the out-of-distribution (OOD) class (\(Y=0\)) or one of two in-distribution (ID) classes (\(Y=-1\) or \(+1\)). First, we determine whether a sample is OOD (\(Y=0\)) or ID (\(Y=-1\) or \(+1\)). If it is ID, we further classify it as either \(-1\) or \(+1\). We assume the label \(Y\) follows a categorical distribution with a uniform class prior
\[
Y \;\sim\; \mathtt{Cat}\Bigl(\tfrac{1}{3},\, \tfrac{1}{3},\, \tfrac{1}{3}\Bigr).
\]

Our theoretical analysis is built on a latent representation of feature vectors. Specifically, we can decompose the entire network into a representation-learning network (RLN) and a task head. For the node classification task, the RLN typically consists of graph convolutional layers and MLP layers, leading to node-level hidden representations $[\mathbf{z}^i]_{i \in \mathbb{V}}$ that embed the node features.   The task head, comprising the final MLP layers, is then applied to these node embeddings to produce their associated node-level probability vectors for classification task and node-level Dirichlet distribution for evidential-based models. For our analysis, we focus on the task head where we examine an asymptotic behavior of ENN and EPN networks. 

In practice, the training set consists only of ID samples with \(\mathbb{P}(Y=-1) = \mathbb{P}(Y=+1) = \tfrac{1}{2}\) without OOD samples, i.e., \(\mathbb{P}(Y=0) = 0\). Thus, the joint distribution of the training data is
\[
\mathbb{P}\bigl(\mathbf{z} \,\big|\,
Y \in \{-1, +1\}\bigr)
= \tfrac{1}{2}\,\mathbb{P}(\mathbf{z} \mid Y = -1)
+ \tfrac{1}{2}\,\mathbb{P}(\mathbf{z} \mid Y = +1).
\]
The testing set consists of both ID and OOD samples.

For the ease of deriving an analytical solution, we assume a generative model in which the conditional distributions of the two ID classes are Gaussian with means $\pm \bm\mu$ and identical covariance $\bm{\Sigma}$, while the conditional distribution of the OOD class has the zero mean and covariance $\bm{\Sigma}$, as detailed in Assumption \ref{assum:data}.

\begin{assumption}[Data Distribution Assumption]
\label{assum:data}
Let \(Y \in \{-1, 0, +1\}\) be the class label,  we assume each latent data vector \(\mathbf{z}\) follows a multivariate Gaussian distribution,
\[
\left\{
\begin{aligned}
&\mathbf{z} \,\big|\, Y = -1 \;\sim\; \mathcal{N}(-\bm{\mu},\, \bm{\Sigma}),\\
&\mathbf{z} \,\big|\, Y = +1 \;\sim\; \mathcal{N}(\bm{\mu},\, \bm{\Sigma}),\\
&\mathbf{z} \,\big|\, Y = 0 \;\sim\; \mathcal{N}(\mathbf{0},\, \bm{\Sigma}),
\end{aligned}
\right.
\]
where \(\bm{\mu} \in \mathbb{R}^d\) and \(\bm{\Sigma} \in \mathbb{R}^{d \times d}\) are the mean vector and the covariance matrix of the Gaussian distribution. We assume \(\bm{\Sigma}\) is fixed and finite, while varying \(\bm{\mu}\) to analyze the asymptotic behavior.

For simple notation, we use \(\mathbf{z}_0\sim\mathcal{N}(\mathbf{0},\bm{\Sigma})\) for the OOD class \((Y=0)\),  \(\mathbf{z}_{+1}\sim\mathcal{N}(\bm{\mu},\bm{\Sigma})\) for the (positive) ID class \((Y=+1)\) and \(\mathbf{z}_{-1}\sim\mathcal{N}(-\bm{\mu},\bm{\Sigma})\) for the (negative) ID class \((Y=-1)\).
\end{assumption}

\subsection{Theorem 1}
\begin{definition}[Single-layer ENN] \label{assum:enn_net}

We consider a single-layer perception network as the architecture of ENN, defined as follows,
\begin{align} \label{eq:evidence_one_neuron2}
f_{\text{EGNN}}(\mathbf{z}; \bm{\theta}_{\text{EGNN}}) 
= \begin{bmatrix} 
e_{\text{E}, -y}(\mathbf{z}; \bm{\theta}_{\text{EGNN}})\\
e_{\text{E}, y}(\mathbf{z}; \bm{\theta}_{\text{EGNN}})
\end{bmatrix}
= \begin{bmatrix} 
\exp(-\bar{\mathbf{w}}^\top \mathbf{z} - \bar{b}) \\ 
\exp(\bar{\mathbf{w}}^\top \mathbf{z} + \bar{b}) 
\end{bmatrix},
\end{align}
where $\bm{\theta}_{\text{EGNN}} = \{\bar{\mathbf{w}}, \bar{b}\}$ denotes the model parameters, $e_{\text{E}, y}$ represents the predicted evidence for the ground truth class $y$, and $e_{\text{E}, -y}$ represents the predicted evidence for the opposite class $-y$. 
\end{definition}

The concentration parameters are then calculated as 
\begin{align} \label{eq:alpha_one_neuron}
\boldsymbol{\alpha} 
= \begin{bmatrix} 
\alpha_{\text{E}, -y}\\
\alpha_{\text{E}, y}
\end{bmatrix}
= \begin{bmatrix} 
\exp(-\bar{\mathbf{w}}^\top \mathbf{z} - \bar{b}) + 1 \\ 
\exp(\bar{\mathbf{w}}^\top \mathbf{z} + \bar{b}) + 1
\end{bmatrix}.
\end{align}

\begin{definition}[Upper Bound of the UCE Loss]
\label{def:upperbound}
Under the data assumption specified in Assumption~\ref{assum:data}, an ENN with the structural assumption outlined in Definition ~\ref{assum:enn_net}  has the following upper bound for the uncertainty cross-entropy (UCE) loss:
\begin{equation}\label{UCE_upper}
    \overline{\ell_{\text{EGNN,UCE}}}
    \;=\;
    \frac{2}{\,e_{\text{E}, y}(\mathbf{z}; \bm{\theta}_{\text{EGNN}})\,},
\end{equation}
where \(\mathbf{z} \in \mathbb{R}^d\) is the latent feature vector and  \(e_{\text{E}, y}(\mathbf{z}; \bm{\theta}_{\text{EGNN}}) \in \mathbb{R}^+\) is the predicted evidence for the ground truth class \(y\), as defined in~(\ref{eq:evidence_one_neuron2}).
\end{definition}
Please refer to \citet[Proposition 1]{yu2023uncertainty} for the derivation of such upper bound in (\ref{UCE_upper}). We provide a closed-form formula for the predicted epistemic uncertainty under Assumptions \ref{assum:data} and Definition \ref{assum:enn_net} when minimizing this upper bound $\overline{\ell_{\text{EGNN,UCE}}}$ in Proposition \ref{prop:opt_enn}.

\begin{proposition}
[Optimal Epistemic Uncertainty]
\label{prop:opt_enn}
Suppose an optimal ENN satisfying Definition \ref{assum:enn_net} is trained on the data with its distribution specified in Assumption~\ref{assum:data} while minimizing $\overline{\ell_{\text{EGNN,UCE}}}$ defined in Definition~\ref{def:upperbound}.
Its predicted epistemic uncertainty for a given latent representation \(\mathbf{z} \in \mathbb{R}^d\) can be expressed as follows,
\begin{equation}
    U_{\text{E}}^*(\mathbf{z})
    \;=\;
    \frac{1}{1 +  \,\cosh\!\Bigl(\bm{\mu}^\top \bm{\Sigma}^{-1}\mathbf{z}\Bigr)}.
\end{equation}
\end{proposition}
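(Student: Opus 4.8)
The plan is to minimize the upper bound $\overline{\ell_{\text{EGNN,UCE}}} = 2/e_{\text{E},y}(\mathbf{z};\bm{\theta}_{\text{EGNN}})$ in expectation over the training distribution, obtain closed-form optimal parameters $\{\bar{\mathbf{w}}^*, \bar{b}^*\}$, and then plug them into the epistemic uncertainty formula $u_{\text{epi}} = C/(e_{\text{total}}+C) = 2/(e_{\text{E},-y}+e_{\text{E},y})$, which for the single-neuron architecture becomes $2/\bigl(\exp(\bar{\mathbf{w}}^\top\mathbf{z}+\bar b)+\exp(-\bar{\mathbf{w}}^\top\mathbf{z}-\bar b)\bigr) = 1/\bigl(1+\cosh(\bar{\mathbf{w}}^\top\mathbf{z}+\bar b)\bigr)$, using $2\cosh(t) = e^t + e^{-t}$. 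So the entire task reduces to showing that the expected-risk minimizer satisfies $\bar{\mathbf{w}}^* = \bm{\Sigma}^{-1}\bm{\mu}$ and $\bar b^* = 0$.

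First I would write out the population objective. Since the training set contains only the two ID classes with equal probability, the expected loss is
\begin{equation}
R(\bar{\mathbf{w}},\bar b) = \tfrac{1}{2}\,\mathbb{E}_{\mathbf{z}\sim\mathcal{N}(\bm{\mu},\bm{\Sigma})}\bigl[2\exp(-\bar{\mathbf{w}}^\top\mathbf{z}-\bar b)\bigr] + \tfrac{1}{2}\,\mathbb{E}_{\mathbf{z}\sim\mathcal{N}(-\bm{\mu},\bm{\Sigma})}\bigl[2\exp(\bar{\mathbf{w}}^\top\mathbf{z}+\bar b)\bigr],
\end{equation}
because for a class-$+1$ sample the ground-truth evidence is $e_{\text{E},y}=\exp(\bar{\mathbf{w}}^\top\mathbf{z}+\bar b)$, so $2/e_{\text{E},y}=2\exp(-\bar{\mathbf{w}}^\top\mathbf{z}-\bar b)$, and symmetrically for class $-1$. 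Next I would evaluate the two Gaussian moment-generating-function integrals via the identity $\mathbb{E}_{\mathbf{z}\sim\mathcal{N}(\mathbf{m},\bm{\Sigma})}[\exp(\mathbf{a}^\top\mathbf{z})] = \exp(\mathbf{a}^\top\mathbf{m} + \tfrac{1}{2}\mathbf{a}^\top\bm{\Sigma}\mathbf{a})$. This yields
\begin{equation}
R(\bar{\mathbf{w}},\bar b) = \exp\bigl(-\bar{\mathbf{w}}^\top\bm{\mu}-\bar b + \tfrac{1}{2}\bar{\mathbf{w}}^\top\bm{\Sigma}\bar{\mathbf{w}}\bigr) + \exp\bigl(-\bar{\mathbf{w}}^\top\bm{\mu}+\bar b + \tfrac{1}{2}\bar{\mathbf{w}}^\top\bm{\Sigma}\bar{\mathbf{w}}\bigr),
\end{equation}
which factors as $e^{-\bar{\mathbf{w}}^\top\bm{\mu} + \frac{1}{2}\bar{\mathbf{w}}^\top\bm{\Sigma}\bar{\mathbf{w}}}\,(e^{\bar b} + e^{-\bar b}) = 2\,e^{-\bar{\mathbf{w}}^\top\bm{\mu} + \frac{1}{2}\bar{\mathbf{w}}^\top\bm{\Sigma}\bar{\mathbf{w}}}\cosh(\bar b)$. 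Since $\cosh$ is minimized uniquely at $0$, the optimum has $\bar b^* = 0$; and minimizing the convex exponent $-\bar{\mathbf{w}}^\top\bm{\mu} + \tfrac{1}{2}\bar{\mathbf{w}}^\top\bm{\Sigma}\bar{\mathbf{w}}$ over $\bar{\mathbf{w}}$ gives, by setting the gradient $-\bm{\mu} + \bm{\Sigma}\bar{\mathbf{w}}$ to zero, $\bar{\mathbf{w}}^* = \bm{\Sigma}^{-1}\bm{\mu}$ (using positive-definiteness of $\bm{\Sigma}$ for strict convexity and uniqueness).

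Finally I would substitute $\bar{\mathbf{w}}^*=\bm{\Sigma}^{-1}\bm{\mu}$ and $\bar b^*=0$ into $u_{\text{epi}} = 1/(1+\cosh(\bar{\mathbf{w}}^\top\mathbf{z}+\bar b))$ to conclude $U_{\text{E}}^*(\mathbf{z}) = 1/\bigl(1+\cosh((\bm{\Sigma}^{-1}\bm{\mu})^\top\mathbf{z})\bigr) = 1/\bigl(1+\cosh(\bm{\mu}^\top\bm{\Sigma}^{-1}\mathbf{z})\bigr)$, using symmetry of $\bm{\Sigma}^{-1}$. The main obstacle here is not any single step — each is routine — but rather being careful about exactly which loss is minimized (the upper bound, not the exact UCE) and making sure the expectation is taken only over the ID training distribution; a secondary subtlety is justifying that the infimum of $R$ is actually attained (it is, since the exponent $\tfrac{1}{2}\bar{\mathbf{w}}^\top\bm{\Sigma}\bar{\mathbf{w}} - \bar{\mathbf{w}}^\top\bm{\mu}$ is coercive in $\bar{\mathbf{w}}$ and $\cosh(\bar b)$ is coercive in $\bar b$), which I would note briefly.
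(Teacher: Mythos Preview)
Your approach is correct and essentially the same as the paper's, with the added virtue of actually deriving the optimal parameters $\bar{\mathbf{w}}^*=\bm{\Sigma}^{-1}\bm{\mu}$, $\bar b^*=0$ via the Gaussian MGF and a clean convexity argument --- the paper simply cites \citet{yu2023uncertainty} for that step and then substitutes into the uncertainty formula exactly as you do. One bookkeeping slip to fix: $C/(e_{\text{total}}+C)=2/(e_{-y}+e_{y}+2)$, not $2/(e_{-y}+e_{y})$; the ``$+2$'' comes from $\alpha_c=e_c+1$, and with it the chain reads $2/(\exp(t)+\exp(-t)+2)=2/(2\cosh t+2)=1/(1+\cosh t)$, whereas $2/(\exp(t)+\exp(-t))=1/\cosh t$ as written would not give the stated result.
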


\begin{proof}
Under the same assumptions, \citet[Theorem 1]{yu2023uncertainty} demonstrated that minimizing $\overline{\ell_{\text{EGNN,UCE}}}$  yields the optimal parameters:
\begin{equation} \label{eq: optim_enn}
    \bar{\mathbf{w}}^* = \bm{\Sigma}^{-1} \bm{\mu}, \quad \bar{b}^* = 0.
\end{equation}
Plugging these parameters into (\ref{eq:alpha_one_neuron}) leads to the formula for the predicted concentration parameters of the Dirichlet distribution
\begin{equation}\label{eq:alphas}
    \begin{aligned}
        \alpha_{\text{E}, -y}(\mathbf{z}; \bm{\theta}_{\text{E}}) &= \exp(-\bm{\mu}^\top \bm{\Sigma}^{-1} \mathbf{z}) + 1, \\
        \alpha_{\text{E}, y}(\mathbf{z}; \bm{\theta}_{\text{E}}) &= \exp(\bm{\mu}^\top \bm{\Sigma}^{-1} \mathbf{z}) + 1.
    \end{aligned}
\end{equation}

Following \citet{sensoy2018evidential}, the total epistemic uncertainty for the Dirichlet distribution can be computed by
\begin{equation}
\begin{aligned}
    U_{\text{E}}^*(\mathbf{z}) 
    =& \frac{2}{\alpha_{\text{E}, -y}(\mathbf{z}; \bm{\theta}_{\text{E}}) + \alpha_{\text{E}, y}(\mathbf{z}; \bm{\theta}_{\text{E}})} \\
= &\frac{2}{\exp(-\bm{\mu}^\top \bm{\Sigma}^{-1} \mathbf{z}) + 1 + \exp(\bm{\mu}^\top \bm{\Sigma}^{-1} \mathbf{z}) + 1}\\
=& \frac{1}{1 + \cosh(\bm{\mu}^\top \bm{\Sigma}^{-1} \mathbf{z})},
\end{aligned}
\end{equation}
where we use the formula (\ref{eq:alphas}) and the definition that \(2\cosh(x)=\exp(x) + \exp(-x)\). This completes the proof.

\end{proof}

We are ready to establish the first main result: given a sufficiently large separation between the means of the two ID classes, with the OOD samples lying in between, the optimal ENN model can reliably distinguish between ID and OOD data. When the separation is sufficiently large, the probability of the ENN model detecting OOD samples can be arbitrarily close to 1, which can be stated by the precision definition of the limit, as demonstrated in Theorem \ref{thm:1}.

\setcounter{theorem}{0}
\begin{theoremref}{\ref{thm:1}}
For any \(\epsilon > 0\), there exists a positive constant \(\digamma > 0\) such that, for any data distribution satisfying Assumption~\ref{assum:data} with \(\|\bm{\mu}\|_2 > \digamma\), the probability that the epistemic uncertainty obtained by an optimal single-layer ENN based on an upper bound of UCE loss in Definition~\ref{def:upperbound}, correctly distinguishes ID and OOD samples is greater than \(1 - \epsilon\). 
\end{theoremref}

\begin{proof}
For every $\epsilon,$ we choose \(N = \sqrt{\dfrac{8 \lambda_{\text{max}}}{\epsilon}}\), where $\lambda_{\max}$ denotes the maximum singular value of $\Sigma.$ For any $\bm\mu$ with $\|\bm \mu\|_2>N,$ we start by showing that
\begin{equation}\label{ineq:prob}
    \mathbb{P}\bigr(U_{\text{E}}^*(\mathbf{z}_0)> U_{\text{E}}^*(\mathbf{z_1})\bigr)>1-\epsilon, \, \forall \quad  \mathbf{z}_0\sim \mathcal{N}(\mathbf{0}, \boldsymbol{\Sigma}), \mathbf{z}_{1}\sim \mathcal{N}(\boldsymbol{\mu}, \boldsymbol{\Sigma}),
\end{equation}
which implies that a latent vector belonging to OOD $(\mathbf z_0)$ is more likely to classify as OOD than the one belonging to ID $(\mathbf z_1)$. 

Recall in Proposition~\ref{prop:opt_enn} that 
\begin{equation}
    U_{\text{E}}^*(\mathbf{z})
\;=\;
\frac{1}{\,\cosh\!\bigl(\bm{\mu}^\top \bm{\Sigma}^{-1}\,\mathbf{z}\bigr)+1\,}, \quad \forall \mathbf z\in\mathbb R^d.
\end{equation}
Since the cosh function's shape resembles a parabola, the following two inequalities are equivalent,
\begin{eqnarray}\label{ineq:equiv}
    U_{\text{E}}^*(\mathbf{z}_0)> U_{\text{E}}^*(\mathbf{z_1} )	\Longleftrightarrow  |\boldsymbol{\mu}^\top \boldsymbol{\Sigma}^{-1} \mathbf{z}_0 |<|\boldsymbol{\mu}^\top \boldsymbol{\Sigma}^{-1} \mathbf{z}_1| .
\end{eqnarray}

For any $\mathbf{z}_0\sim \mathcal{N}(\mathbf{0}, \boldsymbol{\Sigma})$ and $ \mathbf{z}_{1}\sim \mathcal{N}(\boldsymbol{\mu}, \boldsymbol{\Sigma}),$ we have  $\bm{\mu}^\top \bm{\Sigma}^{-1} \mathbf{z}_0 \sim \mathcal{N}(0, \sigma^2)$ and $\bm{\mu}^\top \bm{\Sigma}^{-1} \mathbf{z}_1 \sim \mathcal{N}(\sigma^2, \sigma^2)$ with \(\sigma^2 = \bm{\mu}^\top \bm{\Sigma}^{-1} \bm{\mu}\). It is straightforward that with $\lambda_{\max}$ as the maximum eigen value of the matrix $\boldsymbol{\Sigma}$,
\begin{equation}\label{ineq:sigma-eps}
    \sigma^2 = \bm{\mu}^\top \bm{\Sigma}^{-1} \bm{\mu}\geq \frac 1 {\lambda_{\max}} \|\bm\mu\|_2^2\geq \frac 8 {\epsilon}.
\end{equation}

It follows from the Chebyshev's inequality that	for any $t>0,$ one has the probability upper bounds:
\begin{equation}
\left\{
\begin{aligned}
	&\mathbb{P}\left(|\boldsymbol{\mu}^\top \boldsymbol{\Sigma}^{-1} \mathbf{z}_0 | \geq t\sigma\right) \leq \frac{1}{t^2} \\
	&\mathbb{P}\left(|\boldsymbol{\mu}^\top \boldsymbol{\Sigma}^{-1} \mathbf{z}_1 - \sigma^2| \geq t\sigma\right) \leq \frac{1}{t^2},
\end{aligned}
\right.
\end{equation}
which can be equivalently expressed by
	\begin{equation}\label{ineq:cheb}
\left\{
\begin{aligned}
	&\mathbb{P}\left(-t\sigma \leq \boldsymbol{\mu}^\top \boldsymbol{\Sigma}^{-1} \mathbf{z}_0  \leq t\sigma\right) \geq 1 - \frac{1}{t^2} \\
	&\mathbb{P}\left(-t\sigma + \sigma^2 \leq \boldsymbol{\mu}^\top \boldsymbol{\Sigma}^{-1} \mathbf{z}_1  \leq t\sigma + \sigma^2\right) \geq 1 - \frac{1}{t^2}.
\end{aligned}
\right.
	\end{equation}

For simple notation, we define two events:
\begin{equation}
\left\{
\begin{aligned}
	&\text{Event A}: -t\sigma \leq \boldsymbol{\mu}^\top \boldsymbol{\Sigma}^{-1} \mathbf{z}_0  \leq t\sigma \\
	&\text{Event B}: -t\sigma + \sigma^2 \leq \boldsymbol{\mu}^\top \boldsymbol{\Sigma}^{-1} \mathbf{z}_1 \leq t\sigma + \sigma^2.
\end{aligned}
\right.
\end{equation}
Then the inequalities in (\ref{ineq:cheb}) become
\begin{equation}
\mathbb{P}(\text{A}) \geq 1 - \frac{1}{t^2}, \quad \mathbb{P}(\text{B}) \geq 1 - \frac{1}{t^2}.
\end{equation}
Using the above lower bounds together with the inclusion-exclusion principle, we can estimate the probability of both events occurring, that is,
\begin{align}
    \mathbb{P}(\text{A} \cap \text{B}) =& \mathbb{P}(\text{A}) + \mathbb{P}(\text{B}) - \mathbb{P}(\text{A} \cup \text{B}) \\
    \geq &\mathbb{P}(\text{A}) + \mathbb{P}(\text{B}) - 1\\
    \geq &\left(1 - \frac{1}{t^2}\right) + \left(1 - \frac{1}{t^2}\right) - 1 = 1 - \frac{2}{t^2}.
\end{align}

We choose \(t = \frac{\sigma}{2}\), or equivalently \(t\sigma= -t\sigma + \sigma^2 ,\) and hence we have
\begin{equation}
-t\sigma \leq \boldsymbol{\mu}^\top \boldsymbol{\Sigma}^{-1} \mathbf{z}_0 \leq t\sigma = -t\sigma + \sigma^2 \leq \boldsymbol{\mu}^\top \boldsymbol{\Sigma}^{-1} \mathbf{z}_1 .
\end{equation}
Using the inequality in (\ref{ineq:sigma-eps}), we get 
\begin{equation}
\mathbb{P}\left(|\boldsymbol{\mu}^\top \boldsymbol{\Sigma}^{-1} \mathbf{z}_0 |<|\boldsymbol{\mu}^\top \boldsymbol{\Sigma}^{-1} \mathbf{z}_1 |\right) \geq \mathbb{P}(\text{A} \cap \text{B}) \geq 1 - \frac{2}{t^2} = 1-\frac{8}{\sigma^2} \geq 1-\epsilon.
\end{equation}

Due to the equivalent relationship in (\ref{ineq:equiv}), we arrive at the desired inequality (\ref{ineq:prob}).
Thanks to the symmetry of the epistemic uncertainty, we can obtain the following result in a similar manner
\begin{equation}
    \mathbb{P}\bigr(U_{\text{E}}^*(\mathbf{z}_0)> U_{\text{E}}^*(\mathbf{z_{-1}})\bigr)>1-\epsilon, \forall \mathbf{z}_0\sim \mathcal{N}(\textbf{0}, 
\boldsymbol{\Sigma}), \mathbf{z}_{-1}\sim \mathcal{N}(-\bm{\mu}, \boldsymbol{\Sigma}).
\end{equation}

We then analyze the probability of correctly distinguishing between ID and OOD samples. Successful classification of ID and OOD requires that ID samples exhibit lower epistemic uncertainty compared to OOD samples. Given that OOD samples follow \(\mathbf{z}_0 \sim \mathcal{N}(\mathbf{0}, \bm{\Sigma})\) and ID samples follow \(\mathbf{z}_{+1} \sim \mathcal{N}(\bm{\mu}, \bm{\Sigma})\), \(\mathbf{z}_{-1} \sim \mathcal{N}(-\bm{\mu}, \bm{\Sigma})\) with equal probability, we have:

\begin{equation}
  \mathbb{P}\big(U_E^*(\mathbf{z}_{0}) > U_E^*(\mathbf{z}_{\text{ID}})\big).
\end{equation}

If follows from Assumption \ref{assum:data} that \(\mathbf{z}_{+1} \sim \mathcal{N}(\bm{\mu}, \bm{\Sigma})\), \(\mathbf{z}_{-1} \sim \mathcal{N}(-\bm{\mu}, \bm{\Sigma})\), and \(\mathbf{z}_0 \sim \mathcal{N}(\mathbf{0}, \bm{\Sigma})\). Using the \(\xi(\mathbf{z})\)  denotes the PDF  of $\mathbf{z}$ 
and expanding this probability, we have: 
\begin{equation}
\begin{aligned}
    &\mathbb{P}\big(U_E^*(\mathbf{z}_{0}) > U_E^*(\mathbf{z}_{\text{ID}})\big) \\
    &= \int_{U_E^*(\mathbf{z}_{0}) > U_E^*(\mathbf{z}_{\text{ID}})} \xi(\mathbf{z}_0 )\xi(\mathbf{z}_{\text{ID}}) \, d\mathbf{z}_0 d\mathbf{z}_{\text{ID}} \\
    &= \int_{U_E^*(\mathbf{z}_0) > U_E^*(\mathbf{z}_{\text{ID}})} \xi(\mathbf{z}_0) \big(\xi(\mathbf{z}_{\text{ID}} = \mathbf{z}_{-1})\mathbb{P}(y = -1) + \xi(\mathbf{z}_{\text{ID}} = \mathbf{z}_{+1})\mathbb{P}(y = +1)\big) \, d\mathbf{z}_0 d\mathbf{z}_{\text{ID}}.
\end{aligned}
\end{equation}

Next, splitting the integrals based on the ID components:
\begin{equation}
\begin{aligned}
    &\mathbb{P}\big(U_E^*(\mathbf{z}_{0}) > U_E^*(\mathbf{z}_{\text{ID}})\big) \\
    &= \int_{U_E^*(\mathbf{z}_0) > U_E^*(\mathbf{z}_{-1})} \xi(\mathbf{z}_0)\xi(\mathbf{z}_{-1})\mathbb{P}(y = -1) \, d\mathbf{z}_0 d\mathbf{z}_{-1} \\
    &\quad + \int_{U_E^*(\mathbf{z}_0) > U_E^*(\mathbf{z}_{+1})} \xi(\mathbf{z}_0)\xi(\mathbf{z}_{+1})\mathbb{P}(y = +1) \, d\mathbf{z}_0 d\mathbf{z}_{+1}.
\end{aligned}
\end{equation}

Factoring out the prior probabilities: 
\begin{equation}
\begin{aligned}
    &\mathbb{P}\big(U_E^*(\mathbf{z}_{\text{OOD}}) > U_E^*(\mathbf{z}_{\text{ID}})\big) \\
    &= \mathbb{P}(y = -1) \int_{U_E^*(\mathbf{z}_0) > U_E^*(\mathbf{z}_{-1})} \xi(\mathbf{z}_0)\xi(\mathbf{z}_{-1}) \, d\mathbf{z}_0 d\mathbf{z}_{-1} \\
    &\quad + \mathbb{P}(y = +1) \int_{U_E^*(\mathbf{z}_0) > U_E^*(\mathbf{z}_{+1})} \xi(\mathbf{z}_0)\xi(\mathbf{z}_{+1}) \, d\mathbf{z}_0 d\mathbf{z}_{+1}.
\end{aligned}
\end{equation}

Using the definition of probabilities:
\begin{equation}
\begin{aligned}
   & \mathbb{P}\big(U_E^*(\mathbf{z}_{\text{OOD}}) > U_E^*(\mathbf{z}_{\text{ID}})\big)\\
& = \mathbb{P}(y = -1)\mathbb{P}\big(U_E^*(\mathbf{z}_0) > U_E^*(\mathbf{z}_{-1})\big) 
+ \mathbb{P}(y = +1)\mathbb{P}\big(U_E^*(\mathbf{z}_0) > U_E^*(\mathbf{z}_{+1})\big)\\
& > 1 - \epsilon 
\end{aligned}
\end{equation}

\end{proof}

\subsection{Theorem 2}
The proposed Evidential Probe Network (EPN) leverages a lightweight two-layer perceptron to estimate total evidence, which is used to quantify uncertainty. Given the latent representation of an input sample, \(\mathbf{z} \in \mathbb{R}^d\) and its corresponding one-hot encoded class label, $\mathbf{y}\in \{0 , 1\}^C$, the EPN, parameterized by $\boldsymbol{\theta}_{\text{EPN}} = \{\mathbf{W}^{[1]}, \mathbf{w}^{[2]}, {\bf b}^{[1]}, b^{[2]}\}$, produces a total evidence value $e_{\text{total}}$. The EPN network is defined as:
\begin{eqnarray}\label{eq: epn_arch}
 e_{\text{total}} = f_{\text{EPN}}(\mathbf{z}; \boldsymbol{\theta}_{\text{EPN}})
&=& 
\text{ReLU} \Bigl(\mathbf{w}^{[2]\top}\,\exp
\bigl(\mathbf{W}^{[1]} \,\mathbf{z}
+ \mathbf{b}^{[1]}\bigr)
+ b^{[2]}\Bigr),
\end{eqnarray}
where $\mathbf{W}^{[1]} \in \mathbb{R}^{C\times d}$, $\mathbf{w}^{[2]} \in \mathbb{R}^C$, $\mathbf{b}^{[1]} \in \mathbb{R}^C$, and  $b^{[2]}\in \mathbb{R}$. 

A pre-trained GNN classifier provides a function $f_{\text{GNN}}(\mathbf{z}; \boldsymbol{\theta}_{\text{GNN}})$, which outputs a class probability vector: \(\tilde{\mathbf{p}}({\bf z}) \in \Delta_C\), where $\Delta_C$ denotes the probability simplex of dimension $C$. 
This probability vector is used as an approximation of the expected class probability, i.e., $\mathbb{E}[y | {\bf z}]$. The Dirichlet distribution over class probabilities is parameterized as follows:
\begin{equation}
\begin{aligned}\label{eqn: p-alpha-e}
& \boldsymbol{\alpha}(\mathbf{z}; \boldsymbol{\theta}_{\text{EPN}}) = \bigl(f_{\text{EPN}}(\mathbf{z}; \boldsymbol{\theta}_{\text{EPN}}) + C\bigr)\,\tilde{\mathbf{p}}({\bf z}), \\
& \mathbf{e}(\mathbf{z}; \boldsymbol{\theta}_{\text{EPN}}) = \boldsymbol{\alpha}(\mathbf{z}; \boldsymbol{\theta}_{\text{EPN}}) - \mathbf{1}.
\end{aligned}
\end{equation}
This formulation ensures that the predicted class probabilities follow a Dirichlet distribution: ${\bf p} \sim \text{Dir}({\bf p} | \boldsymbol{\alpha}(\mathbf{z}; \boldsymbol{\theta}_{\text{EPN}})) 
$. The EPN-UCE loss is defined in (\ref{eq:uce_epn}). Specifically, the analytic solution is given by
\begin{equation}\label{eq: epn_uce_app}
\begin{aligned} 
     \ell_{\text{EPN, UCE}}(e_{\text{total}}, y ; \boldsymbol{\theta}_{\text{EPN}}) 
  = \psi\left(e_{\text{total}} + C\right) - \psi\Bigl(\left(e_{\text{total}} +C \right) \cdot \tilde{p}_y\Bigr), 
\end{aligned}
\end{equation}
where $\psi(\cdot)$ is the digamma function and $y$ is the scalar of ground truth label.

We focus on a binary classification task, i.e., $C=2$, and consider the following configuration of $\tilde{\boldsymbol{\theta}}_n=\{\tilde{\bf W}^{[1]}, \tilde{\bf w}^{[2]}, \tilde{\bf b}^{[1]}, \tilde{b}^{[2]}\}$: 
\begin{eqnarray}\label{eq:2-layer-network}
    \tilde{\bf W}^{[1]} = {\bf 0}\in\mathbb R^{2\times d},  \tilde{\bf w}^{[2]} = {\bf 1}\in\mathbb R^{2}, \tilde{\bf b}^{[1]} = n \cdot \begin{bmatrix}
1 \\
-1 
\end{bmatrix} \in\mathbb R^{2}, \ \tilde{b}^{[2]} = 0, 
\end{eqnarray}
with a scalar $n$, and hence the only varying parameter in $\tilde{\boldsymbol{\theta}}_n$ is the component $\tilde{\bf b}^{[1]}$ that depends on $n$.

Under this configuration (\ref{eq:2-layer-network}), we can derive a closed-form formula for the output of the EPN network, 
\begin{equation}
    e_{\text{total}} = f_{\text{EPN}}(\mathbf{z}; \tilde{\boldsymbol{\theta}}_n) = \exp(n) + \exp(-n) = 2\cdot \cosh(n) \quad \forall \, \mathbf{z} \in \mathbb{R}^d. 
\end{equation}
We establish in Theorem~\ref{thm:EPN-UCE} that such two-layer EPN, parameterized by $\tilde{\boldsymbol{\theta}}_n$, attains the infimum of the expected EPN-UCE loss, i.e.,  $ \ell_{\text{EPN,UCE}}(e_{\text{total}}, \mathbf{y}, \tilde{\mathbf{p}}; \tilde{\boldsymbol{\theta}}_n)$ asymptotically at infinity, i.e., as $n\rightarrow \infty$. We further show that this EPN network assigns a constant value of epistemic uncertainty to all samples and is therefore incapable to distinguish between ID and OOD nodes. The proof of Theorem~\ref{thm:EPN-UCE} relies on Lemma \ref{prop:epn_uce_decrease}. We will first prove Theorem~\ref{thm:EPN-UCE}, then Lemma \ref{prop:epn_uce_decrease}.

\begin{theoremref}{\ref{thm:EPN-UCE}}
Given a two-layer EPN network with parameters $\tilde{\boldsymbol{\theta}}_n=\{\tilde{\bf W}^{[1]}, \tilde{\bf w}^{[2]}, \tilde{\bf b}^{[1]}, \tilde{b}^{[2]}\}$ defined in (\ref{eq:2-layer-network}), the corresponding EPN-UCE loss, $\ell_{\text{EPN,UCE}}(\mathbf{z}, \mathbf{y}; \tilde{\boldsymbol{\theta}}_n)$, attains its infimum asymptotically at infinity, i.e., as $n\rightarrow \infty$. Furthermore, this parameterized EPN has the property:
\begin{equation}
  \mathbb{P}\Bigl(u_{\text{epi}}\bigl(f_{\text{EPN}}(\mathbf{z}_{y=0}; \tilde{\boldsymbol{\theta}}_n)\bigr) > u_{\text{epi}}(f_{\text{EPN}}\bigl(\mathbf{z}_{y \in \{-1, +1\}}; \tilde{\boldsymbol{\theta}}_n\bigr)\Bigr) \notag \\
  = 0.
\end{equation}
\end{theoremref}

\begin{proof}%
Setting $C=2$ in (\ref{eq: epn_uce_app}), we obtain the following expression for the EPN-UCE loss, 
\begin{equation}
\ell_{\text{EPN,UCE}}\Bigl(\mathbf{z}, y; \boldsymbol{\theta}_{\text{EPN}} \Bigr)
=\psi\Bigl(f_{\text{EPN}}(\mathbf{z}; \boldsymbol{\theta}_{\text{EPN}} ) + 2\Bigr)
-\psi\Bigl(\bigl(f_{\text{EPN}}(\mathbf{z}; \boldsymbol{\theta}_{\text{EPN}}) + 2\bigr)\tilde{p}_{y}(\mathbf{z})\Bigr).
\end{equation}
Taking $x=f_{\text{EPN}}(\mathbf{z}; \boldsymbol{\theta}_{\text{EPN}}) + 2$ and $a=\tilde{p}_{y}\in(0,1)$ in Lemma~\ref{prop:epn_uce_decrease}, we have 
\begin{equation}
g(x):=\psi(x+2) - \psi\bigl(a(x+2)\bigr) > -\ln(a),
\end{equation}
which implies that $\ell_{\text{EPN,UCE}}(\mathbf{z}, y; \boldsymbol{\theta}_{\text{EPN}})$ has a lower bound of $-\ln(\tilde{p}_{y}(\mathbf{{z}}))$. Lemma~\ref{prop:epn_uce_decrease} also indicates that $g(x)$ is a monotonically decreasing function of $x$ for any $a\in(0,1).$
Concsequently, by the Monotone Convergence Theorem, the infimum of $g(x)$ is attained asymptotically at infinity, which suggests that the infimum of the EPN-UCE loss attains when $f_{\text{EPN}}(\mathbf{z}; \tilde{\boldsymbol{\theta}}_n)\rightarrow \infty$, or equivalently when $n\to \infty$.

When $f_{\text{EPN}}(\mathbf{z}; \tilde{\boldsymbol{\theta}}_n)\rightarrow \infty$, the estimated epistemic uncertainty as (\ref{eq:epis_u}) is %
\begin{equation}\label{eq:epi-EPN}
    u_{\text{epi}} \bigl(f_{\text{EPN}}(\mathbf{z}; \tilde{\boldsymbol{\theta}}_n) \bigr)= \frac{2}{f_{\text{EPN}}(\mathbf{z}; \tilde{\boldsymbol{\theta}}_n) + 2} \rightarrow 0,\quad \forall \, \mathbf{z} \in \mathbb{R}^d.
\end{equation}
As the asymptotic behavior (\ref{eq:epi-EPN}) holds regardless of whether \(\mathbf{z}\) is drawn from ID or OOD distribution, it implies that the probability of correctly distinguishing between ID and OOD samples based on the uncertainty is almost impossible. In other words, under a specific data distribution on \(\mathbf{z}\) and a two-layer network with parameters $\tilde{\boldsymbol{\theta}}_n$, we have
\begin{equation}
 \mathbb{P}\Bigl(u_{\text{epi}}\bigl(f_{\text{EPN}}(\mathbf{z}_{y=0}; \tilde{\boldsymbol{\theta}}_n)\bigr) > u_{\text{epi}}(f_{\text{EPN}}\bigl(\mathbf{z}_{y \in \{-1, +1\}}; \tilde{\boldsymbol{\theta}}_n\bigr)\Bigr) = 0.
\end{equation}
\end{proof}

\begin{lemma}\label{prop:epn_uce_decrease}
For a constant scalar \(a\) with \(0 < a < 1\) and the digamma function \(\psi(\cdot)\), we define
\[
\upsilon(x;a) = \psi(x+2) - \psi\bigl(a(x+2)\bigr).
\]
Then \(\upsilon(x;a)\) is a monotonically decreasing function of \(x>0\) and 
\begin{equation}\label{eq:low-bound}
    \lim_{x\to +\infty} \upsilon (x;a)  = -\ln\bigl(a\bigr).
\end{equation}
\end{lemma}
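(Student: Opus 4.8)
\textbf{Proof proposal for Lemma~\ref{prop:epn_uce_decrease}.} The plan is to handle the limit and the monotonicity separately: the limit is immediate from the standard asymptotics of the digamma function, while the monotonicity hinges on choosing the right integral representation of the trigamma function.

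For (\ref{eq:low-bound}), I would set $t = x+2$ and use the standard asymptotic $\psi(t) = \ln t + O(1/t)$ (equivalently $\psi(t)-\ln t \to 0$) as $t\to\infty$. Since both $t$ and $at$ tend to $+\infty$ as $x\to+\infty$,
\[
\upsilon(x;a) = \psi(t)-\psi(at) = \bigl(\ln t + o(1)\bigr) - \bigl(\ln a + \ln t + o(1)\bigr) \longrightarrow -\ln a ,
\]
which is exactly the claimed limit.

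For the monotonicity I would prove the slightly stronger statement that $F(t):=\psi(t)-\psi(at)$ is strictly decreasing on $(0,\infty)$; then $\upsilon(x;a)=F(x+2)$ is decreasing in $x>0$ because $x\mapsto x+2$ is increasing. Differentiating, $F'(t)=\psi'(t)-a\,\psi'(at)$ in terms of the trigamma function, so everything reduces to the inequality $\psi'(t) < a\,\psi'(at)$ for $t>0$ and $a\in(0,1)$. A direct attack through Gauss's integral formula for $\psi$ does not work cleanly, since differentiating under the integral sign yields a kernel whose sign varies over the interval of integration. Instead I would use the Laplace-type representation $\psi'(t)=\int_0^\infty \frac{s\,e^{-ts}}{1-e^{-s}}\,ds$, and in the term $a\,\psi'(at)$ substitute $r=as$ to bring both integrals to the common weight $e^{-tr}$:
\[
\psi'(t)-a\,\psi'(at) = \int_0^\infty e^{-ts}\left(\frac{s}{1-e^{-s}} - \frac{s}{a\bigl(1-e^{-s/a}\bigr)}\right)ds .
\]

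It then remains to show the kernel is negative, i.e.\ that $a\bigl(1-e^{-s/a}\bigr) < 1-e^{-s}$ for all $s>0$. Setting $g(s):=(1-e^{-s})-a\bigl(1-e^{-s/a}\bigr)$, one has $g(0)=0$ and $g'(s)=e^{-s}-e^{-s/a}>0$ for $s>0$ (because $0<a<1$ forces $s/a>s$), hence $g>0$ on $(0,\infty)$; this makes the integrand above negative, so $F'(t)<0$ and the proof is complete. I expect the only delicate point to be precisely this step: recognizing that the trigamma integral representation together with the rescaling $r=as$ turns the monotonicity into the elementary one-variable inequality above, whereas the more obvious route via Gauss's formula stalls on a sign-indefinite integrand. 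As a byproduct, the same representation gives $\psi'(t)>1/t$ (since $s/(1-e^{-s})>1$), which combined with the monotonicity of $F$ recovers $\upsilon(x;a)>-\ln a$ as a sharp lower bound, an independent check on the limiting value.
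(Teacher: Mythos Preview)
Your proposal is correct and follows essentially the same route as the paper: both use the Laplace-type integral representation of the trigamma function, perform the rescaling $r=as$ to put both terms over the common kernel $e^{-ts}$, and reduce to the elementary inequality $a(1-e^{-s/a})<1-e^{-s}$ proved via $g(0)=0$ and $g'>0$; the limit is handled identically via the $\psi(t)=\ln t+O(1/t)$ asymptotics. The only extras in your write-up are the remark on why Gauss's formula is the wrong starting point and the sanity-check lower bound $\upsilon>-\ln a$, neither of which appears in the paper but both of which are fine additions.
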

\begin{proof}

First, we compute the derivative of \(\upsilon(x;a)\) with respect to \(x\):
\begin{equation}
\upsilon'(x;a) = \frac{d}{dx}\Bigl[\psi(x+2) - \psi\bigl(a(x+2)\bigr)\Bigr] = \psi^{(1)}(x+2) - a\, \psi^{(1)}\bigl(a(x+2)\bigr),
\end{equation}
where \(\psi^{(1)}(\cdot)\) denotes the trigamma function (the derivative of the digamma function). 
We use the integral representation of the trigamma function: 
\begin{equation}\label{eq:trigamma}
\psi^{(1)}(x)=\int_0^\infty \frac{t\,e^{-xt}}{1-e^{-t}}\,dt,
\end{equation}
which implies that 
\begin{equation}\label{eq:trigamma-a}
a\psi^{(1)}(ax)=a\int_0^\infty \frac{t\,e^{-ax\,t}}{1-e^{-t}}\,dt =a\int_0^\infty \frac{(u/a)\,e^{-ax\,(u/a)}}{1-e^{-u/a}} \cdot \frac{du}{a} = \int_0^\infty \frac{u\,e^{-xu}}{a\Bigl(1-e^{-u/a}\Bigr)}\,du
\end{equation}
with a change of variable by letting $u=at.$ We compare the denominators of the integrands of (\ref{eq:trigamma}) and (\ref{eq:trigamma-a}) with a unified variable $t$ by defining 
\[
h(t) = (1-e^{-t})-a(1-e^{-t/a}).
\]
Clearly, $h(0)=0$ and 
\[
h'(t) =  e^{-t} -e^{-t/a}>0 \quad \forall t>0, 0<a<1.
\]
Therefore, $h(t)\geq 0, \forall t\geq 0.$ It further follows from $t\geq 0, e^{-xt}\geq 0$ and both denominators of the integrands of (\ref{eq:trigamma}) and (\ref{eq:trigamma-a}) are strictly positive that the integrand of  (\ref{eq:trigamma}) is strictly smaller than the one of (\ref{eq:trigamma-a}), which implies that 
$\psi^{(1)}(x) < a\, \psi^{(1)}\bigl(ax\bigr)$ for $x>0.$  As a result, for any constant $a\in (0,1)$, $\upsilon'(x;a)<0$ and hence $\upsilon(x;a)$ is a monotonically decreasing function of $x>0$.   

To show the limit in (\ref{eq:low-bound}), we use the asymptotic expansion of the digamma function, 
\begin{equation}
\psi(x) = \ln x - \frac{1}{2x} + O\Bigl(\frac{1}{x^2}\Bigr) \quad \text{as } x \to +\infty,
\end{equation}
which leads to
\begin{equation*}
\begin{aligned}
    \upsilon(x;a) = &\Bigl(\ln(x+2) - \frac{1}{2(x+2)}\Bigr) - \Bigl(\ln(a(x+2)) - \frac{1}{2a(x+2)}\Bigr)+O\Bigl(\frac{1}{x^2}\Bigr)\\
    =&\Bigl(\ln(x+2) \Bigr) - \Bigl(\ln(a)+\ln(x+2) \Bigr)+O\Bigl(\frac{1}{x}\Bigr) = -\ln(a) +O\Bigl(\frac{1}{x}\Bigr).
\end{aligned}
\end{equation*}
As $x\rightarrow \infty,$ we have $\upsilon(x;a)\rightarrow -\ln(a).$ Due to its monotonicity, $\upsilon(x;a)$ has a lower bound of $-\ln(a).$

\end{proof}

\subsection{Theorem 3}\label{sec: them_3}

For our analysis of the EPN-ICE loss, we start by the optimal class probability vector $\tilde{\mathbf{p}}$ produced by a pre-trained classification model. We first introduce a simplified single-layer classification model in Definition~\ref{def:cnn}, followed by deriving the optimal network parameters trained with cross-entropy loss under the data assumption stated in Assumption~\ref{assum:data}.

\begin{definition}[Single-layer Classification Neural Network] \label{def:cnn}
Let \(\mathcal{Z}\) be a latent space and let \(\mathcal{Y} = \{-1, +1\}\) be the set of class labels. We define a single-layer classification model 
$f_{\text{GNN}}(\bm{\theta}_{\text{GNN}}): \mathcal{Z} \rightarrow [0,1]^2,$
parameterized by \(\bm{\theta}_{\text{GNN}} = (\mathbf{w}_{\text{C}}, b_{\text{C}})\). For any input \(\mathbf{z} \in \mathcal{Z}\), the model computes a single logit:
$v_{\text{C}} = \mathbf{w}_{\text{C}}^\top \mathbf{z} + b_{\text{C}},$
which is then transformed into a probability via the sigmoid function $\sigma(v_{\text{C}}) = \frac{1}{1+e^{-v_{\text{C}}}}.$
The output probability vector is given by
\begin{equation}
    \tilde{\mathbf{p}} = \Bigl[ 1-\sigma(v_{\text{C}}),\; \sigma(v_{\text{C}}) \Bigr],
\end{equation}

where the first component is interpreted as the probability of the class \(y=-1\) and the second as the probability of the class \(y=+1\).
\end{definition}

\begin{proposition}[Optimal Class Probabilities from the Classification Model]
\label{prop:opt_prob}
Assume that the data distribution is given by Assumption~\ref{assum:data} and that an optimal classification model satisfying Definition~\ref{def:cnn} is obtained by minimizing the cross-entropy (CE) loss. Then, for any \(\mathbf{z}\in\mathcal{Z}\), the predicted class probability vector is given by
\begin{equation}\label{eq: opt_cnn}
\tilde{\mathbf{p}}^*(\mathbf{z}) 
=\Biggl[
1-\frac{1}{1+\exp\Bigl(-2\,\boldsymbol{\mu}^\top \boldsymbol{\Sigma}^{-1}\mathbf{z}\Bigr)},
\frac{1}{1+\exp\Bigl(-2\,\boldsymbol{\mu}^\top \boldsymbol{\Sigma}^{-1}\mathbf{z}\Bigr)}
\Biggr].
\end{equation}
\end{proposition}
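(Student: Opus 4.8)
The plan is to reduce the claim to the standard fact that the cross-entropy (log) loss is a strictly proper scoring rule — so that its population minimizer over \emph{all} predictors is the Bayes posterior — and then to observe that for two Gaussians with a common covariance this posterior is already a logistic--linear function of $\mathbf{z}$, hence exactly representable by the single-layer model of Definition~\ref{def:cnn}.

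First I would fix the optimization target. Because the training set contains only ID samples with $\mathbb{P}(Y=-1)=\mathbb{P}(Y=+1)=\tfrac12$, ``minimizing the cross-entropy loss'' means minimizing the population risk
\[
R(\mathbf{w}_{\text{C}},b_{\text{C}})=\mathbb{E}_{\mathbf{z},Y}\!\left[-\log\tilde p_Y(\mathbf{z})\right],
\]
where the expectation is over the equal mixture of $\mathcal{N}(-\bm\mu,\bm\Sigma)$ and $\mathcal{N}(\bm\mu,\bm\Sigma)$, and $\tilde p(\mathbf{z})=[\,1-\sigma(v_{\text{C}}),\,\sigma(v_{\text{C}})\,]$ with $v_{\text{C}}=\mathbf{w}_{\text{C}}^\top\mathbf{z}+b_{\text{C}}$. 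Writing $\eta(\mathbf{z}):=\mathbb{P}(Y=+1\mid\mathbf{z})$ for the true posterior under this distribution, for any value $q\in(0,1)$ (here $q=\sigma(v_{\text{C}}(\mathbf{z}))$) one has the pointwise decomposition
\[
\mathbb{E}_{Y\mid\mathbf{z}}\!\left[-\log\tilde p_Y\right]=H\!\big(\eta(\mathbf{z})\big)+\KL\!\big(\eta(\mathbf{z})\,\|\,q(\mathbf{z})\big),
\]
with $H$ the binary entropy and the second term the KL divergence between the corresponding Bernoulli laws; taking expectations over $\mathbf{z}$ shows $R\ge\mathbb{E}[H(\eta(\mathbf{z}))]$, with equality iff $q(\mathbf{z})=\eta(\mathbf{z})$ for $\mathbf{z}$ almost everywhere (equivalently, since the Gaussian mixture has full support, for all $\mathbf{z}$ after passing to the continuous representative).

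Second, I would compute $\eta$ in closed form. Because the priors are equal, the posterior log-odds equals the Gaussian log-likelihood ratio, and the quadratic terms cancel since the two classes share $\bm\Sigma$:
\[
\log\frac{\eta(\mathbf{z})}{1-\eta(\mathbf{z})}
=-\tfrac12(\mathbf{z}-\bm\mu)^\top\bm\Sigma^{-1}(\mathbf{z}-\bm\mu)
+\tfrac12(\mathbf{z}+\bm\mu)^\top\bm\Sigma^{-1}(\mathbf{z}+\bm\mu)
=2\,\bm\mu^\top\bm\Sigma^{-1}\mathbf{z}.
\]
Hence $\eta(\mathbf{z})=\sigma\!\big(2\,\bm\mu^\top\bm\Sigma^{-1}\mathbf{z}\big)=\big(1+\exp(-2\,\bm\mu^\top\bm\Sigma^{-1}\mathbf{z})\big)^{-1}$, which is of the form $\sigma(\mathbf{w}_{\text{C}}^\top\mathbf{z}+b_{\text{C}})$ with $\mathbf{w}_{\text{C}}^\ast=2\,\bm\Sigma^{-1}\bm\mu$ and $b_{\text{C}}^\ast=0$. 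Thus these parameters attain the population minimum of $R$, and since $\sigma$ is injective and $\mathbf{z}\mapsto\mathbf{w}_{\text{C}}^\top\mathbf{z}+b_{\text{C}}$ is affine while the mixture has full support, any minimizer has the same logit; the optimal probability vector is therefore $[\,1-\eta(\mathbf{z}),\,\eta(\mathbf{z})\,]$, i.e.\ exactly (\ref{eq: opt_cnn}).

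The only step that genuinely uses the structure of the problem — and hence the main thing to get right — is the realizability check: the proper-scoring-rule argument alone yields the Bayes posterior as the \emph{unconstrained} optimum, so one must verify that the single-layer (logistic) model is expressive enough to represent it, which is exactly where the shared-covariance assumption is essential (it makes the log-odds affine in $\mathbf{z}$). A minor bookkeeping point is that $\bm\Sigma$ must be invertible — implicit in writing $\bm\Sigma^{-1}$ — so that $\mathbf{w}_{\text{C}}^\ast$ is well defined and the parameters, not just the logit, are uniquely determined.
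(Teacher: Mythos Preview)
Your proposal is correct and follows essentially the same route as the paper: identify the CE minimizer with the Bayes posterior, then compute the Gaussian log-likelihood ratio $2\,\bm\mu^\top\bm\Sigma^{-1}\mathbf{z}$ to obtain the stated probability vector. Your version is in fact more careful than the paper's, which simply asserts that ``under the optimality condition \ldots\ the logit must satisfy $v_{\text{C}}^*=\sigma^{-1}(\mathbb{P}(y=+1\mid\mathbf{z}))$'' without the entropy--KL decomposition or the explicit realizability check you provide.
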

\begin{proof}
For a given training pair \((\mathbf{z}, y)\) with \(y \in \{-1, +1\}\), the CE loss is defined as
\begin{equation}
    \ell_{\text{CE}}(\mathbf{z}, y; \bm{\theta}_{\text{GNN}}) 
= -\mathbb{I}\{y = +1\}\log\bigl(\sigma(v_{\text{C}})\bigr)
-\mathbb{I}\{y = -1\}\log\bigl(1-\sigma(v_{\text{C}})\bigr),
\end{equation}
which can equivalently be written as
\begin{equation}
    \ell_{\text{CE}}(\mathbf{z}, y; \bm{\theta}_{\text{GNN}}) 
= \log\Bigl(1+\exp\bigl(-y\,v_{\text{C}}\bigr)\Bigr),
\end{equation}
where \(v_{\text{C}} = \mathbf{w}_{\text{C}}^\top\mathbf{z} + b_{\text{C}}\), which is commonly called \textit{logits}.
Under the optimality condition (i.e., when the CE loss is minimized) and the assumption that \( \mathbb{P}(y = +1) = \mathbb{P}(y = -1) \), the logit must satisfy
\begin{equation}
	\begin{aligned}
		v_{\text{C}}^* &= \sigma^{-1}\big(\mathbb{P}(y = +1 \mid \mathbf{z})\big) = \log\left( \frac{\mathbb{P}(y = +1 \mid \mathbf{z})}{\mathbb{P}(y = -1 \mid \mathbf{z})} \right) \\
		&= \log\left( \frac{\mathbb{P}(\mathbf{z} \mid y = +1) \mathbb{P}(y = +1)}{\mathbb{P}(\mathbf{z} \mid y = -1) \mathbb{P}(y = -1)} \right) \\
		&= \log\left( \frac{\mathbb{P}(\mathbf{z} \mid y = +1)}{\mathbb{P}(\mathbf{z} \mid y = -1)} \right).
	\end{aligned}
\end{equation}
Substituting the Gaussian distributions into the above expression gives
\begin{equation}
	\begin{aligned}
		v_{\text{C}}^* = -\tfrac{1}{2} (\mathbf{z} - \bm{\mu})^\top \bm{\Sigma}^{-1} (\mathbf{z} - \bm{\mu}) + \tfrac{1}{2} (\mathbf{z} + \bm{\mu})^\top \bm{\Sigma}^{-1} (\mathbf{z} + \bm{\mu}) = 2 \bm{\mu}^\top \bm{\Sigma}^{-1} \mathbf{z}.
	\end{aligned}
\end{equation}

It follows that the optimal predicted probability for the positive class is
\begin{equation}
\tilde{p}_{y=+1}^* (\mathbf{z}) 
= \sigma\bigl(v_{\text{C}}^*(\mathbf{z})\bigr)
= \frac{1}{1+\exp\Bigl(-2\,\bm{\mu}^\top \bm{\Sigma}^{-1}\mathbf{z}\Bigr)},
\end{equation}
and, consequently, for the negative class, one has
\begin{equation}
\tilde{p}_{y=-1}^* (\mathbf{z}) 
= 1 - \tilde{p}_{y=+1} (\mathbf{z})
= 1 - \frac{1}{1+\exp\Bigl(-2\,\bm{\mu}^\top \bm{\Sigma}^{-1}\mathbf{z}\Bigr)}.
\end{equation}
This completes the proof.
\end{proof}

To enable a theoretical analysis similar to that of the ENN (Theorem~\ref{thm:1}), we construct a specific EPN architecture with its weight and bias matrices given by:
\begin{equation}\label{eq: cons_epn}
\mathbf{W}^{[1]} = [\mathbf{w}_{\text{P}}, -\mathbf{w}_{\text{P}}]^\top, \quad \mathbf{b}^{[1]} = [b_{\text{P}}, -b_{\text{P}}]^\top, \mathbf{w}^{[2]} = {\bf 1}\in\mathbb R^{2}, b^{[2]} = 0
\end{equation}
where \( \mathbf{w}_{\text{P}} \in \mathbb{R}^d, b_{\text{P}}\in \mathbb{R} \).  %
Based on the two-layer EPN architecture defined in (\ref{eq: epn_arch}), the hidden representation \(\mathbf{q}\) is defined by:
\begin{equation}
\mathbf{q} = \exp \bigl(\mathbf{W}^{[1]} \mathbf{z}
+ \mathbf{b}^{[1]}\bigr) = \Bigl[ \exp\bigl(\mathbf{w}_{\text{P}}^\top \mathbf{z} + b_{\text{P}}\bigr),  \exp \bigl(-\mathbf{w}_{\text{P}}^\top \mathbf{z} - b_{\text{P}}\bigr) \Bigr].
\end{equation}
The EPN output is, 
\begin{equation}
    e_{\text{total}} = f_{\text{EPN}}(\mathbf{z}; \vec{\boldsymbol{\theta}}_{\text{EPN}}) = \exp(\mathbf{w}_{\text{P}}^\top \mathbf{z} + b_{\text{P}}) + \exp(-\mathbf{w}_{\text{P}}^\top \mathbf{z} - b_{\text{P}}) = 2\cdot \cosh(\mathbf{w}_{\text{P}}^\top \mathbf{z} + b_{\text{P}}).
\end{equation}

\begin{theoremref}{\ref{thm:EPN-ICE}}
Given a well-trained classification model producing the class probabilities $[\tilde{\mathbf{p}}^i]_{i \in \mathbb{V}}$ in (\ref{eq: opt_cnn}) and constructed EPN parameters in (\ref{eq: cons_epn}), we have that for any \(\epsilon > 0\), there exists a positive constant \(\digamma > 0\) such that, for any data distribution satisfying Assumption~\ref{assum:data} with \(\|\bm{\mu}\|_2 > \digamma \), the probability that the epistemic uncertainty obtained by an optimal two-layer EPN solely based on the ICE loss, correctly distinguishes ID and OOD samples is greater than \(1 - \epsilon\).
\end{theoremref}

\begin{proof}
Under the binary classification setting ($C=2$), the ICE regularization term is defined by
 \begin{equation}
    \ell^i_{\text{ICE}}(e_{\text{total}}^i, \mathbf{y}^i; \vec{\boldsymbol{\theta}}_{\text{EPN}}) 
    = \left\lVert (C + e_{\text{total}}^i) \cdot \tilde{\mathbf{p}}^i - \mathbf{q}^i \right\rVert_2^2.
 \end{equation}
 For simplicity, we denote $m = \mathbf{w}_{\text{P}}^\top \mathbf{z} + b_{\text{P}} \in \mathbb{R}$, $l = 2\,\boldsymbol{\mu}^\top\boldsymbol{\Sigma}^{-1}\mathbf{z} \in \mathbb{R} $, then we have 
\begin{equation}
\begin{aligned}
 \ell_{\text{EPN, ICE}}(\mathbf{z}; \mathbf{w}_{\text{P}}, b_{\text{P}}) = \Biggl\| \Bigl(2 + 2\cosh(m)\Bigr)
\begin{bmatrix} 
1-\dfrac{1}{1+\exp\bigl(-l\bigr)} \\
\dfrac{1}{1+\exp\bigl(-l\bigr)}
\end{bmatrix} 
\quad - 
\begin{bmatrix} 
1+\exp\bigl(-m\bigr) \\
1+\exp\bigl(m\bigr)
\end{bmatrix} \Biggr\|_2^2 .
\end{aligned}
\end{equation}

Recall our training data distribution, 
\begin{equation}
    p(\mathbf{z}) = \frac{1}{2}\,\mathcal{N}(\boldsymbol{\mu},\boldsymbol{\Sigma})
+ \frac{1}{2}\,\mathcal{N}(-\boldsymbol{\mu},\boldsymbol{\Sigma}).
\end{equation}

The ICE loss achieves its minimum (zero) when the vector inside the norm vanishes, which occurs if
\begin{align}
        \Bigl(2+2\cosh(m)\Bigr)\left(1-\frac{1}{1+\exp(-l)}\right) &= 1+\exp(-m) \label{eq:first}\\
        \Bigl(2+2\cosh(m)\Bigr)\frac{1}{1+\exp(-l)} &= 1+\exp(m) \label{eq:second}.
\end{align}
Notice that
$
    2+2\cosh(m)=\bigl(1+\exp(-m)\bigr)+\bigl(1+\exp(m)\bigr).
$
Taking the ratio of (\ref{eq:first}) and (\ref{eq:second}), which is valid when neither one is zero, yields
\begin{equation}\label{eq:temp}
\frac{1-\frac{1}{1+\exp(-l)}}{\frac{1}{1+\exp(-l)}}=\frac{1+\exp(-m)}{1+\exp(m)}.
\end{equation}
Simple calculations show that the left-hand side of (\ref{eq:temp}) equals $\exp(-l)$ and the right-hand side equals $\exp(-m),$  implying that $m=l.$ Consequently, we should have  that
\begin{equation}
\mathbf{w}_{\text{P}}^\top\,\mathbf{z}+b_{\text{P}} = 2\,\boldsymbol{\mu}^\top\boldsymbol{\Sigma}^{-1}\mathbf{z},\quad \text{for all } \mathbf{z},
\end{equation}
which holds if and only if
\begin{equation}
\mathbf{w}_{\text{P}}^\top = 2\,\boldsymbol{\mu}^\top\boldsymbol{\Sigma}^{-1} \quad \text{and} \quad b_{\text{P}} = 0.
\end{equation}

Therefore, the analytical solution that minimizes the  CE loss is
\begin{equation} \label{eq:optim_ice}
\mathbf{w}_{\text{P}}^* = 2\,\boldsymbol{\Sigma}^{-1}\boldsymbol{\mu},\quad b_{\text{P}}^* = 0.
\end{equation}

Since the linear relationship of the optimal solutions in (\ref{eq:optim_ice}) is analogous to (\ref{eq: optim_enn}) in Proposition~\ref{prop:opt_enn}, the proof of Theorem~\ref{thm:EPN-ICE} is analogous to that of Theorem~\ref{thm:1}, thus omitted.
\end{proof}

\section{Experimental setup}
\subsection{Dataset details}\label{sec:datasets}

\paragraph{Statistics} We utilize three citation networks: CoraML, CiteSeer, and PubMed ~\citep{bojchevski2017deep}, along with two co-purchase Amazon datasets, namely Computers and Photos ~\citep{shchur2018pitfalls}. Additionally, we incorporate two coauthor datasets, Coauthor CS and Coauthor Physics \citep{shchur2018pitfalls}, as well as a large-scale dataset, OGBN-Arxiv ~\citep{hu2020open}. All datasets are taken from PyTorch Geometric \citep{fey2019fast}. 

We follow the dataset split in~\cite{zhao2020uncertainty}. In detail, for all the datasets except the OGBN-Arxiv, we use the default split of 20 training samples per class. We use 20\% nodes for testing, and then the remaining for validation (i.e., close to 80\%). For the OGBN-Arxiv, we use the public split. To avoid the randomness,  we report the results as averages over 5 random model initializations and 5 data splits. Table \ref{tab:dataset} provides a detailed summary of the datasets and the number of categories used for out-of-distribution (OOD) detection.
\begin{table*}[htbp!]
\scriptsize
\centering
\caption{Dataset Statistics}
\vspace{1mm}
\begin{tabular}{p{2.4cm}|ccc|cc|cc|c}
\toprule
        & CoraML & CiteSeer & PubMed & Amazon& Amazon & Coauthor&  Coauthor& OGBN-Arxiv \\ 
        &&& & Computers & Photo & CS & Physics& \\
\midrule
\# Nodes             & 2,995 & 4,230 & 19,717 & 13,752 & 7,650 & 18,333 & 34,493 & 169,343 \\
\# Edges             & 16,316 & 10,674 & 88,648 & 491,722 & 238,162 & 163,788 & 495,924 & 2,315,598 \\
\# Features        & 2,879  & 602   & 500    & 767    & 745   & 6,805  & 8,415  & 128     \\
\# Classes           & 7     & 6     & 3      & 10     & 8     & 15     & 5      & 40      \\
\# Train nodes & 140 & 120 & 60 & 20 & 160 & 300 & 100 & 91,445 \\
\# Left out classes  & 3     & 2     & 1      & 5      & 3     & 4      & 2      & 15      \\ 
\bottomrule
\end{tabular}
\label{tab:dataset}
\end{table*}

\subsection{Evaluation}\label{sec:eva}

\paragraph{Classification} We assess the node-level classification performance on the original graphs, reporting accuracy (\textit{ACC}). Additionally, we evaluate the model's calibration performance using the Brier Score (\textit{BS}) and Expected Calibration Error (\textit{ECE}).

\paragraph{Misclassification Detection} This evaluation is conducted on a clean graph by comparing the model's predictions with the ground truth labels. Misclassification detection is framed as a binary classification task, where misclassified samples are treated as positive instances and correctly classified samples as negative instances. Aleatoric uncertainty is utilized as the scoring metric, and we report AUCROC and the AUCPR.

\textbf{OOD Detection} Similar to misclassification detection, OOD detection is approached as a binary classification task, where out-of-distribution (OOD) data serve as the positive instances and in-distribution (ID) data as the negative instances. Epistemic uncertainty is used as the scoring metric for AUROC and AUPR. 

To construct the OOD data, we adopt the Left-Out-Classes (LOC) setting,  where nodes from predefined OOD classes are excluded from the training set and are reintroduced during testing. We consider five OOD class selection settings, as shown in Table \ref{tab: loc}. The OS-1 setting aligns closely with the setup used in \citet{zhao2020uncertainty, stadler2021graph}, while OS-2 mirrors that of \citet{wu2023energy}. Results for each setting are reported individually, as well as the averaged performance across all five settings.

\begin{table*}[htbp]
\caption{Dataset details summarizing the number of classes,  let-out-classes for 5 different OOD settings}
\label{tab: loc}
\vskip 0.15in
\begin{center}
\begin{scriptsize}
\begin{tabular}{p{0.15\linewidth} |p{0.08\linewidth} |p{0.1\linewidth} | p{0.1\linewidth}| p{0.1\linewidth}| p{0.1\linewidth}| p{0.1\linewidth}}
\toprule
Dataset & Categories classes& OS-1  & OS-2 & OS-3 & OS-4 & OS-5\\
\midrule
CoraML    & 7 & [4, 5, 6]  & [0,1,2]  & [0, 2, 4] & [1, 3, 5] & [3, 4, 5] \\
CiteSeer & 6 & [4,5] & [0,1]  & [1, 2] & [3, 4]  & [0, 5] \\
PubMed   & 3  & [2] & [0] & [1] & - & - \\
AmazonPhoto    & 8 & [5, 6, 7]& [0, 1, 2] & [3, 4, 5] & [1, 4, 6] & [2, 3, 7] \\
AmazonComputers   & 10 & [5, 6, 7, 8, 9] & [0, 1, 2, 3, 4] & [2, 3, 4, 5, 7] & [1,2,3,6,7] & [2, 4, 5, 8 ,9]\\
CoauthorCS   & 15 &  [11, 12, 13, 14] & [0,1,2,3] & [1,2,9,12] & [3, 6, 10, 13] & [2,3, 6, 10]\\
CoauthorPhysics    & 5 &  [3,4]  & [0,1] & [2,3] & [0,4] & [1, 2] \\ 
OGBN-Arxiv & 40 & [25,  26,  27,  28, 29, 30, 31, 32, 33, 34, 35, 36, 37, 38, 39] &
[0, 1, 2, 3, 4, 5, 6, 7, 8, 9, 10, 11, 12, 13, 14] &
[0,  1,  5,  9, 10, 11, 15, 24, 25, 30, 32, 33, 34, 36, 38] &
[4,  7, 12, 15, 16, 17, 18, 21, 22, 25, 26, 28, 29, 31, 32] &
[5,  9, 10, 11, 12, 13, 19, 22, 24, 25, 29, 34, 35, 36, 37] \\
\bottomrule
\end{tabular}
\end{scriptsize}
\end{center}
\vskip -0.1in
\end{table*}

\subsection{Model Details} \label{sec: baselines}
We follow the choice of the architectures in \citet{stadler2021graph}. By default, we use a hidden dimension of $h=64$ two-layer GCN as the basic architecture for all datasets, except that we use three layers of GCN with a hidden dimension of $[256, 128]$, as well as batch norm. We use the early-stopping strategy with the validation loss as the monitor metric and patience of 50 for all datasets except 200 for OGBN-Arxiv, and we select the model with the best validation loss. If not specified explicitly, we use the Adam optimizer with a learning rate of 0.001 and weight decay of 0.0001 for all models. 

\textbf{VGNN-entropy/VGNN-max\_score}: We use the vanilla GCN trained with cross-entropy loss. Following the work of \cite{hendrycks2016baseline}, we use the softmax probability to derive the uncertainty.
\begin{align}
    \text{VGNN-entropy}: u_{\text{alea}} = u_{\text{epi}} = \mathbb{H}(\bm p) \\
    \text{VGNN-max\_score}:  u_{\text{alea}} = u_{\text{epi}} = 1 - \max \bm p .
\end{align}

\textbf{VGNN-energy/VGNN-gnnsafe}: \cite{liu2020energy} proposed to use the energy score to distinguish the ID and OOD, and \cite{wu2023energy} extended the energy method to the graph domain and added label propagation on the graph. We do not consider the pseudo-OOD in the training stage, and hence, we do not consider the regularized learning in these two works. 
\begin{align}
      \text{VGNN-energy}:  u_{\text{alea}} &= u_{\text{epi}}= E = -T \sum_c \exp^{l_c/T} \\
      \text{VGNN-gnnsafe}:  u_{\text{alea}} &= u_{\text{epis}} = E^K \\ 
      E^{k} &= \gamma E^{k-1} + (1 - \gamma)\mathbf{D}^{-1}\mathbf{A}E^{k-1},
\end{align}
where $E$ is the energy score, $l_c$ is the logit corresponding to class $c$, and we use $T=1$ as the temperature, $\gamma=0.2$, the number of propagation iterations is $K=2$. 

\textbf{VGNN-dropout/VGNN-ensemble}: \citep{gal2016dropout} propose to use MC-dropout to capture the uncertainty. We use a dropout probability of 0.5 and evaluate the model 10 times to capture the uncertainty. For the ensemble model \citep{lakshminarayanan2017simple}, we train 10 models with different weight initialization.  We use the 1 minus max score of expected class probabilities as the uncertainty estimation. 

\textbf{GPN}: It is an end-to-end learning with three steps. First, a multi-layer perception is used to encode the raw node features $\bm x$ to latent space $\mathbf{z}$. Then Normalizing Flows are used to fit the class-wise densities on the latent space, followed by multiplying a certainty budget to get non-negative evidence $\boldsymbol{\alpha}$. Lastly, an APPNP network is used to propagate the evidence through the graph and $\boldsymbol{\alpha}^{\text{agg}}$ is used to estimate the uncertainties and class probabilities based on subjective logic theory. 
\begin{align}
    u_{\text{alea}} = -\max \frac{\alpha^{\text{agg}}_c}{\sum_c \alpha^{\text{agg}}_c } \quad \text{and} \quad
    u_{\text{epi}} = \frac{C}{\sum_c \alpha^{\text{agg}}_c }.
\end{align}
We follow the experimental setup described in the GPN paper by \citet{stadler2021graph}. We use warmup for Normalizing Flows for 5 epochs with a learning rate 1e-3. For all datasets except OGBN-Arxiv, we use a two-layer MLP with a hidden dimension of 16. For OGBN-Arxiv, we use a three-layer MLP with hidden dimensions of 256 and 128. In terms of propagation, we employ 10 power-iteration steps.
For the CoraML, CiteSeer, PubMed, CoauthorCS, and CoauthorPhysics datasets, we use a latent dimension of 16, a dropout rate of 0.5, a teleport probability of 0.1 in APPNP, an entropy regularization weight of 1.0e-03, a weight decay of 0.001, and a certainty budget of $\sqrt{4\pi}^{16}$.

For the AmazonPhoto and AmazonComputers datasets, the latent dimension is set to 10, with a dropout rate of 0.5, a teleport probability of 0.2 in APPNP, an entropy regularization weight of 1.0e-05, a weight decay of 0.0005, and a certainty budget of $C\sqrt{4\pi}^{10}$.

For the OGBN-Arxiv dataset, we use a latent dimension of 16, a dropout rate of 0.25, a teleport probability of 0.2 in APPNP, an entropy regularization weight of 1.0e-05, no weight decay, and a certainty budget of $\sqrt{4\pi}^{16}$. Unlike the original paper, we do not apply BatchNorm due to instability issues.

\textbf{SGNN-GKDE}: For the probability teacher, we use the two-layer GCN architecture with a hidden dimension of 64. For the alpha teacher, i.e., graph kernel Dirichlet estimation, we use 10 as the distance cutoff and sigma is 1. For the backbone, we use the two-layer GCN with a hidden dimension of 16. When discussing the loss function, the probability teacher has a weight $\min(1, t/200)$ and an alpha teacher weight of 0.1, as reported in the paper. Notably, the loss function is defined by
\begin{equation}\label{eq:sgcnloss}
 \sum_{c=1}^{C} \left( \left(y_j - \bar{p}_j\right)^2 + \frac{p_j(1-p_j)}{\sum_c\alpha_c + 1} \right) + \lambda_1 \text{KL}\left[\text{Dir}(\boldsymbol{\alpha}) \parallel \text{Dir}(\hat{\boldsymbol{\alpha}})\right] + \lambda_2 \text{KL}\left(\bm p \parallel \hat{\bm p} \right),
\end{equation}
where $\boldsymbol{\alpha}$ is the model prediction, $\hat{\boldsymbol{\alpha}}$ is the GKDE prior, $\bar{\bm{p}}=\bm{\alpha}/\sum_c \alpha_c$, and $\hat{\bm p}$ is the probability from teacher. The first component is the expected mean square loss described in~\cite{sensoy2018evidential}. 

\textbf{EGNN} We directly extend EDL~\citep{sensoy2018evidential} to the graph domain with the expected cross-entropy loss described in~\cite{sensoy2018evidential} associated with the entropy regularization used in~\cite{charpentier2020posterior, stadler2021graph}, i.e.,
\begin{equation}
    \sum_{c=1}^C y_c\left (\psi(\sum_c \alpha_c) - \psi(\alpha_c) \right) + \text{KL}(\boldsymbol{\alpha} \parallel \bm 1),
\end{equation}
where $\psi$ is the digamma function.

\textbf{EPN.} The model architecture of our proposed model is presented in Figure ~\ref{fig:model_archi}. Given an attributed graph $\mathcal{G} = (\mathbb{V}, \mathbf{A}, \mathbf{X}, \mathbf{y}_{\mathbb{L}})$, the pretrained GNN (bottom left) produces node-level class probability vectors $[\tilde{\mathbf{p}}^i]_{i \in \mathbb{V}}$ and node-level embeddings $[\mathbf{z}^i]_{i \in \mathbb{V}}$. These embeddings are passed through an MLP head within the EPN to generate node-level total evidence $[e^i_{\text{total}}]_{i \in \mathbb{V}}$. This total evidence is combined with the class probability vectors $[\tilde{\mathbf{p}}^i]_{i \in \mathbb{V}}$ to compute node-level Dirichlet parameters $[\boldsymbol{\alpha}^i]_{i \in \mathbb{V}}$ using the relationship $\boldsymbol{\alpha}^i = (C + e_{\text{total}}^i) \cdot \tilde{\mathbf{p}}^i$. A label propagation layer smooths these Dirichlet parameters across the graph to obtain aggregated parameters $[\boldsymbol{\alpha}^i_{\text{agg}}]_{i \in \mathbb{V}}$. These outputs enable computation of node-level epistemic uncertainty ($u^i_{\text{epi}}$) for out-of-distribution (OOD) detection and node-level aleatoric uncertainty ($u^i_{\text{alea}}$) for misclassification detection.

To enhance the EPN's predictive performance, two regularization terms are introduced. The Intra-Class Evidence-based (ICE) regularization term minimizes the distance between the learned hidden representations $[\mathbf{q}^i]_{i \in \mathbb{V}}$ from the last hidden layer of EPN and the predicted class-level evidence vectors $[\boldsymbol{\alpha}^i]_{i \in \mathbb{V}}$, promoting alignment between evidence and class probabilities. Additionally, the Positive-Confidence Learning (PCL) term encourages the model to enforce high evidence for in-distribution (ID) nodes and low evidence for OOD nodes, balancing the predictive uncertainty. This design, alongside the Uncertainty-Cross-Entropy (UCE) loss, ensures effective uncertainty calibration for both ID and OOD scenarios.

In our experiment, we use a two-layer MLP as the architecture of EPN, as defined in Equation (\ref{eq: epn_arch}). Furthermore, we set that $\mathbf{w}^{[2]} = \mathbf{1}, b^{[2]}=0$. 
\begin{figure}
    \centering
    \includegraphics[width=0.99\linewidth]{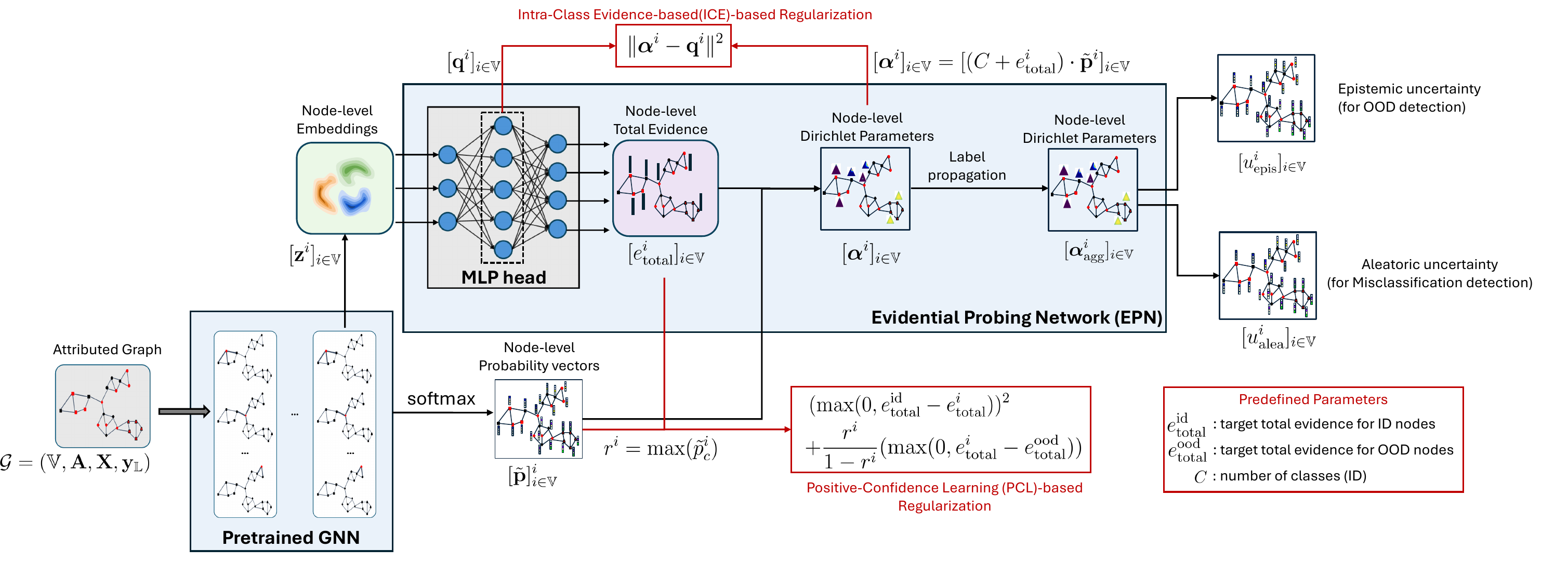}
    \caption{The flow chart of our proposed EPN network and its interaction with the two regularization terms, including the Intra-Class Evidence-based (ICE) and Positive-Confidence Learning (PCL).}
    \label{fig:model_archi}
\end{figure}

\textbf{Uncertainty propagation:} Motivate by the evidence propagation in~\cite{stadler2021graph} and energy propagation in~\cite{wu2023energy}, we propose two variants for the EGCN model. 
\begin{align}
    \text{vacuity-prop}: & \ \alpha_0^k = \gamma^1 \alpha_0^{k-1} + (1 - \gamma^1)\mathbf{D}^{-1}\mathbf{A}\alpha_0^{k-1} \\
    \text{evidence-prop}: & \  \boldsymbol{\alpha}^{k} = (1 - \gamma^2) \mathbf{\hat{D}}^{-1/2}\mathbf{\hat{A}} \mathbf{\hat{D}}^{-1/2} \boldsymbol{\alpha}^{(k-1)} + \gamma^2 \boldsymbol{\alpha}^0 .
\end{align}
We also consider one variant that considers both class-wise evidence propagation and vacuity propagation. Following these techniques proposed in the original paper,  we use 10 iterations for the propagation of evidence with $\gamma=0.1$ and 2 iterations with $\gamma^2=0.5$. 

\subsection{Hyperparameter selection} 
For the baseline models, we adopt the default hyperparameters as specified in their original papers or associated code repositories. In our model, there are two key hyperparameters in our loss function, i.e., $\lambda_1$ for the ICE regularization and $\lambda_2$ for the PCL regularization. 
We determine these values using AUROC for misclassification detection. For OOD detection, we employ the OS-1 setting to optimize the hyperparameters based on AUROC and subsequently apply them across other OS settings. 

\section{Additional Experiments}\label{sec: add_exp}

\subsection{Misclassification detection}
We report evaluation results on the clean graph, covering classification accuracy, calibration performance (in terms of BS and ECE), and misclassification detection. Based on the ranking across datasets as shown in Table  \ref{tab:mis_rank}, our model demonstrates superior misclassification detection performance. As reflected in the detailed metrics in Table \ref{tab:mis_1} and Table \ref{tab:mis_2},  the performance differences among models are minimal.
\begin{table}[ht]
    \centering
    \small
    \setlength{\tabcolsep}{10pt}
    \resizebox{0.9\textwidth}{!}{%
    \begin{tabular}{@{}p{1cm}l|ccc|cc@{}}
    \hline
    \multirow{2}{*}{\textbf{Dataset}} & \multirow{2}{*}{\textbf{Model}} & \multicolumn{5}{c}{\textbf{Clean Graph}} \\ 
    ~ & ~ & clean-acc$\uparrow$ & BS$\downarrow$ & ECE$\downarrow$ & MIS\_ROC$\uparrow$ & MIS\_PR$\uparrow$  \\ \hline
    \multirow{14}{*}{\rotatebox[origin=c]{90}{\textbf{CoraML}}\hspace{0.5cm}} 
    ~ & VGNN-entropy & 79.99 ± 3.97 & 29.31 ± 4.04 & 6.44 ± 1.30 & 80.90 ± 1.54 & 48.76 ± 7.63 \\
    
    ~ & VGNN-max-score & 79.99 ± 3.97 & 29.31 ± 4.04 & 6.44 ± 1.30 & 82.29 ± 1.41 & 51.76 ± 6.57 \\
    
    ~ & VGNN-energy & 79.99 ± 3.97 & 29.31 ± 4.04 & 6.44 ± 1.30 & 78.51 ± 1.99 & 45.69 ± 7.61 \\
    
    ~ & VGNN-gnnsafe & 79.99 ± 3.97 & 29.31 ± 4.04 & 6.44 ± 1.30 & 80.15 ± 1.41 & 45.54 ± 6.52 \\
    
    ~ & VGNN-dropout & 81.62 ± 1.86 & 27.11 ± 2.02 & \textcolor{blue}{6.38 ± 1.27} & 82.32 ± 0.83 & 49.18 ± 4.61 \\
    
    ~ & VGNN-ensemble & 79.68 ± 0.96 & 29.35 ± 0.82 & \textcolor{red}{6.20 ± 1.02} & 81.37 ± 1.32 & 51.53 ± 3.20 \\
    
    ~ & GPN & 77.74 ± 1.77 & 33.60 ± 1.62 & 12.00 ± 2.31 & 82.30 ± 1.54 & 55.72 ± 4.12 \\
    
    ~ & SGNN-GKDE & 40.11 ± 27.02 & 76.49 ± 14.69 & 19.74 ± 17.44 & 74.12 ± 8.04 & \textcolor{red}{75.59 ± 17.39} \\
    
    ~ & EGNN & 82.18 ± 1.00 & 26.89 ± 0.85 & 7.79 ± 1.60 & 83.08 ± 1.29 & 51.17 ± 3.65 \\
    
    ~ & EGNN-vacuity-prop & \textcolor{blue}{82.48 ± 1.16} & \textcolor{blue}{26.36 ± 1.08} & 7.70 ± 1.34 & 83.84 ± 1.44 & 51.31 ± 4.63 \\
    
    ~ & EGNN-evidence-prop & 81.09 ± 1.16 & 30.03 ± 0.86 & 15.22 ± 1.27 & \textcolor{blue}{84.83 ± 1.39} & 54.87 ± 4.55 \\
    
    ~ & EGNN-vacuity-evidence-prop & 81.17 ± 0.81 & 29.83 ± 1.02 & 15.54 ± 1.30 & \textcolor{red}{84.97 ± 0.97} & 55.83 ± 4.33 \\
    
    ~ & EPN & \textcolor{red}{83.22 ± 0.56} & \textcolor{red}{25.21 ± 0.31} & 8.35 ± 0.88 & 84.42 ± 0.51 & \textcolor{blue}{56.23 ± 1.78} \\
    \rowcolor{gray!30}
    ~ & EPN-reg & 81.78 ± 0.74 & 26.99 ± 0.74 & 6.69 ± 1.22 & 83.89 ± 1.66 & 52.03 ± 5.03 \\
    \hline
    \multirow{14}{*}{\rotatebox[origin=c]{90}{\textbf{CiteSeer}}\hspace{0.5cm}} 
    ~ & VGNN-entropy & 83.34 ± 0.90 & 24.77 ± 1.41 & 4.63 ± 0.91 & 83.30 ± 1.32 & 50.92 ± 3.81 \\
    
    ~ & VGNN-max-score & 83.34 ± 0.90 & 24.77 ± 1.41 & 4.63 ± 0.91 & 84.31 ± 1.19 & 53.71 ± 3.58 \\
    
    ~ & VGNN-energy & 83.34 ± 0.90 & 24.77 ± 1.41 & 4.63 ± 0.91 & 81.90 ± 1.36 & 49.60 ± 3.96 \\
    
    ~ & VGNN-gnnsafe & 83.34 ± 0.90 & 24.77 ± 1.41 & 4.63 ± 0.91 & 84.85 ± 1.09 & 54.91 ± 3.83 \\
    
    ~ & VGNN-dropout & 83.90 ± 1.48 & 24.62 ± 1.46 & 4.89 ± 0.96 & 82.81 ± 1.43 & 46.81 ± 4.87 \\
    
    ~ & VGNN-ensemble & 83.59 ± 1.77 & 25.09 ± 1.76 & 4.94 ± 0.96 & 82.02 ± 1.53 & 43.63 ± 6.80 \\
    
    ~ & GPN & 84.82 ± 1.04 & \textcolor{red}{22.33 ± 1.54} & 4.27 ± 1.00 & \textcolor{blue}{86.97 ± 1.75} & 55.12 ± 5.86 \\
    
    ~ & SGNN-GKDE & 82.90 ± 0.91 & 45.38 ± 0.98 & 38.27 ± 1.05 & 83.55 ± 1.82 & 53.18 ± 3.29 \\
    
    ~ & EGNN & 82.93 ± 1.18 & 25.06 ± 1.44 & \textcolor{blue}{4.06 ± 0.63} & 85.94 ± 1.35 & \textcolor{blue}{56.52 ± 4.17} \\
    
    ~ & EGNN-vacuity-prop & 83.55 ± 0.97 & 24.26 ± 1.19 & 4.69 ± 0.97 & 85.78 ± 1.11 & 54.59 ± 4.19 \\
    
    ~ & EGNN-evidence-prop & \textcolor{blue}{85.00 ± 1.25} & 22.80 ± 1.06 & 4.42 ± 1.37 & 84.74 ± 1.92 & 51.27 ± 5.47 \\
    
    ~ & EGNN-vacuity-evidence-prop & \textcolor{red}{85.71 ± 1.23} & \textcolor{blue}{22.29 ± 0.61} & 5.33 ± 1.80 & 83.96 ± 1.72 & 48.49 ± 6.50 \\
    
    ~ & EPN & 83.41 ± 0.42 & 24.08 ± 0.45 & \textcolor{red}{3.99 ± 0.51} & \textcolor{red}{87.68 ± 0.26} & \textcolor{red}{59.88 ± 2.79} \\
    \rowcolor{gray!30}
    ~ & EPN-reg & 84.55 ± 0.96 & 23.18 ± 0.98 & 4.38 ± 0.82 & 85.60 ± 1.67 & 52.25 ± 5.43 \\
    \hline
    \multirow{14}{*}{\rotatebox[origin=c]{90}{\textbf{PubMed}}\hspace{0.5cm}} 
    ~ & VGNN-entropy & 78.25 ± 1.98 & 31.40 ± 2.09 & 3.95 ± 1.29 & 72.46 ± 1.65 & 39.26 ± 2.61 \\
    
    ~ & VGNN-max-score & 78.25 ± 1.98 & 31.40 ± 2.09 & 3.95 ± 1.29 & 74.33 ± 1.35 & 42.06 ± 2.56 \\
    
    ~ & VGNN-energy & 78.25 ± 1.98 & 31.40 ± 2.09 & 3.95 ± 1.29 & 67.75 ± 2.15 & 36.05 ± 2.57 \\
    
    ~ & VGNN-gnnsafe & 78.25 ± 1.98 & 31.40 ± 2.09 & 3.95 ± 1.29 & 68.99 ± 2.04 & 35.17 ± 2.45 \\
    
    ~ & VGNN-dropout & 78.69 ± 1.42 & 30.73 ± 1.72 & 2.86 ± 0.54 & 72.85 ± 1.35 & 39.53 ± 1.28 \\
    
    ~ & VGNN-ensemble & 78.16 ± 1.07 & 31.13 ± 1.10 & 2.88 ± 0.66 & 74.25 ± 2.40 & 42.97 ± 3.63 \\
    
    ~ & GPN & 80.06 ± 1.90 & 29.21 ± 2.82 & 3.47 ± 1.30 & 76.22 ± 2.66 & 42.74 ± 1.72 \\
    
    ~ & SGNN-GKDE & 78.22 ± 2.19 & 36.40 ± 1.75 & 16.93 ± 1.90 & 74.23 ± 2.27 & 42.60 ± 2.70 \\
    
    ~ & EGNN & \textcolor{blue}{80.16 ± 1.44} & \textcolor{blue}{28.82 ± 1.68} & 3.00 ± 0.82 & \textcolor{blue}{76.18 ± 0.70} & 41.95 ± 1.32 \\
    
    ~ & EGNN-vacuity-prop & 78.96 ± 1.99 & 30.27 ± 2.66 & \textcolor{blue}{2.48 ± 0.55} & 75.46 ± 1.72 & 42.99 ± 1.91 \\
    
    ~ & EGNN-evidence-prop & 76.10 ± 2.89 & 33.81 ± 3.29 & 3.92 ± 1.13 & 75.09 ± 2.10 & \textcolor{red}{46.26 ± 2.78} \\
    
    ~ & EGNN-vacuity-evidence-prop & 79.02 ± 1.00 & 31.13 ± 1.72 & 3.45 ± 1.64 & 74.38 ± 2.22 & 42.52 ± 2.06 \\
    
    ~ & EPN & \textcolor{red}{80.48 ± 0.14} & \textcolor{red}{28.24 ± 0.25} & \textcolor{red}{2.44 ± 0.83} & \textcolor{red}{76.32 ± 0.01} & 41.38 ± 0.33 \\
    \rowcolor{gray!30}
    ~ & EPN-reg & 78.20 ± 0.65 & 31.12 ± 0.83 & 2.80 ± 0.74 & 75.36 ± 1.09 & \textcolor{blue}{44.24 ± 1.72} \\
    \hline
    \multirow{14}{*}{\rotatebox[origin=c]{90}{\textbf{AmazonPhotos}}\hspace{0.5cm}} 
    ~ & VGNN-entropy & \textcolor{blue}{90.98 ± 0.63} & \textcolor{red}{15.01 ± 0.81} & 5.38 ± 1.24 & 84.07 ± 0.61 & 38.67 ± 3.32 \\
    
    ~ & VGNN-max-score & \textcolor{blue}{90.98 ± 0.63} & \textcolor{red}{15.01 ± 0.81} & 5.38 ± 1.24 & 85.65 ± 0.47 & 43.52 ± 3.53 \\
    
    ~ & VGNN-energy & \textcolor{blue}{90.98 ± 0.63} & \textcolor{red}{15.01 ± 0.81} & 5.38 ± 1.24 & 76.03 ± 1.62 & 30.35 ± 4.45 \\
    
    ~ & VGNN-gnnsafe & \textcolor{blue}{90.98 ± 0.63} & \textcolor{red}{15.01 ± 0.81} & 5.38 ± 1.24 & 74.16 ± 1.92 & 21.76 ± 1.66 \\
    
    ~ & VGNN-dropout & 88.77 ± 1.68 & 18.14 ± 2.37 & \textcolor{blue}{5.24 ± 1.13} & 84.13 ± 1.67 & 43.61 ± 5.04 \\
    
    ~ & VGNN-ensemble & 90.30 ± 0.95 & 15.68 ± 1.42 & 5.55 ± 0.84 & 84.71 ± 0.96 & 40.40 ± 3.38 \\
    
    ~ & GPN & 88.66 ± 0.99 & 21.99 ± 1.07 & 15.34 ± 1.22 & 83.09 ± 3.07 & 42.32 ± 6.38 \\
    
    ~ & SGNN-GKDE & 12.95 ± 9.58 & 87.48 ± 0.09 & 7.04 ± 6.57 & 55.73 ± 7.51 & \textcolor{red}{88.96 ± 9.40} \\
    
    ~ & EGNN & \textcolor{red}{91.56 ± 0.48} & \textcolor{blue}{15.27 ± 0.55} & 8.85 ± 1.05 & 84.78 ± 1.57 & 37.25 ± 5.00 \\
    
    ~ & EGNN-vacuity-prop & 90.11 ± 1.36 & 16.73 ± 1.89 & 7.69 ± 1.46 & 85.33 ± 1.66 & 43.12 ± 5.55 \\
    
    ~ & EGNN-evidence-prop & 67.51 ± 4.74 & 43.43 ± 4.30 & 16.83 ± 2.81 & \textcolor{red}{90.53 ± 3.16} & \textcolor{blue}{78.59 ± 8.67} \\
    
    ~ & EGNN-vacuity-evidence-prop & 66.98 ± 5.05 & 44.20 ± 3.88 & 17.55 ± 3.63 & \textcolor{blue}{89.96 ± 4.17} & 77.45 ± 11.62 \\
    
    ~ & EPN & 89.65 ± 0.10 & 16.10 ± 0.18 & \textcolor{red}{4.14 ± 0.42} & 86.69 ± 0.48 & 47.58 ± 2.27 \\
    \rowcolor{gray!30}
    ~ & EPN-reg & 89.52 ± 2.34 & 16.84 ± 3.37 & 5.54 ± 0.81 & 86.66 ± 1.22 & 47.93 ± 6.26 \\ \hline
    \end{tabular}
   }
    \caption{Missclassification detection results on CoraML, CiteSeer, PubMed, and AmazonPhotos (\textcolor{red}{best} and \textcolor{blue}{runner-up}).}
    \label{tab:mis_1}
\end{table}

\begin{table}[ht]
    \centering
    \small
    \setlength{\tabcolsep}{10pt}
    \resizebox{0.8\textwidth}{!}{%
    \begin{tabular}{@{}p{1cm}l|ccc|cc@{}}
    \hline
    \multirow{2}{*}{\textbf{Dataset}} & \multirow{2}{*}{\textbf{Model}} & \multicolumn{5}{c}{\textbf{Clean Graph}} \\ 
    ~ & ~ & clean-acc$\uparrow$ & BS$\downarrow$ & ECE$\downarrow$ & MIS\_ROC$\uparrow$ & MIS\_PR$\uparrow$  \\ \hline
    \multirow{14}{*}{\rotatebox[origin=c]{90}{\textbf{AmazonComputers}}\hspace{0.5cm}} 
    ~ & VGNN-entropy & 80.59 ± 1.57 & 30.16 ± 1.47 & \textcolor{red}{6.55 ± 1.81} & 73.21 ± 1.86 & 37.31 ± 2.54 \\
    
    ~ & VGNN-max-score & 80.59 ± 1.57 & 30.16 ± 1.47 & \textcolor{red}{6.55 ± 1.81} & 77.53 ± 1.97 & 43.59 ± 3.02 \\
    
    ~ & VGNN-energy & 80.59 ± 1.57 & 30.16 ± 1.47 & \textcolor{red}{6.55 ± 1.81} & 65.27 ± 1.40 & 32.62 ± 1.83 \\
    
    ~ & VGNN-gnnsafe & 80.59 ± 1.57 & 30.16 ± 1.47 & \textcolor{red}{6.55 ± 1.81} & 73.22 ± 2.20 & 43.16 ± 3.97 \\
    
    ~ & VGNN-dropout & 82.50 ± 1.49 & \textcolor{red}{28.05 ± 1.93} & 8.11 ± 1.73 & 73.63 ± 2.46 & 35.15 ± 3.49 \\
    
    ~ & VGNN-ensemble & \textcolor{blue}{82.79 ± 1.40} & \textcolor{blue}{28.18 ± 1.65} & 8.17 ± 2.08 & 72.31 ± 2.58 & 33.95 ± 3.67 \\
    
    ~ & GPN & 79.97 ± 1.77 & 34.17 ± 2.15 & 17.38 ± 2.51 & 77.16 ± 1.83 & 44.24 ± 3.97 \\
    
    ~ & SGNN-GKDE & 19.83 ± 14.71 & 89.86 ± 0.11 & 14.27 ± 10.07 & 44.89 ± 10.84 & \textcolor{red}{76.72 ± 16.86} \\
    
    ~ & EGNN & 81.93 ± 1.75 & 29.31 ± 2.13 & 9.93 ± 1.84 & 77.46 ± 1.92 & 42.11 ± 4.48 \\
    
    ~ & EGNN-vacuity-prop & \textcolor{red}{83.49 ± 1.45} & 28.48 ± 2.37 & 11.65 ± 2.07 & 75.96 ± 2.86 & 38.64 ± 3.78 \\
    
    ~ & EGNN-evidence-prop & 67.12 ± 3.60 & 54.34 ± 3.28 & 24.55 ± 3.31 & \textcolor{red}{80.27 ± 2.24} & \textcolor{blue}{62.96 ± 5.87} \\
    
    ~ & EGNN-vacuity-evidence-prop & 66.21 ± 4.61 & 55.41 ± 3.52 & 24.74 ± 3.87 & 78.83 ± 4.57 & 62.29 ± 9.29 \\
    
    ~ & EPN & 80.72 ± 1.64 & 31.02 ± 2.01 & 8.38 ± 2.04 & 75.49 ± 1.35 & 41.67 ± 3.05 \\
    \rowcolor{gray!30}
    ~ & EPN-reg & 81.72 ± 1.53 & 29.36 ± 2.39 & \textcolor{blue}{7.94 ± 1.67} & \textcolor{blue}{78.80 ± 1.12} & 43.67 ± 2.63 \\
    \hline
    \multirow{14}{*}{\rotatebox[origin=c]{90}{\textbf{CoauthorCS}}\hspace{0.5cm}} 
    ~ & VGNN-entropy & 91.90 ± 0.32 & \textcolor{blue}{13.01 ± 0.44} & \textcolor{red}{5.27 ± 0.51} & 85.19 ± 0.60 & 32.61 ± 1.66 \\
    
    ~ & VGNN-max-score & 91.90 ± 0.32 & \textcolor{blue}{13.01 ± 0.44} & \textcolor{red}{5.27 ± 0.51} & \textcolor{blue}{88.13 ± 0.61} & 39.89 ± 1.90 \\
    
    ~ & VGNN-energy & 91.90 ± 0.32 & \textcolor{blue}{13.01 ± 0.44} & \textcolor{red}{5.27 ± 0.51} & 75.92 ± 1.05 & 25.93 ± 1.51 \\
    
    ~ & VGNN-gnnsafe & 91.90 ± 0.32 & \textcolor{blue}{13.01 ± 0.44} & \textcolor{red}{5.27 ± 0.51} & 74.51 ± 1.36 & 20.73 ± 1.32 \\
    
    ~ & VGNN-dropout & 91.62 ± 1.04 & 13.69 ± 1.24 & 6.15 ± 0.87 & 83.59 ± 1.07 & 30.28 ± 1.86 \\
    
    ~ & VGNN-ensemble & \textcolor{red}{92.33 ± 0.43} & \textcolor{red}{12.60 ± 0.59} & 5.71 ± 0.46 & 84.87 ± 0.39 & 30.96 ± 1.65 \\
    
    ~ & GPN & 85.30 ± 1.49 & 28.38 ± 1.71 & 21.59 ± 1.74 & 83.76 ± 1.50 & \textcolor{blue}{45.72 ± 3.62} \\
    
    ~ & SGNN-GKDE & 14.30 ± 10.23 & 93.27 ± 0.04 & 9.07 ± 8.88 & 57.04 ± 12.29 & \textcolor{red}{90.02 ± 6.32} \\
    
    ~ & EGNN & \textcolor{blue}{92.07 ± 0.45} & 13.75 ± 0.77 & 8.68 ± 1.19 & 87.37 ± 0.55 & 37.96 ± 2.71 \\
    
    ~ & EGNN-vacuity-prop & 91.49 ± 0.67 & 14.55 ± 0.96 & 8.78 ± 0.69 & 87.23 ± 1.07 & 39.38 ± 1.95 \\
    
    ~ & EGNN-evidence-prop & 85.13 ± 0.71 & 32.37 ± 0.79 & 28.01 ± 1.04 & 81.82 ± 0.84 & 42.61 ± 1.71 \\
    
    ~ & EGNN-vacuity-evidence-prop & 85.09 ± 0.67 & 32.07 ± 0.57 & 27.72 ± 1.30 & 82.23 ± 1.12 & 43.28 ± 2.26 \\
    
    ~ & EPN & 91.74 ± 0.09 & 13.65 ± 0.06 & 7.20 ± 0.34 & \textcolor{red}{88.28 ± 0.21} & 40.42 ± 0.70 \\
    \rowcolor{gray!30}
    ~ & EPN-reg & 91.71 ± 0.39 & 13.23 ± 0.62 & \textcolor{blue}{5.46 ± 0.83} & 87.87 ± 0.92 & 40.61 ± 2.53 \\
    \hline
    \multirow{14}{*}{\rotatebox[origin=c]{90}{\textbf{CoauthorPhysics}}\hspace{0.5cm}} 
    ~ & VGNN-entropy & 92.81 ± 1.13 & 11.45 ± 1.61 & 3.89 ± 0.64 & 87.47 ± 0.94 & 32.10 ± 3.24 \\
    
    ~ & VGNN-max-score & 92.81 ± 1.13 & 11.45 ± 1.61 & 3.89 ± 0.64 & 88.66 ± 1.06 & 35.99 ± 2.50 \\
    
    ~ & VGNN-energy & 92.81 ± 1.13 & 11.45 ± 1.61 & 3.89 ± 0.64 & 83.10 ± 1.38 & 26.93 ± 4.04 \\
    
    ~ & VGNN-gnnsafe & 92.81 ± 1.13 & 11.45 ± 1.61 & 3.89 ± 0.64 & 84.22 ± 0.87 & 26.53 ± 3.66 \\
    
    ~ & VGNN-dropout & 93.30 ± 0.50 & 10.55 ± 0.69 & 3.31 ± 0.67 & 88.49 ± 1.33 & 33.32 ± 2.87 \\
    
    ~ & VGNN-ensemble & \textcolor{red}{93.83 ± 0.35} & \textcolor{red}{9.58 ± 0.51} & \textcolor{blue}{3.08 ± 0.20} & 89.57 ± 1.04 & 35.82 ± 1.50 \\
    
    ~ & GPN & 92.17 ± 0.79 & 14.20 ± 1.15 & 12.08 ± 0.98 & 88.89 ± 1.30 & 39.96 ± 1.97 \\
    
    ~ & SGNN-GKDE & 93.14 ± 0.72 & 31.13 ± 1.95 & 38.26 ± 2.07 & 87.07 ± 1.16 & 35.79 ± 3.37 \\
    
    ~ & EGNN & 92.96 ± 0.72 & 11.09 ± 1.43 & 3.66 ± 1.40 & 89.42 ± 1.10 & 38.38 ± 1.68 \\
    
    ~ & EGNN-vacuity-prop & 93.07 ± 0.68 & 11.09 ± 1.12 & 3.81 ± 1.21 & 88.70 ± 0.96 & 36.67 ± 1.64 \\
    
    ~ & EGNN-evidence-prop & 91.79 ± 0.43 & 15.26 ± 0.61 & 13.50 ± 0.83 & 88.83 ± 0.48 & \textcolor{blue}{40.60 ± 1.89} \\
    
    ~ & EGNN-vacuity-evidence-prop & 91.65 ± 0.35 & 15.50 ± 0.61 & 13.47 ± 0.86 & 88.45 ± 1.11 & \textcolor{red}{40.95 ± 1.96} \\
    
    ~ & EPN & 93.40 ± 0.07 & 9.98 ± 0.06 & \textcolor{red}{2.59 ± 0.40} & \textcolor{blue}{89.82 ± 0.15} & \textcolor{red}{40.81 ± 0.56} \\
    \rowcolor{gray!30}
    ~ & EPN-reg & \textcolor{blue}{93.64 ± 0.67} & \textcolor{blue}{9.73 ± 0.81} & 3.18 ± 0.70 & \textcolor{red}{90.97 ± 0.63} & 40.10 ± 3.67 \\
    \hline
    \multirow{14}{*}{\rotatebox[origin=c]{90}{\textbf{ogbn-arxiv}}\hspace{0.5cm}} 
    ~ & VGNN-entropy & \textcolor{blue}{72.32 ± 0.22} & \textcolor{blue}{39.61 ± 0.22} & \textcolor{red}{2.79 ± 0.18} & 75.69 ± 0.17 & 50.48 ± 0.36 \\
    
    ~ & VGNN-max-score & \textcolor{blue}{72.32 ± 0.22} & \textcolor{blue}{39.61 ± 0.22} & \textcolor{red}{2.79 ± 0.18} & \textcolor{red}{77.58 ± 0.13} & 53.98 ± 0.37 \\
    
    ~ & VGNN-energy & \textcolor{blue}{72.32 ± 0.22} & \textcolor{blue}{39.61 ± 0.22} & \textcolor{red}{2.79 ± 0.18} & 68.86 ± 0.40 & 43.96 ± 0.58 \\
    
    ~ & VGNN-gnnsafe & \textcolor{blue}{72.32 ± 0.22} & \textcolor{blue}{39.61 ± 0.22} & \textcolor{red}{2.79 ± 0.18} & 60.91 ± 0.22 & 38.72 ± 0.28 \\
    
    ~ & VGNN-dropout & 72.16 ± 0.19 & 39.76 ± 0.18 & \textcolor{blue}{2.83 ± 0.16} & 75.70 ± 0.16 & 50.69 ± 0.29 \\
    
    ~ & VGNN-ensemble & \textcolor{red}{72.73 ± 0.06} & \textcolor{red}{39.04 ± 0.06} & 2.83 ± 0.08 & 75.57 ± 0.07 & 49.62 ± 0.17 \\
    
    ~ & GPN & 68.92 ± 0.47 & 44.68 ± 0.35 & 8.53 ± 0.56 & 75.93 ± 0.02 & 56.13 ± 0.32 \\
    
    ~ & SGNN-GKDE & n.a & n.a & n.a & n.a & n.a \\
    
    ~ & EGNN & 69.17 ± 0.78 & 50.63 ± 0.59 & 25.73 ± 1.69 & 76.92 ± 0.43 & 56.68 ± 0.47 \\
    
    ~ & EGNN-vacuity-prop & 69.21 ± 0.71 & 50.39 ± 0.65 & 25.38 ± 1.33 & \textcolor{blue}{77.06 ± 0.43} & 56.85 ± 0.49 \\
    
    ~ & EGNN-evidence-prop & 63.90 ± 1.89 & 59.40 ± 0.94 & 28.83 ± 2.69 & 75.68 ± 0.96 & \textcolor{red}{60.75 ± 1.69} \\
    
    ~ & EGNN-vacuity-evidence-prop & 64.86 ± 2.29 & 59.42 ± 0.75 & 30.22 ± 3.23 & 75.63 ± 0.68 & \textcolor{blue}{60.19 ± 1.69} \\
    
    ~ & EPN & 69.85 ± 0.14 & 42.94 ± 0.19 & 4.80 ± 0.18 & 76.80 ± 0.07 & 55.69 ± 0.17 \\
    \rowcolor{gray!30}
    ~ & EPN-reg & 69.87 ± 0.18 & 42.86 ± 0.21 & 4.67 ± 0.19 & 76.88 ± 0.08 & 55.82 ± 0.18 \\
    \hline
    \end{tabular}
    }
    \caption{Missclassification detection results on AmazonComputers, CoauthorCS, CoauthorPhysics, and OGBN-arxiv (\textcolor{red}{best} and \textcolor{blue}{runner-up}).}
\label{tab:mis_2}
\end{table}

\subsection{OOD detection}

We present the ROC and PR metrics for the OOD detection task on the CoraML, CiteSeer, PubMed, and OGBN-Arxiv datasets in Table~\ref{tab:ood_gcn_1} and Table~\ref{tab:ood_gcn_2}. The corresponding results for the Amazon and Coauthor datasets are provided in Table~\ref{tab:ood_gcn_3} and Table~\ref{tab:ood_gcn_4}. Additionally, we summarize the average performance ranks of the models across all datasets in Table~\ref{tab:ood_gcn_rank} (in the main text). These rankings offer a clearer understanding of each model’s relative performance across diverse datasets and metrics, highlighting the consistency and robustness of the models in OOD detection.

\begin{table}[!ht]
    \centering
    \small
    \setlength{\tabcolsep}{8pt}
    \resizebox{0.98\textwidth}{!}{%
    \begin{tabular}{@{}p{1.2cm}l|cc|cc|cc@{}}
    \toprule
    \multirow{2}{*}{\textbf{Dataset}} & \multirow{2}{*}{\textbf{Model}} & \multicolumn{2}{c|}{\textbf{OS-1 (last)}} & \multicolumn{2}{c|}{\textbf{OS-2 (first)}} & \multicolumn{2}{c}{\textbf{OS-3 (random)}} \\ 
    ~ & ~ & OOD-AUROC$\uparrow$ & OOD-AUPR$\uparrow$ & OOD-AUROC$\uparrow$ & OOD-AUPR$\uparrow$ & OOD-AUROC$\uparrow$ & OOD-AUPR$\uparrow$  \\ \midrule
    \multirow{17}{*}{\rotatebox[origin=c]{90}{\textbf{CoraML}}\hspace{0.5cm}} 
    ~ & \multicolumn{6}{c}{\textbf{logit based}} \\ \cline{2-8}
    ~ & VGCN-entropy & 85.97 ± 1.51 & 81.47 ± 2.04 & 83.93 ± 2.08 & 76.19 ± 4.30 & 86.66 ± 1.21 & 86.09 ± 1.16 \\
    
    ~ & VGCN-max-score & 85.17 ± 1.47 & 80.15 ± 1.85 & 82.78 ± 2.38 & 73.80 ± 5.22 & 86.20 ± 1.30 & 85.74 ± 1.11 \\
    
    ~ & VGCN-energy & 86.14 ± 1.51 & 81.73 ± 2.42 & 84.45 ± 2.39 & 77.27 ± 4.66 & 86.45 ± 1.72 & 85.77 ± 2.33 \\
    
    ~ & VGCN-gnnsafe & 89.04 ± 1.17 & 84.49 ± 2.10 & 87.36 ± 3.60 & 78.91 ± 7.14 & 86.18 ± 2.53 & 85.73 ± 2.63 \\
    
    ~ & VGCN-dropout & 87.38 ± 1.05 & 83.60 ± 2.36 & 81.18 ± 3.52 & 71.42 ± 3.14 & 88.32 ± 1.63 & 88.44 ± 2.17 \\
    
    ~ & VGCN-ensemble & 88.02 ± 1.42 & 84.30 ± 2.29 & 82.37 ± 1.37 & 71.79 ± 3.76 & 89.67 ± 0.81 & 90.01 ± 1.38 \\ 
    \cline{2-8} ~ & \multicolumn{6}{c}{\textbf{evidential based}} \\ \cline{2-8}
    ~ & GPN & 86.83 ± 1.97 & 81.09 ± 2.60 & 79.37 ± 2.47 & 63.43 ± 3.72 & 88.41 ± 2.08 & 88.85 ± 2.29 \\
    
    ~ & SGCN-GKDE & \textcolor{blue}{89.86 ± 1.44} & \textcolor{red}{87.44 ± 2.52} & \textcolor{blue}{87.88 ± 2.36} & \textcolor{red}{81.58 ± 4.02} & \textcolor{blue}{90.18 ± 1.20} & \textcolor{blue}{91.14 ± 1.61} \\
    
    ~ & EGCN & 83.04 ± 2.29 & 78.88 ± 2.61 & 82.18 ± 1.85 & 74.39 ± 2.82 & 84.06 ± 2.13 & 85.97 ± 2.20 \\
    
    ~ & EGCN-vacuity-prop & 89.71 ± 1.17 & \textcolor{blue}{86.03 ± 2.11} & \textcolor{red}{89.61 ± 3.09} & \textcolor{blue}{81.47 ± 6.22} & \textcolor{red}{91.09 ± 1.39} & \textcolor{red}{91.82 ± 1.35} \\
    
    ~ & EGCN-evidence-prop & 87.36 ± 2.13 & 82.43 ± 3.31 & 82.56 ± 3.23 & 69.64 ± 5.07 & 89.19 ± 1.30 & 89.10 ± 1.01 \\

    ~ & EGCN-vacuity-evidence-prop & 87.23 ± 1.24 & 80.07 ± 1.35 & 87.00 ± 1.77 & 72.80 ± 3.85 & 89.63 ± 1.41 & 88.59 ± 1.78 \\
    
    \cline{2-8} ~ & \multicolumn{6}{c}{\textbf{ours}} \\ \cline{2-8}
    ~ & EPN & 88.06 ± 2.62 & 84.38 ± 4.18 & 87.00 ± 3.30 & 78.15 ± 5.64 & 88.11 ± 2.55 & 88.21 ± 2.99 \\
    \rowcolor{gray!30}
    ~ & EPN-reg & \textcolor{red}{89.97 ± 2.48} & 86.01 ± 4.82 & 85.91 ± 6.18 & 75.53 ± 11.08 & 88.96 ± 1.46 & 89.26 ± 1.74 \\
    \midrule
    \multirow{17}{*}{\rotatebox[origin=c]{90}{\textbf{CiteSeer}}\hspace{0.5cm}} 
    ~ & \multicolumn{6}{c}{\textbf{logit based}} \\ \cline{2-8}
    ~ & VGCN-entropy & 86.17 ± 1.47 & 70.48 ± 1.68 & 88.15 ± 1.97 & 75.75 ± 3.24 & 82.05 ± 4.40 & 68.53 ± 6.34 \\
    
    ~ & VGCN-max-score & 85.75 ± 1.57 & 68.70 ± 1.53 & 87.64 ± 2.04 & 74.02 ± 3.49 & 81.85 ± 4.36 & 67.91 ± 6.45 \\
    
    ~ & VGCN-energy & 86.55 ± 1.57 & 69.68 ± 2.84 & 88.91 ± 1.93 & 77.66 ± 3.27 & 82.17 ± 4.71 & 68.04 ± 6.73 \\
    
    ~ & VGCN-gnnsafe & \textcolor{blue}{88.94 ± 1.64} & \textcolor{blue}{71.97 ± 4.01} & \textcolor{red}{91.90 ± 1.55} & \textcolor{blue}{80.78 ± 3.38} & 84.00 ± 4.95 & 68.80 ± 7.67 \\
    
    ~ & VGCN-dropout & 86.14 ± 3.12 & 68.88 ± 5.49 & 88.20 ± 0.96 & 75.23 ± 2.47 & 81.90 ± 4.93 & 66.16 ± 6.89 \\
    
    ~ & VGCN-ensemble & 82.68 ± 1.66 & 63.18 ± 3.13 & 90.16 ± 0.70 & 79.80 ± 1.97 & 81.54 ± 4.41 & 66.96 ± 4.55 \\
    \cline{2-8} ~ & \multicolumn{6}{c}{\textbf{evidential based}} \\ \cline{2-8}
    ~ & GPN & 85.96 ± 2.58 & 64.70 ± 5.11 & 88.95 ± 1.88 & 76.96 ± 3.13 & 69.78 ± 11.16 & 54.10 ± 9.67 \\
    
    ~ & SGCN-GKDE & \textcolor{red}{90.40 ± 2.72} & \textcolor{red}{78.43 ± 6.09} & 89.49 ± 1.80 & 80.27 ± 3.70 & \textcolor{blue}{86.98 ± 4.10} & \textcolor{red}{77.45 ± 7.66} \\
    
    ~ & EGCN & 85.22 ± 2.05 & 70.49 ± 3.89 & 88.34 ± 1.02 & 76.64 ± 3.26 & 85.19 ± 1.50 & \textcolor{blue}{73.18 ± 3.48} \\
    
    ~ & EGCN-vacuity-prop & 88.02 ± 2.46 & 66.55 ± 3.57 & \textcolor{blue}{91.58 ± 1.10} & \textcolor{red}{80.90 ± 2.63} & 85.72 ± 1.85 & 67.89 ± 3.98 \\
    
    ~ & EGCN-evidence-prop & 86.70 ± 3.94 & 69.45 ± 7.23 & 88.56 ± 1.93 & 77.06 ± 2.94 & 81.98 ± 5.24 & 66.12 ± 6.81 \\
    
    ~ & EGCN-vacuity-evidence-prop & 87.08 ± 2.92 & 65.28 ± 6.60 & 89.57 ± 3.40 & 78.12 ± 4.26 & 80.70 ± 7.37 & 62.56 ± 8.83 \\
    \cline{2-8} ~ & \multicolumn{6}{c}{\textbf{ours}} \\ \cline{2-8}
    ~ & EPN & 84.02 ± 4.22 & 65.09 ± 7.29 & 87.91 ± 2.61 & 76.95 ± 4.91 & 83.31 ± 4.11 & 67.48 ± 6.39 \\
    \rowcolor{gray!30}
    ~ & EPN-reg & 88.23 ± 2.79 & 69.69 ± 4.97 & 90.86 ± 1.51 & 79.18 ± 4.27 & \textcolor{red}{87.27 ± 2.52} & 73.24 ± 5.35 \\
    \midrule
    \multirow{17}{*}{\rotatebox[origin=c]{90}{\textbf{PubMed}}\hspace{0.5cm}} 
    ~ & \multicolumn{6}{c}{\textbf{logit based}} \\ \cline{2-8}
    ~ & VGCN-entropy & 66.60 ± 1.66 & 54.77 ± 1.60 & 63.81 ± 4.21 & 29.38 ± 3.88 & 49.96 ± 5.38 & 39.51 ± 3.59 \\
    
    ~ & VGCN-max-score & 66.60 ± 1.66 & 54.77 ± 1.60 & 63.81 ± 4.21 & 29.37 ± 3.88 & 49.96 ± 5.38 & 39.50 ± 3.59 \\
    
    ~ & VGCN-energy & 66.47 ± 1.70 & 54.72 ± 1.80 & 64.02 ± 4.57 & 29.56 ± 4.44 & 49.78 ± 5.73 & 39.44 ± 3.89 \\
    
    ~ & VGCN-gnnsafe & 67.28 ± 2.00 & 54.92 ± 1.87 & 67.94 ± 4.59 & 34.60 ± 6.54 & 48.53 ± 7.14 & 37.92 ± 4.70 \\
    
    ~ & VGCN-dropout & 64.41 ± 1.44 & 52.30 ± 1.46 & 62.71 ± 2.86 & 28.11 ± 2.67 & 51.00 ± 2.73 & 40.04 ± 1.64 \\
    
    ~ & VGCN-ensemble & 67.96 ± 2.12 & 55.43 ± 2.12 & 63.47 ± 6.30 & 29.33 ± 5.10 & 53.68 ± 10.10 & 42.99 ± 7.69 \\
    \cline{2-8} ~ & \multicolumn{6}{c}{\textbf{evidential based}} \\ \cline{2-8}
    ~ & GPN & 67.53 ± 4.34 & 58.93 ± 5.55 & 65.11 ± 3.68 & 35.70 ± 6.00 & 54.75 ± 9.11 & 43.29 ± 8.09 \\
    
    ~ & SGCN-GKDE & 68.69 ± 2.60 & 60.80 ± 4.18 & 60.55 ± 8.84 & 32.41 ± 8.87 & \textcolor{blue}{61.74 ± 5.88} & \textcolor{red}{51.12 ± 6.84} \\
    
    ~ & EGCN & 65.10 ± 1.67 & 58.29 ± 2.43 & 68.96 ± 1.23 & 38.02 ± 2.04 & 49.23 ± 6.54 & 38.57 ± 4.97 \\
    
    ~ & EGCN-vacuity-prop & 71.90 ± 4.39 & 63.83 ± 5.46 & \textcolor{red}{76.76 ± 3.68} & \textcolor{red}{48.42 ± 5.22} & 52.64 ± 8.07 & 39.94 ± 5.65 \\
    
    ~ & EGCN-evidence-prop & \textcolor{blue}{75.44 ± 3.54} & \textcolor{red}{68.46 ± 4.21} & 72.49 ± 13.85 & \textcolor{blue}{48.11 ± 16.03} & 55.69 ± 4.53 & 42.76 ± 4.29 \\
    
    ~ & EGCN-vacuity-evidence-prop & \textcolor{red}{76.36 ± 4.95} & \textcolor{blue}{68.13 ± 6.66} & \textcolor{blue}{76.63 ± 7.30} & 47.46 ± 12.81 & \textcolor{red}{65.19 ± 10.53} & \textcolor{blue}{50.80 ± 9.30} \\
    \cline{2-8} ~ & \multicolumn{6}{c}{\textbf{ours}} \\ \cline{2-8}
    ~ & EPN & 66.38 ± 0.77 & 54.87 ± 1.07 & 51.01 ± 17.29 & 24.32 ± 9.50 & 58.27 ± 7.21 & 46.10 ± 6.31 \\
    \rowcolor{gray!30}
    ~ & EPN-reg & 67.38 ± 3.85 & 53.66 ± 3.72 & 65.25 ± 7.45 & 33.01 ± 6.89 & 53.65 ± 6.11 & 41.47 ± 4.14 \\
    \midrule
    \multirow{17}{*}{\rotatebox[origin=c]{90}{\textbf{OGBN-arxiv}}\hspace{0.5cm}} 
    ~ & \multicolumn{6}{c}{\textbf{logit based}} \\ \cline{2-8}
    ~ & VGCN-entropy & 68.05 ± 0.60 & 46.59 ± 0.73 & 75.82 ± 0.59 & 48.21 ± 1.02 & 77.71 ± 0.46 & 71.28 ± 0.57 \\
    
    ~ & VGCN-max-score & 66.34 ± 0.55 & 44.89 ± 0.64 & 72.89 ± 0.60 & 42.86 ± 0.94 & 76.01 ± 0.45 & 69.29 ± 0.52 \\
    
    ~ & VGCN-energy & 67.42 ± 0.93 & 45.56 ± 0.97 & 76.00 ± 0.91 & 45.74 ± 1.53 & 79.51 ± 0.55 & 73.22 ± 0.65 \\
    
    ~ & VGCN-gnnsafe & 69.23 ± 0.60 & 50.37 ± 0.53 & 80.07 ± 0.21 & 57.18 ± 0.66 & 68.53 ± 1.05 & 64.05 ± 1.01 \\
    
    ~ & VGCN-dropout & 68.17 ± 0.70 & 46.72 ± 0.80 & 75.93 ± 0.53 & 48.55 ± 0.87 & 77.86 ± 0.36 & 71.52 ± 0.55 \\
    
    ~ & VGCN-ensemble & 68.46 ± 0.20 & 46.72 ± 0.21 & 76.70 ± 0.15 & 49.00 ± 0.22 & 78.15 ± 0.18 & 71.78 ± 0.23 \\
    \cline{2-8} ~ & \multicolumn{6}{c}{\textbf{evidential based}} \\ \cline{2-8}
    ~ & GPN & 74.67 ± 0.37 & 57.97 ± 0.36 & 75.84 ± 0.63 & 49.80 ± 0.91 & 81.26 ± 0.28 & 74.67 ± 0.22 \\
    
    ~ & SGCN-GKDE & n.a & n.a & n.a & n.a & n.a & n.a \\
    
    ~ & EGCN & 75.93 ± 1.04 & 58.68 ± 1.72 & 77.42 ± 1.82 & 51.26 ± 3.01 & 82.88 ± 0.92 & \textcolor{blue}{77.53 ± 1.36} \\
    
    ~ & EGCN-vacuity-prop & \textcolor{blue}{81.77 ± 1.33} & \textcolor{blue}{65.22 ± 2.07} & \textcolor{red}{83.76 ± 0.64} & \textcolor{red}{61.94 ± 1.00} & \textcolor{blue}{83.12 ± 1.65} & 77.42 ± 2.34 \\
    
    ~ & EGCN-evidence-prop & 70.19 ± 1.65 & 49.98 ± 2.38 & 75.68 ± 1.57 & 47.20 ± 2.94 & 76.28 ± 1.34 & 68.17 ± 1.37 \\
    
    ~ & EGCN-vacuity-evidence-prop & 78.28 ± 1.08 & 60.98 ± 1.52 & \textcolor{blue}{82.56 ± 0.97} & 59.88 ± 1.64 & 77.17 ± 1.75 & 68.91 ± 1.91 \\
    \cline{2-8} ~ & \multicolumn{6}{c}{\textbf{ours}} \\ \cline{2-8}
    ~ & EPN & 67.12 ± 1.30 & 44.01 ± 1.74 & 79.25 ± 0.78 & 48.42 ± 1.46 & 67.19 ± 1.90 & 55.30 ± 2.04 \\
    \rowcolor{gray!30}
    ~ & EPN-reg & \textcolor{red}{83.99 ± 0.33} & \textcolor{red}{72.87 ± 0.79} & 81.54 ± 0.35 & \textcolor{blue}{60.29 ± 2.50} & \textcolor{red}{85.47 ± 0.25} & \textcolor{red}{82.52 ± 0.30} \\
    \bottomrule
    \end{tabular}
    }
    \caption{OOD detection results (\textcolor{red}{best} and \textcolor{blue}{runner-up}) with GCN as backbone on CoraML, CiteSeer, PubMed and ogbn-arxiv for OS-1, OS-2 and OS-3.}
    \label{tab:ood_gcn_1}
\end{table}

\begin{table}[!ht]
    \centering
    \small
    \setlength{\tabcolsep}{8pt}
    \resizebox{0.98\textwidth}{!}{%
    \begin{tabular}{@{}p{1.2cm}l|cc|cc|cc@{}}
    \toprule
    \multirow{2}{*}{\textbf{Dataset}} & \multirow{2}{*}{\textbf{Model}} & \multicolumn{2}{c|}{\textbf{OS-1 (last)}} & \multicolumn{2}{c|}{\textbf{OS-2 (first)}} & \multicolumn{2}{c}{\textbf{OS-3 (random)}} \\ 
    ~ & ~ & OOD-AUROC$\uparrow$ & OOD-AUPR$\uparrow$ & OOD-AUROC$\uparrow$ & OOD-AUPR$\uparrow$ & OOD-AUROC$\uparrow$ & OOD-AUPR$\uparrow$  \\ \midrule
    \multirow{17}{*}{\rotatebox[origin=c]{90}{\textbf{AmazonPhotos}}\hspace{0.5cm}}
    ~ & \multicolumn{6}{c}{\textbf{logit based}} \\ \cline{2-8}
    ~ & VGCN-entropy & 79.00 ± 3.79 & 67.60 ± 4.47 & 79.74 ± 2.07 & 69.18 ± 4.45 & 81.57 ± 3.04 & 64.95 ± 4.09 \\
    
    ~ & VGCN-max-score & 78.65 ± 4.19 & 66.39 ± 4.81 & 81.10 ± 1.54 & 70.66 ± 2.96 & 82.65 ± 4.90 & 67.67 ± 6.72 \\
    
    ~ & VGCN-energy & 78.21 ± 2.70 & 67.85 ± 3.37 & 74.44 ± 6.24 & 67.40 ± 7.82 & 74.57 ± 2.28 & 56.39 ± 4.12 \\
    
    ~ & VGCN-gnnsafe & 82.14 ± 5.17 & 69.40 ± 5.36 & 85.03 ± 7.53 & 78.33 ± 7.47 & 85.63 ± 2.58 & 68.91 ± 3.16 \\
    
    ~ & VGCN-dropout & 78.08 ± 5.89 & 67.90 ± 7.68 & 80.02 ± 2.13 & 67.89 ± 2.44 & 83.32 ± 5.86 & 66.64 ± 6.39 \\
    
    ~ & VGCN-ensemble & 75.47 ± 5.46 & 63.51 ± 6.09 & 81.24 ± 4.18 & 69.20 ± 4.63 & 83.44 ± 3.51 & 65.24 ± 7.21 \\
    \cline{2-8} ~ & \multicolumn{6}{c}{\textbf{evidential based}} \\ \cline{2-8}
    ~ & GPN & \textcolor{red}{90.67 ± 1.79} & \textcolor{red}{84.70 ± 3.27} & \textcolor{red}{92.44 ± 1.37} & \textcolor{red}{88.88 ± 2.29} & \textcolor{blue}{87.60 ± 1.75} & \textcolor{blue}{76.03 ± 2.66} \\
    
    ~ & SGCN-GKDE & 80.22 ± 3.00 & 72.00 ± 4.98 & 80.20 ± 5.13 & 74.64 ± 7.47 & 78.80 ± 3.35 & 61.62 ± 5.31 \\
    
    ~ & EGCN & 72.49 ± 1.87 & 62.67 ± 2.47 & 64.18 ± 3.09 & 46.56 ± 3.24 & 68.20 ± 2.13 & 47.37 ± 2.65 \\
    
    ~ & EGCN-vacuity-prop & 82.75 ± 2.78 & 73.14 ± 3.84 & 71.98 ± 3.19 & 56.36 ± 4.48 & 77.19 ± 1.80 & 56.68 ± 3.26 \\
    
    ~ & EGCN-evidence-prop & 56.64 ± 2.85 & 53.59 ± 1.74 & 86.72 ± 5.85 & 77.66 ± 7.46 & 75.38 ± 2.21 & 57.43 ± 4.02 \\
    
    ~ & EGCN-vacuity-evidence-prop & 59.53 ± 6.04 & 58.40 ± 4.88 & \textcolor{blue}{91.95 ± 3.28} & 83.13 ± 8.90 & 81.36 ± 4.68 & 64.96 ± 6.89 \\
    \cline{2-8} ~ & \multicolumn{6}{c}{\textbf{ours}} \\ \cline{2-8}
    ~ & EPN & 84.68 ± 4.18 & 77.29 ± 4.33 & 82.66 ± 6.82 & 79.00 ± 7.57 & 70.51 ± 5.96 & 57.79 ± 5.69 \\
    \rowcolor{gray!30}
    ~ & EPN-reg & \textcolor{blue}{86.49 ± 5.40} & \textcolor{blue}{81.37 ± 6.67} & 88.79 ± 5.61 & \textcolor{blue}{85.96 ± 7.93} & \textcolor{red}{91.49 ± 3.74} & \textcolor{red}{84.33 ± 7.45} \\
    \midrule
    \multirow{17}{*}{\rotatebox[origin=c]{90}{\textbf{AmazonComputers}}\hspace{0.5cm}} 
    ~ & \multicolumn{6}{c}{\textbf{logit based}} \\ \cline{2-8}
    ~ & VGCN-entropy & 69.41 ± 2.69 & 44.82 ± 4.54 & 69.79 ± 5.37 & 85.19 ± 3.23 & 61.32 ± 5.91 & 68.55 ± 5.08 \\
    
    ~ & VGCN-max-score & 69.88 ± 3.50 & 46.02 ± 5.63 & 63.92 ± 5.70 & 80.63 ± 3.74 & 60.52 ± 6.95 & 68.72 ± 6.70 \\
    
    ~ & VGCN-energy & 65.73 ± 3.60 & 41.18 ± 5.03 & 75.99 ± 3.49 & 87.88 ± 2.40 & 61.37 ± 4.74 & 68.05 ± 3.90 \\
    
    ~ & VGCN-gnnsafe & 78.82 ± 2.84 & 53.48 ± 4.56 & 76.49 ± 6.64 & 86.08 ± 3.56 & 58.93 ± 7.40 & 66.77 ± 4.88 \\
    
    ~ & VGCN-dropout & 67.72 ± 6.89 & 45.59 ± 9.06 & 69.75 ± 5.16 & 84.86 ± 3.21 & 61.35 ± 8.26 & 69.80 ± 6.60 \\
    
    ~ & VGCN-ensemble & 70.78 ± 2.74 & 48.72 ± 3.79 & 76.01 ± 6.85 & 88.41 ± 4.36 & 60.14 ± 8.85 & 67.87 ± 7.18 \\
    \cline{2-8} ~ & \multicolumn{6}{c}{\textbf{evidential based}} \\ \cline{2-8}
    ~ & GPN & \textcolor{blue}{80.97 ± 3.98} & 57.59 ± 5.87 & \textcolor{red}{91.54 ± 2.33} & \textcolor{red}{95.08 ± 1.28} & \textcolor{red}{85.70 ± 2.31} & \textcolor{red}{87.17 ± 1.70} \\
    
    ~ & SGCN-GKDE & 69.08 ± 6.09 & 51.19 ± 10.92 & 84.07 ± 3.03 & 92.43 ± 1.81 & 62.31 ± 10.18 & 69.55 ± 8.64 \\
    
    ~ & EGCN & 59.46 ± 2.24 & 35.40 ± 2.00 & 69.70 ± 2.74 & 83.47 ± 1.75 & 60.36 ± 2.94 & 67.53 ± 2.38 \\
    
    ~ & EGCN-vacuity-prop & 67.88 ± 5.72 & 43.17 ± 6.64 & 78.33 ± 4.22 & 88.22 ± 2.99 & 64.88 ± 4.53 & 71.86 ± 3.57 \\
    
    ~ & EGCN-evidence-prop & 75.15 ± 3.79 & 50.08 ± 4.30 & 45.44 ± 6.82 & 69.73 ± 3.03 & 66.75 ± 4.99 & 71.25 ± 3.72 \\
    
    ~ & EGCN-vacuity-evidence-prop & 80.67 ± 7.16 & \textcolor{blue}{58.23 ± 8.32} & 40.63 ± 2.49 & 68.78 ± 1.13 & 61.07 ± 11.19 & 66.21 ± 8.85 \\
    \cline{2-8} ~ & \multicolumn{6}{c}{\textbf{ours}} \\ \cline{2-8}
    ~ & EPN & 73.50 ± 3.59 & 49.92 ± 5.21 & \textcolor{blue}{87.00 ± 2.97} & \textcolor{blue}{93.92 ± 1.81} & 55.40 ± 4.66 & 64.70 ± 2.87 \\
    \rowcolor{gray!30}
    ~ & EPN-reg & \textcolor{red}{83.26 ± 6.06} & \textcolor{red}{68.91 ± 8.41} & 81.99 ± 9.82 & 91.49 ± 4.83 & \textcolor{blue}{70.96 ± 9.16} & \textcolor{blue}{79.18 ± 8.00} \\
    \midrule
    \multirow{17}{*}{\rotatebox[origin=c]{90}{\textbf{CoauthorCS}}\hspace{0.5cm}}
    ~ & \multicolumn{6}{c}{\textbf{logit based}} \\ \cline{2-8}
    ~ & VGCN-entropy & 87.91 ± 1.36 & 82.35 ± 3.07 & 87.94 ± 2.38 & 63.41 ± 4.56 & 90.78 ± 1.80 & 66.43 ± 5.29 \\
    
    ~ & VGCN-max-score & 87.52 ± 1.52 & 81.62 ± 3.49 & 87.61 ± 2.29 & 63.20 ± 4.70 & 90.57 ± 1.69 & 66.29 ± 4.94 \\
    
    ~ & VGCN-energy & 86.84 ± 1.68 & 81.74 ± 3.49 & 85.97 ± 2.98 & 59.91 ± 5.61 & 88.61 ± 2.59 & 62.81 ± 6.29 \\
    
    ~ & VGCN-gnnsafe & 90.96 ± 1.18 & 88.49 ± 1.79 & \textcolor{blue}{92.73 ± 1.71} & 72.67 ± 5.27 & \textcolor{blue}{93.54 ± 1.71} & 74.31 ± 5.66 \\
    
    ~ & VGCN-dropout & 85.30 ± 3.44 & 79.66 ± 4.03 & 87.40 ± 3.09 & 63.03 ± 7.99 & 91.36 ± 1.96 & 68.18 ± 5.92 \\
    
    ~ & VGCN-ensemble & 87.05 ± 2.76 & 81.33 ± 3.79 & 87.33 ± 1.65 & 61.69 ± 4.85 & 88.14 ± 4.01 & 61.03 ± 8.73 \\
    \cline{2-8} ~ & \multicolumn{6}{c}{\textbf{evidential based}} \\ \cline{2-8}
    ~ & GPN & 89.67 ± 1.78 & 87.20 ± 2.06 & 84.34 ± 2.20 & 59.97 ± 4.38 & 83.29 ± 2.62 & 53.27 ± 4.74 \\
    
    ~ & SGCN-GKDE & 66.45 ± 2.68 & 52.00 ± 2.57 & 56.51 ± 4.71 & 24.88 ± 3.03 & 59.68 ± 2.52 & 23.81 ± 2.42 \\
    
    ~ & EGCN & 85.34 ± 1.99 & 80.89 ± 3.07 & 83.12 ± 5.92 & 58.61 ± 10.64 & 86.71 ± 2.18 & 60.21 ± 5.57 \\
    
    ~ & EGCN-vacuity-prop & 90.57 ± 1.65 & 87.85 ± 1.93 & \textcolor{red}{93.43 ± 1.85} & \textcolor{blue}{75.35 ± 5.35} & \textcolor{red}{94.31 ± 1.29} & \textcolor{blue}{75.46 ± 4.94} \\
    
    ~ & EGCN-evidence-prop & 89.52 ± 1.27 & 87.42 ± 1.32 & 82.42 ± 1.52 & 55.37 ± 3.16 & 80.51 ± 1.62 & 47.61 ± 2.23 \\
    
    ~ & EGCN-vacuity-evidence-prop & \textcolor{blue}{92.00 ± 1.75} & \textcolor{blue}{89.93 ± 1.49} & 86.92 ± 1.15 & 59.67 ± 2.26 & 86.18 ± 2.47 & 53.92 ± 4.65 \\
    \cline{2-8} ~ & \multicolumn{6}{c}{\textbf{ours}} \\ \cline{2-8}
    ~ & EPN & 82.30 ± 7.91 & 73.94 ± 10.26 & 80.18 ± 10.59 & 51.42 ± 14.39 & 79.00 ± 8.41 & 43.55 ± 10.65 \\
    \rowcolor{gray!30}
    ~ & EPN-reg & \textcolor{red}{95.09 ± 1.37} & \textcolor{red}{94.47 ± 1.29} & 91.03 ± 4.06 & \textcolor{red}{76.13 ± 9.48} & 93.30 ± 3.77 & \textcolor{red}{76.65 ± 12.16} \\
    \midrule
    \multirow{17}{*}{\rotatebox[origin=c]{90}{\textbf{CoauthorPhysics}}\hspace{0.5cm}} 
    ~ & \multicolumn{6}{c}{\textbf{logit based}} \\ \cline{2-8}
    ~ & VGCN-entropy & 92.09 ± 0.88 & 71.22 ± 2.14 & 80.97 ± 5.59 & 66.64 ± 8.38 & 85.38 ± 1.73 & 84.64 ± 1.95 \\
    
    ~ & VGCN-max-score & 91.48 ± 0.94 & 66.62 ± 1.60 & 80.73 ± 5.39 & 65.13 ± 7.97 & 85.77 ± 1.84 & 85.27 ± 1.98 \\
    
    ~ & VGCN-energy & 92.78 ± 0.93 & 76.26 ± 2.22 & 81.75 ± 5.60 & 69.05 ± 8.07 & 84.65 ± 2.16 & 84.84 ± 2.31 \\
    
    ~ & VGCN-gnnsafe & \textcolor{red}{95.60 ± 0.47} & \textcolor{red}{84.81 ± 1.41} & 88.18 ± 4.82 & 77.54 ± 7.66 & 89.50 ± 2.10 & 88.62 ± 2.14 \\
    
    ~ & VGCN-dropout & 92.52 ± 1.69 & 75.95 ± 5.99 & 84.53 ± 2.50 & 73.10 ± 4.57 & 85.30 ± 7.71 & 85.04 ± 7.19 \\
    
    ~ & VGCN-ensemble & 89.00 ± 3.47 & 67.67 ± 8.72 & 82.90 ± 3.65 & 70.26 ± 4.72 & 86.73 ± 1.62 & 85.26 ± 1.56 \\
    \cline{2-8} ~ & \multicolumn{6}{c}{\textbf{evidential based}} \\ \cline{2-8}
    ~ & GPN & 90.60 ± 2.40 & 75.05 ± 5.70 & 84.12 ± 12.22 & 77.11 ± 12.91 & \textcolor{blue}{94.31 ± 1.98} & 94.54 ± 2.17 \\
    
    ~ & SGCN-GKDE & 92.30 ± 2.52 & 76.94 ± 7.12 & 89.09 ± 4.92 & 81.87 ± 8.66 & 93.80 ± 2.55 & \textcolor{red}{95.16 ± 2.58} \\
    
    ~ & EGCN & 88.97 ± 1.73 & 65.52 ± 4.64 & 83.82 ± 4.60 & 70.24 ± 7.55 & 82.66 ± 2.38 & 84.04 ± 2.07 \\
    
    ~ & EGCN-vacuity-prop & 92.79 ± 2.76 & 77.43 ± 5.57 & \textcolor{blue}{94.91 ± 2.08} & \textcolor{blue}{89.23 ± 3.85} & 92.69 ± 2.40 & 91.91 ± 2.93 \\
    
    ~ & EGCN-evidence-prop & 90.80 ± 1.14 & 73.43 ± 2.12 & 91.81 ± 1.90 & 84.78 ± 2.97 & 93.48 ± 1.66 & 93.71 ± 1.85 \\
    
    ~ & EGCN-vacuity-evidence-prop & 93.97 ± 1.46 & 78.89 ± 3.41 & \textcolor{red}{95.59 ± 0.75} & \textcolor{red}{90.61 ± 1.27} & \textcolor{red}{95.62 ± 1.10} & \textcolor{blue}{95.04 ± 1.52} \\
    \cline{2-8} ~ & \multicolumn{6}{c}{\textbf{ours}} \\ \cline{2-8}
    ~ & EPN & \textcolor{blue}{94.10 ± 2.68} & 78.86 ± 6.02 & 88.18 ± 5.77 & 76.47 ± 9.14 & 83.43 ± 10.27 & 83.73 ± 9.45 \\
    \rowcolor{gray!30}
    ~ & EPN-reg & 93.59 ± 2.41 & \textcolor{blue}{79.31 ± 5.13} & 87.14 ± 7.33 & 78.99 ± 9.06 & 89.05 ± 9.34 & 88.37 ± 9.43 \\
    \bottomrule
    \end{tabular}
    }
    \caption{OOD detection results (\textcolor{red}{best} and \textcolor{blue}{runner-up}) with GCN as backbone on AmazonPhotos, AmazonComputers, CoauthorCS, and CoauthorPhysics for OS-1, OS-2 and OS-3.}
    \label{tab:ood_gcn_2}
\end{table}

\begin{table}[!ht]
    \centering
    \small
    \setlength{\tabcolsep}{20pt}
    \resizebox{0.98\textwidth}{!}{%
    \begin{tabular}{@{}p{2cm}l|cc|cc@{}}
    \toprule
    \multirow{2}{*}{\textbf{Dataset}} & \multirow{2}{*}{\textbf{Model}} & \multicolumn{2}{c|}{\textbf{OS-4 (random)}} & \multicolumn{2}{c}{\textbf{OS-5 (random)}} \\ 
    ~ & ~ & OOD-AUROC$\uparrow$ & OOD-AUPR$\uparrow$ & OOD-AUROC$\uparrow$ & OOD-AUPR$\uparrow$  \\ \midrule
    \multirow{17}{*}{\rotatebox[origin=c]{90}{\textbf{CoraML}}\hspace{0.5cm}} 
    ~ & \multicolumn{4}{c}{\textbf{logit based}} \\ \cline{2-6}
    ~ & VGCN-entropy & 79.61 ± 2.32 & 64.81 ± 4.42 & 88.71 ± 0.69 & 87.21 ± 0.95 \\
    
    ~ & VGCN-max-score & 79.05 ± 2.34 & 63.82 ± 4.35 & 87.73 ± 0.52 & 85.78 ± 0.55 \\
    
    ~ & VGCN-energy & 80.47 ± 2.28 & 66.52 ± 4.60 & 88.99 ± 1.00 & 87.64 ± 1.24 \\
    
    ~ & VGCN-gnnsafe & \textcolor{red}{87.50 ± 1.03} & \textcolor{red}{74.95 ± 3.53} & \textcolor{red}{93.48 ± 0.57} & \textcolor{red}{93.04 ± 0.84} \\
    
    ~ & VGCN-dropout & 75.97 ± 3.88 & 59.26 ± 5.66 & 87.12 ± 1.40 & 84.79 ± 1.47 \\
    
    ~ & VGCN-ensemble & 80.22 ± 2.30 & 63.74 ± 2.83 & 88.63 ± 1.51 & 87.16 ± 1.90 \\
    \cline{2-6} ~ & \multicolumn{4}{c}{\textbf{evidential based}} \\ \cline{2-6}
    ~ & GPN & 79.51 ± 2.24 & 60.55 ± 2.88 & 90.30 ± 1.94 & 90.23 ± 1.65 \\
    
    ~ & SGCN-GKDE & 81.29 ± 3.15 & 67.59 ± 5.30 & 89.36 ± 1.37 & 88.58 ± 2.26 \\
    
    ~ & EGCN & 75.17 ± 3.07 & 58.81 ± 3.85 & 83.77 ± 2.07 & 82.20 ± 2.31 \\
    
    ~ & EGCN-vacuity-prop & \textcolor{blue}{85.97 ± 2.43} & 72.33 ± 3.25 & \textcolor{blue}{92.82 ± 0.54} & \textcolor{blue}{92.68 ± 0.63} \\
    
    ~ & EGCN-evidence-prop & 80.79 ± 1.73 & 64.24 ± 2.29 & 90.22 ± 1.25 & 89.95 ± 1.44 \\
    
    ~ & EGCN-vacuity-evidence-prop & 84.84 ± 1.46 & 66.62 ± 2.26 & 91.61 ± 1.09 & 90.92 ± 1.15 \\
    \cline{2-6} ~ & \multicolumn{4}{c}{\textbf{ours}} \\ \cline{2-6}
    ~ & EPN & 82.75 ± 2.33 & 62.82 ± 3.12 & 86.99 ± 3.12 & 85.45 ± 3.75 \\
    \rowcolor{gray!30}
    ~ & EPN-reg & 85.46 ± 4.94 & \textcolor{blue}{73.14 ± 7.73} & 91.27 ± 2.73 & 89.88 ± 3.06 \\
    \midrule
    \multirow{17}{*}{\rotatebox[origin=c]{90}{\textbf{CiteSeer}}\hspace{0.5cm}}
    ~ & \multicolumn{4}{c}{\textbf{logit based}} \\ \cline{2-6}
    ~ & VGCN-entropy & 87.40 ± 2.12 & 75.36 ± 4.73 & 84.22 ± 3.89 & 68.74 ± 6.45 \\
    
    ~ & VGCN-max-score & 86.61 ± 2.13 & 73.71 ± 4.99 & 83.65 ± 3.87 & 67.14 ± 6.28 \\
    
    ~ & VGCN-energy & 87.51 ± 2.49 & 73.52 ± 5.92 & 84.74 ± 3.45 & 69.92 ± 6.03 \\
    
    ~ & VGCN-gnnsafe & 87.90 ± 2.61 & 74.69 ± 6.59 & 88.30 ± 2.72 & 73.16 ± 5.33 \\
    
    ~ & VGCN-dropout & 87.90 ± 1.10 & 75.44 ± 1.79 & 88.29 ± 1.90 & 74.27 ± 4.61 \\
    
    ~ & VGCN-ensemble & 84.70 ± 2.82 & 72.07 ± 5.43 & 87.13 ± 2.94 & 72.27 ± 4.84 \\
    \cline{2-6} ~ & \multicolumn{4}{c}{\textbf{evidential based}} \\ \cline{2-6}
    ~ & GPN & 77.92 ± 6.04 & 58.15 ± 7.38 & 90.10 ± 1.74 & 76.00 ± 3.64 \\
    
    ~ & SGCN-GKDE & \textcolor{red}{91.07 ± 1.76} & \textcolor{red}{82.70 ± 5.06} & 90.54 ± 3.49 & \textcolor{red}{81.98 ± 5.42} \\
    
    ~ & EGCN & 86.56 ± 1.69 & \textcolor{blue}{76.16 ± 2.53} & 86.72 ± 2.17 & 71.90 ± 3.33 \\
    
    ~ & EGCN-vacuity-prop & 86.32 ± 1.73 & 65.99 ± 3.26 & \textcolor{red}{91.48 ± 2.66} & 78.67 ± 5.19 \\
    
    ~ & EGCN-evidence-prop & 80.20 ± 5.61 & 61.65 ± 9.60 & \textcolor{blue}{91.27 ± 1.16} & \textcolor{blue}{80.83 ± 1.72} \\
    
    ~ & EGCN-vacuity-evidence-prop & 78.64 ± 3.63 & 56.89 ± 3.43 & 90.97 ± 1.10 & 78.29 ± 2.71 \\
    \cline{2-6} ~ & \multicolumn{4}{c}{\textbf{ours}} \\ \cline{2-6}
    ~ & EPN & 86.04 ± 2.80 & 67.58 ± 5.96 & 87.85 ± 2.43 & 74.05 ± 3.36 \\
    \rowcolor{gray!30}
    ~ & EPN-reg & \textcolor{blue}{88.78 ± 1.52} & 74.07 ± 4.81 & 90.53 ± 3.06 & 78.03 ± 5.92 \\
    \midrule
    \multirow{17}{*}{\rotatebox[origin=c]{90}{\textbf{PubMed}}\hspace{0.5cm}} 
    ~ & \multicolumn{4}{c}{\textbf{logit based}} \\ \cline{2-6}
    ~ & VGCN-entropy & 52.81 ± 13.09 & 23.43 ± 6.85 & 65.37 ± 3.92 & 53.70 ± 3.60 \\
    
    ~ & VGCN-max-score & 52.81 ± 13.09 & 23.42 ± 6.85 & 65.37 ± 3.92 & 53.70 ± 3.60 \\
    
    ~ & VGCN-energy & 53.01 ± 12.62 & 23.38 ± 6.63 & 65.34 ± 4.02 & 53.83 ± 4.13 \\
    
    ~ & VGCN-gnnsafe & 56.76 ± 13.97 & 26.98 ± 9.96 & 66.73 ± 4.41 & 54.36 ± 4.42 \\
    
    ~ & VGCN-dropout & 62.20 ± 3.44 & 27.54 ± 2.71 & 62.77 ± 2.32 & 51.30 ± 2.20 \\
    
    ~ & VGCN-ensemble & 61.94 ± 8.09 & 28.21 ± 5.10 & 64.46 ± 1.17 & 52.32 ± 1.00 \\
    \cline{2-6} ~ & \multicolumn{4}{c}{\textbf{evidential based}} \\ \cline{2-6}
    ~ & GPN & 68.27 ± 3.25 & 39.59 ± 4.83 & 65.52 ± 4.25 & 56.89 ± 4.42 \\
    
    ~ & SGCN-GKDE & 71.44 ± 3.32 & 45.26 ± 5.66 & 69.48 ± 5.00 & 63.02 ± 5.44 \\
    
    ~ & EGCN & 63.62 ± 2.63 & 31.16 ± 2.86 & 64.97 ± 3.53 & 58.35 ± 3.95 \\
    
    ~ & EGCN-vacuity-prop & 73.59 ± 4.41 & 43.24 ± 5.36 & 71.09 ± 6.16 & 62.29 ± 7.35 \\
    
    ~ & EGCN-evidence-prop & \textcolor{blue}{76.98 ± 3.32} & \textcolor{blue}{51.36 ± 7.36} & \textcolor{blue}{73.90 ± 2.89} & \textcolor{red}{67.22 ± 3.05} \\
    
    ~ & EGCN-vacuity-evidence-prop & \textcolor{red}{79.04 ± 2.75} & \textcolor{red}{53.20 ± 5.24} & \textcolor{red}{75.59 ± 8.46} & \textcolor{blue}{67.02 ± 7.78} \\
    \cline{2-6} ~ & \multicolumn{4}{c}{\textbf{ours}} \\ \cline{2-6}
    ~ & EPN & 63.01 ± 12.63 & 32.61 ± 9.14 & 65.99 ± 0.94 & 52.94 ± 0.74 \\
    \rowcolor{gray!30}
    ~ & EPN-reg & 69.39 ± 3.78 & 36.84 ± 5.64 & 64.78 ± 4.79 & 53.61 ± 4.34 \\
    \midrule
    \multirow{17}{*}{\rotatebox[origin=c]{90}{\textbf{OGBN-arxiv}}\hspace{0.5cm}} 
    ~ & \multicolumn{4}{c}{\textbf{logit based}} \\ \cline{2-6}
    ~ & VGCN-entropy & 66.40 ± 0.63 & 54.46 ± 0.70 & 78.24 ± 0.50 & 69.00 ± 0.72 \\
    
    ~ & VGCN-max-score & 63.91 ± 0.59 & 51.46 ± 0.59 & 76.16 ± 0.50 & 65.62 ± 0.71 \\
    
    ~ & VGCN-energy & 68.01 ± 1.06 & 54.73 ± 1.26 & 81.86 ± 0.40 & 73.21 ± 0.62 \\
    
    ~ & VGCN-gnnsafe & 59.86 ± 1.10 & 48.91 ± 0.90 & 73.76 ± 0.57 & 66.81 ± 0.70 \\
    
    ~ & VGCN-dropout & 66.56 ± 0.72 & 54.64 ± 0.91 & 78.38 ± 0.39 & 69.20 ± 0.57 \\
    
    ~ & VGCN-ensemble & 66.78 ± 0.15 & 54.73 ± 0.19 & 78.76 ± 0.12 & 69.57 ± 0.18 \\
    \cline{2-6} ~ & \multicolumn{4}{c}{\textbf{evidential based}} \\ \cline{2-6}
    ~ & GPN & 75.41 ± 0.72 & 63.39 ± 1.13 & 81.95 ± 0.08 & 72.27 ± 0.22 \\
    
    ~ & SGCN-GKDE & n.a & n.a & n.a & n.a \\
    
    ~ & EGCN & 78.37 ± 1.58 & 68.70 ± 2.49 & 81.75 ± 1.16 & 72.63 ± 1.65 \\
    
    ~ & EGCN-vacuity-prop & \textcolor{blue}{81.93 ± 1.28} & \textcolor{blue}{72.20 ± 2.17} & \textcolor{blue}{83.38 ± 1.43} & \textcolor{blue}{74.46 ± 2.19} \\
    
    ~ & EGCN-evidence-prop & 68.31 ± 1.77 & 54.87 ± 1.75 & 75.69 ± 1.07 & 63.40 ± 1.28 \\
    
    ~ & EGCN-vacuity-evidence-prop & 71.33 ± 1.88 & 59.69 ± 1.59 & 77.97 ± 1.49 & 67.32 ± 1.64 \\
    \cline{2-6} ~ & \multicolumn{4}{c}{\textbf{ours}} \\ \cline{2-6}
    ~ & EPN & 65.82 ± 2.06 & 50.49 ± 2.09 & 66.45 ± 1.37 & 50.08 ± 1.63 \\
    \rowcolor{gray!30}
    ~ & EPN-reg & \textcolor{red}{84.99 ± 5.48} & \textcolor{red}{80.27 ± 6.85} & \textcolor{red}{83.52 ± 0.52} & \textcolor{red}{76.59 ± 0.55} \\
    \bottomrule
    \end{tabular}
    }
    \caption{OOD detection results (\textcolor{red}{best} and \textcolor{blue}{runner-up}) with GCN as backbone on  CoraML, CiteSeer, PubMed, and OGBN-arxiv for OS-4 and OS-5.}
    \label{tab:ood_gcn_3}
\end{table}

\begin{table}[!ht]
    \centering
    \small
    \setlength{\tabcolsep}{20pt}
    \resizebox{0.98\textwidth}{!}{%
    \begin{tabular}{@{}p{2cm}l|cc|cc@{}}
    \toprule
    \multirow{2}{*}{\textbf{Dataset}} & \multirow{2}{*}{\textbf{Model}} & \multicolumn{2}{c|}{\textbf{OS-4 (random)}} & \multicolumn{2}{c}{\textbf{OS-5 (random)}} \\ 
    ~ & ~ & OOD-AUROC$\uparrow$ & OOD-AUPR$\uparrow$ & OOD-AUROC$\uparrow$ & OOD-AUPR$\uparrow$  \\ \midrule
    \multirow{17}{*}{\rotatebox[origin=c]{90}{\textbf{AmazonPhotos}}\hspace{0.5cm}} 
    ~ & \multicolumn{4}{c}{\textbf{logit based}} \\ \cline{2-6}
    ~ & VGCN-entropy & 79.91 ± 7.26 & 84.64 ± 5.50 & 87.65 ± 2.58 & 70.59 ± 4.34 \\
    
    ~ & VGCN-max-score & 79.93 ± 7.63 & 84.65 ± 6.08 & 87.54 ± 2.45 & 70.36 ± 3.59 \\
    
    ~ & VGCN-energy & 77.04 ± 5.86 & 82.06 ± 4.60 & 85.88 ± 3.54 & 68.22 ± 7.05 \\
    
    ~ & VGCN-gnnsafe & 77.95 ± 10.75 & 82.37 ± 6.46 & \textcolor{red}{91.95 ± 0.89} & \textcolor{blue}{74.05 ± 2.07} \\
    
    ~ & VGCN-dropout & 71.53 ± 6.35 & 79.22 ± 3.59 & 85.53 ± 4.33 & 65.99 ± 5.76 \\
    
    ~ & VGCN-ensemble & 76.90 ± 4.76 & 82.43 ± 3.38 & 87.21 ± 3.49 & 69.59 ± 4.25 \\
    \cline{2-6} ~ & \multicolumn{4}{c}{\textbf{evidential based}} \\ \cline{2-6}
    ~ & GPN & \textcolor{red}{92.63 ± 1.68} & \textcolor{red}{93.19 ± 1.36} & 86.10 ± 2.58 & 68.56 ± 5.16 \\
    
    ~ & SGCN-GKDE & 77.81 ± 2.63 & 82.89 ± 2.12 & 84.13 ± 3.18 & 67.50 ± 6.19 \\
    
    ~ & EGCN & 74.28 ± 4.45 & 79.23 ± 3.52 & 70.82 ± 1.96 & 40.41 ± 2.10 \\
    
    ~ & EGCN-vacuity-prop & \textcolor{blue}{83.14 ± 2.64} & 86.07 ± 2.06 & 84.36 ± 2.21 & 60.82 ± 3.60 \\
    
    ~ & EGCN-evidence-prop & 64.24 ± 2.52 & 72.35 ± 1.76 & 78.39 ± 5.80 & 59.30 ± 8.44 \\
    
    ~ & EGCN-vacuity-evidence-prop & 66.26 ± 6.61 & 75.07 ± 3.54 & 82.29 ± 3.68 & 64.97 ± 5.45 \\
    \cline{2-6} ~ & \multicolumn{4}{c}{\textbf{ours}} \\ \cline{2-6}
    ~ & EPN & 78.66 ± 2.93 & 82.08 ± 2.89 & 88.56 ± 1.72 & 69.86 ± 4.23 \\
    \rowcolor{gray!30}
    ~ & EPN-reg & 83.49 ± 7.72 & \textcolor{blue}{87.38 ± 6.14} & \textcolor{blue}{90.82 ± 7.08} & \textcolor{red}{80.53 ± 10.59} \\
    \midrule
    \multirow{17}{*}{\rotatebox[origin=c]{90}{\textbf{AmazonComputers}}\hspace{0.5cm}} 
    ~ & \multicolumn{4}{c}{\textbf{logit based}} \\ \cline{2-6}
    ~ & VGCN-entropy & 62.95 ± 1.47 & 54.31 ± 2.28 & 77.19 ± 2.20 & 85.40 ± 1.63 \\
    
    ~ & VGCN-max-score & 62.31 ± 1.47 & 53.54 ± 2.28 & \textcolor{blue}{79.09 ± 2.58} & \textcolor{blue}{87.37 ± 2.08} \\
    
    ~ & VGCN-energy & 63.58 ± 2.11 & 53.50 ± 2.96 & 70.59 ± 2.87 & 81.42 ± 2.05 \\
    
    ~ & VGCN-gnnsafe & 72.82 ± 2.25 & 56.73 ± 2.18 & 76.72 ± 4.65 & 84.20 ± 3.20 \\
    
    ~ & VGCN-dropout & 60.96 ± 4.33 & 53.22 ± 4.21 & 76.43 ± 3.77 & 84.66 ± 2.99 \\
    
    ~ & VGCN-ensemble & 62.74 ± 4.43 & 54.95 ± 4.58 & 78.00 ± 1.70 & 85.63 ± 1.10 \\
    \cline{2-6} ~ & \multicolumn{4}{c}{\textbf{evidential based}} \\ \cline{2-6}
    ~ & GPN & \textcolor{red}{78.38 ± 4.34} & \textcolor{red}{65.30 ± 5.01} & \textcolor{red}{86.16 ± 1.68} & \textcolor{red}{90.87 ± 1.21} \\
    
    ~ & SGCN-GKDE & 65.44 ± 3.75 & 55.73 ± 4.39 & 67.06 ± 2.62 & 78.53 ± 1.78 \\
    
    ~ & EGCN & 55.42 ± 2.23 & 44.43 ± 1.75 & 64.89 ± 1.70 & 77.30 ± 1.16 \\
    
    ~ & EGCN-vacuity-prop & 62.33 ± 5.39 & 52.04 ± 4.62 & 70.49 ± 2.01 & 80.46 ± 2.01 \\
    
    ~ & EGCN-evidence-prop & 72.13 ± 4.55 & 57.55 ± 3.71 & 55.73 ± 8.76 & 71.41 ± 4.74 \\
    
    ~ & EGCN-vacuity-evidence-prop & \textcolor{blue}{75.53 ± 4.80} & 62.79 ± 4.11 & 61.86 ± 5.23 & 75.00 ± 3.19 \\
    \cline{2-6} ~ & \multicolumn{4}{c}{\textbf{ours}} \\ \cline{2-6}
    ~ & EPN & 74.37 ± 1.30 & \textcolor{blue}{64.45 ± 2.11} & 55.24 ± 6.59 & 74.63 ± 3.34 \\
    \rowcolor{gray!30}
    ~ & EPN-reg & 70.28 ± 7.69 & 62.52 ± 7.80 & 76.43 ± 5.16 & 85.97 ± 4.02 \\
    \midrule
    \multirow{17}{*}{\rotatebox[origin=c]{90}{\textbf{CoauthorCS}}\hspace{0.5cm}} 
    ~ & \multicolumn{4}{c}{\textbf{logit based}} \\ \cline{2-6}
    ~ & VGCN-entropy & 89.98 ± 2.33 & 81.38 ± 4.60 & 91.11 ± 1.92 & 73.87 ± 5.34 \\
    
    ~ & VGCN-max-score & 89.17 ± 2.32 & 79.64 ± 4.75 & 91.15 ± 1.65 & 73.81 ± 4.95 \\
    
    ~ & VGCN-energy & 88.70 ± 2.54 & 80.39 ± 4.82 & 87.51 ± 3.05 & 69.77 ± 5.21 \\
    
    ~ & VGCN-gnnsafe & 94.17 ± 2.07 & 89.55 ± 3.40 & \textcolor{blue}{93.32 ± 2.10} & \textcolor{blue}{81.35 ± 4.70} \\
    
    ~ & VGCN-dropout & 86.94 ± 2.82 & 75.70 ± 5.84 & 89.96 ± 2.09 & 70.77 ± 5.50 \\
    
    ~ & VGCN-ensemble & 89.93 ± 1.81 & 82.09 ± 2.69 & 90.24 ± 1.30 & 73.00 ± 4.59 \\
    \cline{2-6} ~ & \multicolumn{4}{c}{\textbf{evidential based}} \\ \cline{2-6}
    ~ & GPN & 93.78 ± 1.22 & 90.35 ± 1.93 & 87.04 ± 3.16 & 68.67 ± 6.08 \\
    
    ~ & SGCN-GKDE & 55.46 ± 7.00 & 38.83 ± 6.19 & 48.81 ± 5.08 & 25.01 ± 3.18 \\
    
    ~ & EGCN & 88.02 ± 2.73 & 80.44 ± 4.13 & 83.36 ± 3.33 & 63.21 ± 6.46 \\
    
    ~ & EGCN-vacuity-prop & 94.33 ± 1.22 & 89.93 ± 2.94 & \textcolor{red}{93.66 ± 1.48} & \textcolor{red}{81.77 ± 2.65} \\
    
    ~ & EGCN-evidence-prop & 94.63 ± 0.47 & 91.57 ± 0.68 & 83.69 ± 1.83 & 62.72 ± 3.41 \\
    
    ~ & EGCN-vacuity-evidence-prop & \textcolor{blue}{96.67 ± 0.70} & \textcolor{blue}{93.70 ± 1.40} & 89.32 ± 1.76 & 70.42 ± 2.85 \\
    \cline{2-6} ~ & \multicolumn{4}{c}{\textbf{ours}} \\ \cline{2-6}
    ~ & EPN & 88.65 ± 2.54 & 79.70 ± 4.93 & 81.55 ± 8.11 & 66.38 ± 7.45 \\
    \rowcolor{gray!30}
    ~ & EPN-reg & \textcolor{red}{96.96 ± 1.10} & \textcolor{red}{95.39 ± 1.29} & 92.98 ± 3.32 & 80.12 ± 9.41 \\
    \midrule
    \multirow{17}{*}{\rotatebox[origin=c]{90}{\textbf{CoauthorPhysics}}\hspace{0.5cm}} 
    ~ & \multicolumn{4}{c}{\textbf{logit based}} \\ \cline{2-6}
    ~ & VGCN-entropy & 86.60 ± 3.77 & 69.96 ± 5.04 & 81.39 ± 8.89 & 85.14 ± 6.50 \\
    
    ~ & VGCN-max-score & 86.41 ± 4.12 & 68.59 ± 6.30 & 80.82 ± 8.70 & 84.01 ± 6.25 \\
    
    ~ & VGCN-energy & 87.05 ± 2.84 & 73.44 ± 4.32 & 84.03 ± 8.49 & 87.85 ± 6.30 \\
    
    ~ & VGCN-gnnsafe & 92.88 ± 2.18 & 83.85 ± 3.57 & 88.33 ± 8.30 & 90.24 ± 6.54 \\
    
    ~ & VGCN-dropout & 85.39 ± 3.80 & 70.21 ± 8.12 & 85.85 ± 6.75 & 88.86 ± 5.17 \\
    
    ~ & VGCN-ensemble & 84.05 ± 6.72 & 67.60 ± 8.40 & 81.16 ± 6.78 & 84.63 ± 4.42 \\
    \cline{2-6} ~ & \multicolumn{4}{c}{\textbf{evidential based}} \\ \cline{2-6}
    ~ & GPN & 83.22 ± 8.99 & 74.01 ± 11.49 & \textcolor{blue}{96.79 ± 1.06} & \textcolor{red}{97.62 ± 0.77} \\
    
    ~ & SGCN-GKDE & 87.37 ± 5.05 & 78.89 ± 6.97 & 95.55 ± 1.72 & \textcolor{blue}{97.19 ± 1.33} \\
    
    ~ & EGCN & 84.68 ± 5.05 & 69.58 ± 7.20 & 82.98 ± 3.37 & 88.06 ± 2.69 \\
    
    ~ & EGCN-vacuity-prop & \textcolor{blue}{93.44 ± 2.74} & 84.98 ± 6.12 & 87.72 ± 4.46 & 89.82 ± 2.83 \\
    
    ~ & EGCN-evidence-prop & 92.37 ± 1.84 & \textcolor{blue}{85.42 ± 2.88} & 95.35 ± 1.17 & 95.94 ± 0.87 \\
    
    ~ & EGCN-vacuity-evidence-prop & \textcolor{red}{94.87 ± 1.59} & \textcolor{red}{89.35 ± 2.79} & \textcolor{red}{96.79 ± 0.77} & 96.74 ± 0.99 \\
    \cline{2-6} ~ & \multicolumn{4}{c}{\textbf{ours}} \\ \cline{2-6}
    ~ & EPN & 88.37 ± 8.07 & 76.68 ± 9.88 & 84.51 ± 10.09 & 87.17 ± 7.89 \\
    \rowcolor{gray!30}
    ~ & EPN-reg & 92.21 ± 3.34 & 82.51 ± 5.95 & 86.70 ± 3.50 & 89.00 ± 2.94 \\
    \bottomrule
    \end{tabular}
    }
    \caption{OOD detection results (\textcolor{red}{best} and \textcolor{blue}{runner-up}) with GCN as backbone on  AmazonPhotos, AmazonComputers, CoauthorCS, and CoauthorPhysics for OS-4 and OS-5.}
    \label{tab:ood_gcn_4}
\end{table}

\subsection{Discussions}
In the main text, we present the representative results in Table~\ref{tab:abl} and Table~\ref{tab:discussion}. Here, we show more detailed results.  

\textbf{Robustness of GNN backbone.} In this evaluation, we use the GAT architecture as the backbone for all models, except for GPN. Specifically, GAT is employed for the VGNN-based and EGNN models, while SGNN-GKDE uses GAT for both the probability teacher and model backbone. For our proposed EPN, we also leverage GAT as the pretrained model, from which we extract features to feed into the EPN. The average performance rankings are summarized in Table~\ref{tab:ood_gat_rank}, with detailed results across all LOC settings presented in Tables~\ref{tab:ood_gat_1}--~\ref{tab:ood_gat_4}. Our findings with GAT as the backbone are consistent with those observed using other architectures, reinforcing the robustness of our approach across different GNN backbones.

\begin{table*}[htbp]
    \centering
    \small
    \setlength{\tabcolsep}{1.0pt} 
    \resizebox{0.9\linewidth}{!}{%
    \begin{tabular}{l|ccccccc|c}
    \toprule
    \multirow{2}{*}{\textbf{Model}} & \multirow{2}{*}{\textbf{CoraML}} & \multirow{2}{*}{\textbf{CiteSeer}} & \multirow{2}{*}{\textbf{PubMed}} &  \multicolumn{1}{c}{\textbf{Amazon}} & \multicolumn{1}{c}{\textbf{Amazon}} & \multicolumn{1}{c}{\textbf{Coauthor}} & \multicolumn{1}{c|}{\textbf{Coauthor}} & \multirow{2}{*}{\textbf{Average}} \\
     &  &  &   & \textbf{Photos} & \textbf{Computers} & \textbf{CS} & \textbf{Physics} &  \\
    \midrule
    \multicolumn{9}{c}{\textbf{logit based}} \\ 
    \midrule
    VGAT-entropy          & 7.8 & 6.2  & 10.9 & 6.8  & 7.2  & 7.4  & 8.6 & 7.8  \\
    VGAT-max-score        & 10.4 & 9.2  & 10.9 & 8.0  & 9.6  & 9.8  & 11.4 & 9.9 \\
    VGAT-energy           & 6.6 & 6.0  & 10.6 & 7.0  & 5.0  & 7.8  & 7.2 & 7.2  \\
    VGAT-GATsafe          & {\cellcolor[rgb]{0.957,0.8,0.8}}2.4 & {\cellcolor[rgb]{0.957,0.8,0.8}}4.0 & 8.6  & 7.8  & 6.0  & {\cellcolor[rgb]{0.812,0.886,0.953}}2.0 & {\cellcolor[rgb]{0.957,0.8,0.8}}2.6 & 4.8 \\
    VGAT-dropout          & 7.8 & 7.0  & 10.2 & 5.2  & 7.2  & 7.4  & 11.0 & 8.0  \\
    VGAT-ensemble         & 4.4 & 5.0  & 9.8  & 6.4  & {\cellcolor[rgb]{0.812,0.886,0.953}}4.0 & 6.2  & 8.0  & 6.8 \\
    \midrule \multicolumn{9}{c}{\textbf{evidential based}} \\ \midrule
    GPN                   & 11.0 & 12.8 & 5.0  & 3.8  & 6.2  & 7.8  & 8.6 & 7.9 \\
    SGAT-GKDE             & 11.0 & 6.4  & 5.2  & 14.0 & 13.8 & 13.8 & 9.6 & 10.0 \\
    EGAT                  & 8.4 & 11.0 & 5.8  & 7.8  & 7.4  & 3.8  & 7.6 & 7.4 \\
    EGAT-vacuity-prop     & 3.0 & 8.8  & 5.0  & {\cellcolor[rgb]{0.812,0.886,0.953}}3.6 & 4.6  & {\cellcolor[rgb]{0.957,0.8,0.8}}1.8 & {\cellcolor[rgb]{0.812,0.886,0.953}}4.0 & {\cellcolor[rgb]{0.812,0.886,0.953}}4.4 \\
    EGAT-evidence-prop    & 13.8 & 4.8  & {\cellcolor[rgb]{0.812,0.886,0.953}}4.0 & 13.0 & 13.2 & 13.2 & 9.6 & 10.2 \\
    EGAT-vacuity-evidence-prop  & 12.0 & 11.8 & {\cellcolor[rgb]{0.957,0.8,0.8}}2.8 & 12.0 & 11.8 & 11.8 &  4.4 & 9.7 \\
    \midrule \multicolumn{9}{c}{\textbf{ours}} \\ \midrule
    EPN         & 3.6 & 7.8  & 7.6  & 7.8  & 7.2  & 9.2  & 7.2 & 7.2 \\
    EPN-reg     & {\cellcolor[rgb]{0.812,0.886,0.953}}2.8 & {\cellcolor[rgb]{0.812,0.886,0.953}}4.2 & 8.6  & {\cellcolor[rgb]{0.957,0.8,0.8}}1.8 & {\cellcolor[rgb]{0.957,0.8,0.8}}1.8 & 3.0  & 5.2 & {\cellcolor[rgb]{0.957,0.8,0.8}}3.8 \\
    \bottomrule
    \end{tabular}
    }
    \caption{Average OOD detection rank (OOD-AUROC) ($\downarrow$) of each model over different datasets with GAT as the backbone. \colorbox[rgb]{0.957,0.8,0.8}{Best} and \colorbox[rgb]{0.812,0.886,0.953}{Runner-up} results are highlighted in red and blue.}
    \label{tab:ood_gat_rank}
\end{table*}

\begin{table}[!ht]
    \centering
    \small
    \setlength{\tabcolsep}{8pt}
    \resizebox{\textwidth}{!}{%
    \begin{tabular}{@{}p{1.2cm}l|cc|cc|cc@{}}
    \toprule
    \multirow{2}{*}{\textbf{Dataset}} & \multirow{2}{*}{\textbf{Model}} & \multicolumn{2}{c|}{\textbf{OS-1 (last)}} & \multicolumn{2}{c|}{\textbf{OS-2 (first)}} & \multicolumn{2}{c}{\textbf{OS-3 (random)}} \\ 
    ~ & ~ & OOD-AUROC$\uparrow$ & OOD-AUPR$\uparrow$ & OOD-AUROC$\uparrow$ & OOD-AUPR$\uparrow$ & OOD-AUROC$\uparrow$ & OOD-AUPR$\uparrow$ \\ \midrule
    \multirow{17}{*}{\rotatebox[origin=c]{90}{\textbf{CoraML}}\hspace{0.5cm}} 
    ~ & \multicolumn{6}{c}{\textbf{logit based}} \\ \cline{2-8}
    ~ & VGAT-entropy & 87.45 ± 1.11 & 84.38 ± 1.99 & 85.92 ± 1.20 & 77.86 ± 3.25 & 89.62 ± 1.10 & 89.73 ± 1.30 \\
    
    ~ & VGAT-max-score & 86.81 ± 1.19 & 83.07 ± 2.19 & 84.22 ± 1.37 & 75.62 ± 3.85 & 89.01 ± 1.30 & 89.49 ± 1.24 \\
    
    ~ & VGAT-energy & 87.76 ± 1.13 & 85.10 ± 1.88 & 86.95 ± 1.29 & 79.61 ± 3.14 & 89.69 ± 0.88 & 89.85 ± 1.34 \\
    
    ~ & VGAT-gnnsafe & 89.50 ± 1.10 & 87.04 ± 1.57 & \textcolor{blue}{88.90 ± 0.95} & \textcolor{blue}{84.01 ± 3.16} & \textcolor{red}{91.48 ± 0.89} & \textcolor{red}{92.55 ± 0.62} \\
    
    ~ & VGAT-dropout & 88.11 ± 1.56 & 85.95 ± 2.05 & 87.46 ± 2.20 & 77.96 ± 5.31 & 88.50 ± 0.96 & 89.25 ± 1.26 \\
    
    ~ & VGAT-ensemble & 89.48 ± 0.93 & 87.48 ± 1.44 & 88.53 ± 1.74 & 82.38 ± 2.74 & 89.32 ± 0.76 & 89.63 ± 1.52 \\
    \cline{2-8} ~ & \multicolumn{6}{c}{\textbf{evidential based}} \\ \cline{2-8}
    ~ & GPN & 86.83 ± 1.97 & 81.09 ± 2.60 & 79.37 ± 2.47 & 63.43 ± 3.72 & 88.41 ± 2.08 & 88.85 ± 2.29 \\
    
    ~ & SGAT-GKDE & 87.62 ± 6.49 & 84.08 ± 7.83 & 81.97 ± 16.52 & 74.12 ± 18.81 & 86.23 ± 12.54 & 86.86 ± 12.87 \\
    
    ~ & EGAT & 87.05 ± 1.70 & 81.77 ± 2.71 & 87.20 ± 4.86 & 78.51 ± 6.44 & 88.32 ± 1.66 & 87.97 ± 3.05 \\
    
    ~ & EGAT-vacuity-prop & 88.93 ± 1.27 & 85.09 ± 2.26 & \textcolor{red}{91.11 ± 0.86} & \textcolor{red}{84.56 ± 2.27} & 90.96 ± 1.16 & 91.65 ± 1.57 \\
    
    ~ & EGAT-evidence-prop & 78.80 ± 3.43 & 74.96 ± 3.66 & 77.10 ± 4.33 & 68.22 ± 5.79 & 81.69 ± 1.77 & 83.88 ± 1.82 \\
    
    ~ & EGAT-vacuity-evidence-prop & 85.35 ± 2.13 & 77.94 ± 2.37 & 84.60 ± 2.75 & 72.29 ± 3.98 & 87.03 ± 1.24 & 85.62 ± 1.72 \\
    \cline{2-8} ~ & \multicolumn{6}{c}{\textbf{ours}} \\ \cline{2-8}
    ~ & EPN & \textcolor{blue}{91.11 ± 1.10} & \textcolor{blue}{88.08 ± 2.07} & 86.90 ± 3.85 & 79.00 ± 6.60 & 90.44 ± 1.53 & 90.35 ± 2.39 \\
    \rowcolor{gray!30}
    ~ & EPN-reg & \textcolor{red}{91.21 ± 0.76} & \textcolor{red}{88.59 ± 1.83} & 87.41 ± 3.42 & 79.84 ± 6.71 & \textcolor{blue}{91.19 ± 1.20} & \textcolor{blue}{91.83 ± 0.85} \\
    \midrule
    \multirow{17}{*}{\rotatebox[origin=c]{90}{\textbf{CiteSeer}}\hspace{0.5cm}} 
    ~ & \multicolumn{6}{c}{\textbf{logit based}} \\ \cline{2-8}
    ~ & VGAT-entropy & 88.66 ± 2.65 & 69.55 ± 7.44 & 92.03 ± 1.12 & 80.90 ± 2.43 & 87.40 ± 2.01 & 72.42 ± 4.92 \\
    
    ~ & VGAT-max-score & 88.21 ± 2.71 & 68.69 ± 7.13 & 91.63 ± 1.29 & 80.00 ± 2.81 & 86.83 ± 1.82 & 71.10 ± 4.26 \\
    
    ~ & VGAT-energy & 89.04 ± 2.73 & 70.13 ± 7.65 & 92.21 ± 1.04 & 80.82 ± 2.48 & 87.57 ± 2.14 & 72.17 ± 5.18 \\
    
    ~ & VGAT-gnnsafe & 90.13 ± 2.67 & 71.97 ± 7.97 & \textcolor{red}{92.92 ± 0.85} & 82.00 ± 1.77 & 88.18 ± 2.08 & 72.95 ± 4.98 \\
    
    ~ & VGAT-dropout & 89.79 ± 1.72 & 72.95 ± 4.26 & 92.27 ± 0.66 & \textcolor{blue}{82.16 ± 1.65} & 86.32 ± 3.58 & 71.91 ± 4.19 \\
    
    ~ & VGAT-ensemble & 88.47 ± 1.26 & 68.80 ± 2.63 & 92.01 ± 0.73 & 82.09 ± 1.54 & 87.19 ± 2.36 & 74.43 ± 3.44 \\
    \cline{2-8} ~ & \multicolumn{6}{c}{\textbf{evidential based}} \\ \cline{2-8}
    ~ & GPN & 85.96 ± 2.58 & 64.70 ± 5.11 & 88.95 ± 1.88 & 76.96 ± 3.13 & 69.78 ± 11.16 & 54.10 ± 9.67 \\
    
    ~ & SGAT-GKDE & \textcolor{blue}{90.25 ± 2.09} & \textcolor{blue}{73.47 ± 6.79} & 91.37 ± 1.47 & 80.17 ± 3.68 & 83.57 ± 3.15 & 67.38 ± 4.42 \\
    
    ~ & EGAT & 88.18 ± 2.27 & 66.30 ± 6.36 & 89.55 ± 2.01 & 73.59 ± 5.20 & \textcolor{blue}{90.13 ± 0.83} & 77.34 ± 3.09 \\
    
    ~ & EGAT-vacuity-prop & 90.25 ± 2.10 & 70.75 ± 3.96 & 90.00 ± 1.18 & 76.19 ± 3.41 & 88.70 ± 2.46 & 73.62 ± 6.71 \\
    
    ~ & EGAT-evidence-prop & 88.26 ± 1.53 & 72.33 ± 5.10 & 90.49 ± 0.94 & 80.51 ± 2.16 & 89.07 ± 1.71 & \textcolor{blue}{78.36 ± 4.00} \\
    
    ~ & EGAT-vacuity-evidence-prop & 88.44 ± 0.81 & 65.79 ± 2.61 & 91.76 ± 0.74 & 78.85 ± 2.30 & 87.80 ± 1.12 & 71.94 ± 2.91 \\
    \cline{2-8} ~ & \multicolumn{6}{c}{\textbf{ours}} \\ \cline{2-8}
    ~ & EPN & 88.34 ± 2.44 & 67.11 ± 6.41 & 92.15 ± 1.66 & 81.44 ± 4.60 & \textcolor{red}{90.73 ± 1.43} & \textcolor{red}{78.75 ± 3.41} \\
    \rowcolor{gray!30}
    ~ & EPN-reg & \textcolor{red}{90.96 ± 1.99} & \textcolor{red}{75.92 ± 4.67} & \textcolor{blue}{92.56 ± 1.51} & \textcolor{red}{82.60 ± 3.64} & 88.62 ± 3.65 & 75.12 ± 5.16 \\
    \midrule
    \multirow{17}{*}{\rotatebox[origin=c]{90}{\textbf{PubMed}}\hspace{0.5cm}} 
    ~ & \multicolumn{6}{c}{\textbf{logit based}} \\ \cline{2-8}
    ~ & VGAT-entropy & 65.95 ± 2.78 & 52.30 ± 2.41 & 65.59 ± 4.82 & 31.25 ± 4.70 & 50.44 ± 6.23 & 40.07 ± 3.87 \\
    
    ~ & VGAT-max-score & 65.95 ± 2.78 & 52.30 ± 2.41 & 65.59 ± 4.82 & 31.25 ± 4.71 & 50.44 ± 6.23 & 40.07 ± 3.87 \\
    
    ~ & VGAT-energy & 65.96 ± 2.86 & 52.30 ± 2.47 & 65.48 ± 5.01 & 31.12 ± 4.85 & 50.60 ± 6.26 & 40.20 ± 3.93 \\
    
    ~ & VGAT-gnnsafe & 66.09 ± 3.29 & 53.41 ± 2.58 & 67.65 ± 5.13 & 36.38 ± 6.34 & 49.55 ± 6.93 & 39.18 ± 4.40 \\
    
    ~ & VGAT-dropout & 66.46 ± 1.57 & 53.44 ± 2.19 & 63.77 ± 7.61 & 29.63 ± 6.76 & 52.55 ± 6.04 & 41.31 ± 3.88 \\
    
    ~ & VGAT-ensemble & 66.00 ± 3.17 & 52.77 ± 2.70 & 66.35 ± 4.03 & 31.71 ± 3.56 & 50.23 ± 5.74 & 39.61 ± 3.87 \\
    \cline{2-8} ~ & \multicolumn{6}{c}{\textbf{evidential based}} \\ \cline{2-8}
    ~ & GPN & 67.53 ± 4.34 & 58.93 ± 5.55 & 65.11 ± 3.68 & 35.70 ± 6.00 & 54.75 ± 9.11 & 43.29 ± 8.09 \\
    
    ~ & SGAT-GKDE & 66.78 ± 2.36 & 56.01 ± 2.84 & 68.81 ± 6.63 & \textcolor{blue}{43.40 ± 9.99} & 49.30 ± 10.26 & 39.12 ± 7.68 \\
    
    ~ & EGAT & 67.45 ± 3.28 & 54.71 ± 3.41 & 68.10 ± 5.45 & 38.43 ± 8.65 & 46.72 ± 3.21 & 36.75 ± 2.21 \\
    
    ~ & EGAT-vacuity-prop & \textcolor{blue}{68.67 ± 2.44} & 57.14 ± 2.30 & \textcolor{red}{74.36 ± 5.47} & \textcolor{red}{51.76 ± 10.06} & 50.81 ± 6.56 & 39.43 ± 4.91 \\
    
    ~ & EGAT-evidence-prop & 68.55 ± 4.76 & \textcolor{red}{62.21 ± 5.93} & 65.87 ± 5.31 & 37.59 ± 7.62 & 56.82 ± 8.23 & \textcolor{blue}{45.96 ± 8.30} \\
    
    ~ & EGAT-vacuity-evidence-prop & \textcolor{red}{72.69 ± 6.45} & \textcolor{blue}{62.02 ± 6.87} & \textcolor{blue}{72.19 ± 4.35} & 40.14 ± 6.90 & \textcolor{blue}{58.42 ± 7.60} & 44.80 ± 6.72 \\
    \cline{2-8} ~ & \multicolumn{6}{c}{\textbf{ours}} \\ \cline{2-8}
    ~ & EPN & 65.15 ± 1.92 & 52.21 ± 2.13 & 67.41 ± 6.72 & 36.05 ± 8.59 & 55.25 ± 6.55 & 43.55 ± 3.87 \\
    \rowcolor{gray!30}
    ~ & EPN-reg & 68.67 ± 2.56 & 55.83 ± 2.34 & 64.45 ± 5.89 & 31.09 ± 6.14 & \textcolor{red}{66.92 ± 8.04} & \textcolor{red}{52.62 ± 8.42} \\
    \bottomrule
    \end{tabular}
    }
    \caption{OOD detection results (\textcolor{red}{best} and \textcolor{blue}{runner-up}) with GAT as backbone on CoraML, CiteSeer, and PubMed for OS-1, OS-2, and OS-3.}
    \label{tab:ood_gat_1}
\end{table}

\begin{table}[!ht]
    \centering
    \small
    \setlength{\tabcolsep}{8pt}
    \resizebox{\textwidth}{!}{%
    \begin{tabular}{@{}p{1.2cm}l|cc|cc|cc@{}}
    \toprule
    \multirow{2}{*}{\textbf{Dataset}} & \multirow{2}{*}{\textbf{Model}} & \multicolumn{2}{c|}{\textbf{OS-1 (last)}} & \multicolumn{2}{c|}{\textbf{OS-2 (first)}} & \multicolumn{2}{c}{\textbf{OS-3 (random)}} \\ 
    ~ & ~ & OOD-AUROC$\uparrow$ & OOD-AUPR$\uparrow$ & OOD-AUROC$\uparrow$ & OOD-AUPR$\uparrow$ & OOD-AUROC$\uparrow$ & OOD-AUPR$\uparrow$ \\ \midrule
    \multirow{17}{*}{\rotatebox[origin=c]{90}{\textbf{AmazonPhotos}}\hspace{0.5cm}}
    ~ & \multicolumn{6}{c}{\textbf{logit based}} \\ \cline{2-8}
    ~ & VGAT-entropy & 85.87 ± 2.97 & 79.32 ± 3.96 & \textcolor{blue}{91.05 ± 2.12} & 83.66 ± 3.51 & 87.10 ± 2.34 & 70.26 ± 5.98 \\
    
    ~ & VGAT-max-score & 84.21 ± 2.90 & 76.06 ± 3.94 & 90.99 ± 2.57 & 83.47 ± 3.96 & 88.41 ± 2.13 & 72.63 ± 5.82 \\
    
    ~ & VGAT-energy & 87.43 ± 3.74 & 81.39 ± 5.09 & 89.37 ± 3.31 & 83.44 ± 4.84 & 80.60 ± 3.55 & 62.07 ± 5.72 \\
    
    ~ & VGAT-gnnsafe & 89.93 ± 2.33 & 79.53 ± 3.52 & 90.93 ± 3.11 & 83.93 ± 3.38 & 82.90 ± 4.01 & 64.56 ± 5.19 \\
    
    ~ & VGAT-dropout & 86.85 ± 4.72 & 81.04 ± 6.85 & 90.97 ± 3.29 & 83.91 ± 2.90 & \textcolor{red}{89.96 ± 1.39} & 76.42 ± 2.44 \\
    
    ~ & VGAT-ensemble & 86.95 ± 3.14 & 80.57 ± 5.25 & 88.13 ± 3.22 & 81.07 ± 2.84 & 89.07 ± 2.97 & 74.26 ± 4.54 \\
    \cline{2-8} ~ & \multicolumn{6}{c}{\textbf{evidential based}} \\ \cline{2-8}
    ~ & GPN & 90.67 ± 1.79 & 84.70 ± 3.27 & \textcolor{red}{92.44 ± 1.37} & \textcolor{red}{88.88 ± 2.29} & 87.60 ± 1.75 & 76.03 ± 2.66 \\
    
    ~ & SGAT-GKDE & 53.33 ± 13.83 & 41.49 ± 10.82 & 49.35 ± 7.38 & 45.60 ± 8.20 & 58.53 ± 14.52 & 39.25 ± 11.65 \\
    
    ~ & EGAT & 89.38 ± 3.41 & 83.23 ± 5.52 & 76.11 ± 9.92 & 67.70 ± 11.81 & 87.17 ± 3.92 & 72.73 ± 6.45 \\
    
    ~ & EGAT-vacuity-prop & \textcolor{blue}{90.96 ± 3.55} & \textcolor{blue}{86.55 ± 4.88} & 74.76 ± 9.54 & 73.22 ± 7.78 & 87.94 ± 6.68 & \textcolor{blue}{77.60 ± 10.27} \\
    
    ~ & EGAT-evidence-prop & 68.67 ± 2.86 & 59.40 ± 3.50 & 62.41 ± 2.51 & 45.55 ± 2.23 & 60.81 ± 2.60 & 41.83 ± 2.24 \\
    
    ~ & EGAT-vacuity-evidence-prop & 76.67 ± 4.20 & 67.41 ± 3.71 & 80.29 ± 4.22 & 66.00 ± 4.92 & 73.07 ± 4.02 & 51.16 ± 4.64 \\
    \cline{2-8} ~ & \multicolumn{6}{c}{\textbf{ours}} \\ \cline{2-8}
    ~ & EPN & 88.48 ± 3.74 & 83.14 ± 4.77 & 74.96 ± 10.37 & 72.03 ± 10.07 & 78.21 ± 7.57 & 59.90 ± 9.99 \\
    \rowcolor{gray!30}
    ~ & EPN-reg & \textcolor{red}{93.47 ± 1.37} & \textcolor{red}{90.32 ± 2.56} & 87.83 ± 3.77 & \textcolor{blue}{84.81 ± 3.66} & \textcolor{blue}{89.52 ± 3.66} & \textcolor{red}{78.81 ± 8.23} \\
    \midrule
    \multirow{17}{*}{\rotatebox[origin=c]{90}{\textbf{AmazonComputers}}\hspace{0.5cm}} 
    ~ & \multicolumn{6}{c}{\textbf{logit based}} \\ \cline{2-8}
    ~ & VGAT-entropy & 87.45 ± 3.22 & 67.12 ± 7.07 & 85.91 ± 7.82 & 91.59 ± 4.10 & 78.90 ± 9.19 & 84.12 ± 6.46 \\
    
    ~ & VGAT-max-score & 84.76 ± 3.26 & 61.16 ± 6.36 & 76.51 ± 9.06 & 85.42 ± 5.38 & 73.58 ± 8.85 & 80.12 ± 6.70 \\
    
    ~ & VGAT-energy & 88.39 ± 3.04 & 69.29 ± 6.05 & 91.17 ± 1.34 & 94.29 ± 0.96 & 82.05 ± 7.80 & 85.28 ± 6.18 \\
    
    ~ & VGAT-gnnsafe & \textcolor{red}{90.38 ± 2.51} & 69.56 ± 3.98 & \textcolor{red}{92.75 ± 0.83} & 93.97 ± 0.52 & 83.91 ± 7.64 & 85.20 ± 5.75 \\
    
    ~ & VGAT-dropout & 83.07 ± 8.71 & 62.42 ± 13.22 & 86.14 ± 9.25 & 92.26 ± 4.76 & 79.43 ± 6.96 & 82.75 ± 6.15 \\
    
    ~ & VGAT-ensemble & 87.64 ± 3.98 & \textcolor{blue}{70.97 ± 6.58} & 89.79 ± 2.34 & 93.77 ± 1.52 & 82.15 ± 3.93 & 84.86 ± 3.77 \\
    \cline{2-8} ~ & \multicolumn{6}{c}{\textbf{evidential based}} \\ \cline{2-8}
    ~ & GPN & 80.97 ± 3.98 & 57.59 ± 5.87 & \textcolor{blue}{91.54 ± 2.33} & 95.08 ± 1.28 & \textcolor{blue}{85.70 ± 2.31} & 87.17 ± 1.70 \\
    
    ~ & SGAT-GKDE & 62.56 ± 15.35 & 42.62 ± 10.80 & 52.61 ± 17.18 & 70.19 ± 10.21 & 48.00 ± 15.81 & 57.17 ± 10.02 \\
    
    ~ & EGAT & 74.81 ± 6.93 & 52.43 ± 8.70 & 90.36 ± 2.49 & 93.99 ± 1.58 & \textcolor{red}{87.34 ± 5.79} & \textcolor{red}{88.74 ± 5.29} \\
    
    ~ & EGAT-vacuity-prop & 78.56 ± 8.47 & 61.27 ± 11.61 & 91.41 ± 2.36 & \textcolor{red}{95.77 ± 1.25} & 85.32 ± 5.55 & \textcolor{blue}{88.61 ± 5.24} \\
    
    ~ & EGAT-evidence-prop & 57.59 ± 2.24 & 35.68 ± 2.01 & 70.81 ± 1.57 & 83.81 ± 0.94 & 60.29 ± 2.34 & 67.77 ± 1.80 \\
    
    ~ & EGAT-vacuity-evidence-prop & 70.49 ± 4.56 & 45.78 ± 5.30 & 79.02 ± 3.32 & 87.41 ± 1.85 & 64.94 ± 3.97 & 70.32 ± 2.77 \\
    \cline{2-8} ~ & \multicolumn{6}{c}{\textbf{ours}} \\ \cline{2-8}
    ~ & EPN & 81.87 ± 10.02 & 68.33 ± 10.23 & 88.78 ± 2.87 & 93.59 ± 2.48 & 79.41 ± 10.78 & 83.92 ± 8.11 \\
    \rowcolor{gray!30}
    ~ & EPN-reg & \textcolor{blue}{90.31 ± 2.64} & \textcolor{red}{78.11 ± 5.13} & 91.00 ± 3.28 & \textcolor{blue}{95.44 ± 1.87} & 82.42 ± 8.27 & 86.30 ± 6.44 \\
    \midrule
    \multirow{17}{*}{\rotatebox[origin=c]{90}{\textbf{CoauthorCS}}\hspace{0.5cm}} 
    ~ & \multicolumn{6}{c}{\textbf{logit based}} \\ \cline{2-8}
    ~ & VGAT-entropy & 84.49 ± 2.20 & 81.31 ± 2.78 & 89.80 ± 1.01 & 70.26 ± 2.93 & 89.42 ± 1.62 & 65.56 ± 4.14 \\
    
    ~ & VGAT-max-score & 82.70 ± 2.24 & 78.29 ± 3.06 & 88.58 ± 0.78 & 68.47 ± 2.39 & 88.54 ± 1.67 & 64.36 ± 3.54 \\
    
    ~ & VGAT-energy & 85.86 ± 2.32 & 82.44 ± 3.07 & 90.04 ± 1.46 & 70.05 ± 3.95 & 89.70 ± 1.75 & 65.43 ± 4.93 \\
    
    ~ & VGAT-gnnsafe & 88.89 ± 2.39 & \textcolor{blue}{87.84 ± 2.43} & 92.84 ± 1.26 & \textcolor{red}{78.82 ± 2.79} & \textcolor{blue}{92.90 ± 1.50} & \textcolor{blue}{74.71 ± 3.99} \\
    
    ~ & VGAT-dropout & 86.76 ± 1.12 & 83.64 ± 1.58 & 88.87 ± 2.91 & 65.72 ± 6.86 & 90.15 ± 1.19 & 69.57 ± 2.11 \\
    
    ~ & VGAT-ensemble & 85.78 ± 2.36 & 82.75 ± 3.18 & 90.14 ± 0.86 & 69.76 ± 2.30 & 90.99 ± 0.80 & 71.15 ± 3.40 \\
    \cline{2-8} ~ & \multicolumn{6}{c}{\textbf{evidential based}} \\ \cline{2-8}
    ~ & GPN & \textcolor{blue}{89.67 ± 1.78} & 87.20 ± 2.06 & 84.34 ± 2.20 & 59.97 ± 4.38 & 83.29 ± 2.62 & 53.27 ± 4.74 \\
    
    ~ & SGAT-GKDE & 71.70 ± 4.91 & 58.08 ± 8.24 & 59.40 ± 6.01 & 34.96 ± 12.46 & 66.19 ± 6.04 & 36.68 ± 13.48 \\
    
    ~ & EGAT & 87.15 ± 2.48 & 84.65 ± 3.15 & \textcolor{blue}{92.87 ± 1.15} & 78.40 ± 4.27 & 92.02 ± 2.20 & 73.97 ± 7.98 \\
    
    ~ & EGAT-vacuity-prop & 86.80 ± 2.89 & 85.29 ± 2.81 & \textcolor{red}{93.50 ± 1.92} & \textcolor{blue}{78.80 ± 5.31} & \textcolor{red}{94.63 ± 1.23} & \textcolor{red}{80.54 ± 4.53} \\
    
    ~ & EGAT-evidence-prop & 76.64 ± 1.39 & 71.13 ± 2.02 & 70.79 ± 2.81 & 40.87 ± 4.21 & 68.24 ± 2.30 & 32.87 ± 2.95 \\
    
    ~ & EGAT-vacuity-evidence-prop & 85.39 ± 2.55 & 81.14 ± 2.68 & 79.71 ± 4.09 & 50.90 ± 6.45 & 76.22 ± 2.98 & 40.42 ± 4.25 \\
    \cline{2-8} ~ & \multicolumn{6}{c}{\textbf{ours}} \\ \cline{2-8}
    ~ & EPN & 86.74 ± 4.38 & 82.40 ± 5.59 & 90.08 ± 4.17 & 68.63 ± 10.12 & 85.08 ± 5.28 & 52.95 ± 10.99 \\
    \rowcolor{gray!30}
    ~ & EPN-reg & \textcolor{red}{94.21 ± 1.75} & \textcolor{red}{92.61 ± 2.33} & 90.48 ± 3.80 & 72.41 ± 9.77 & 91.53 ± 2.20 & 72.75 ± 6.82 \\
    \midrule
    \multirow{17}{*}{\rotatebox[origin=c]{90}{\textbf{CoauthorPhysics}}\hspace{0.5cm}} 
    ~ & \multicolumn{6}{c}{\textbf{logit based}} \\ \cline{2-8}
    ~ & VGAT-entropy & 93.64 ± 0.54 & 78.65 ± 1.68 & 93.59 ± 1.51 & 84.96 ± 3.71 & 92.52 ± 3.46 & 91.77 ± 4.58 \\
    
    ~ & VGAT-max-score & 93.08 ± 0.43 & 74.65 ± 1.41 & 92.86 ± 1.42 & 82.12 ± 3.55 & 92.30 ± 3.37 & 91.59 ± 4.54 \\
    
    ~ & VGAT-energy & 94.29 ± 0.80 & 81.25 ± 2.24 & 94.98 ± 1.16 & 88.26 ± 2.83 & 92.54 ± 3.28 & 91.68 ± 4.39 \\
    
    ~ & VGAT-gnnsafe & \textcolor{blue}{95.55 ± 0.67} & \textcolor{red}{86.27 ± 1.80} & 96.54 ± 0.79 & 92.34 ± 1.67 & \textcolor{red}{94.65 ± 2.65} & 94.25 ± 3.59 \\
    
    ~ & VGAT-dropout & 93.28 ± 0.52 & 77.16 ± 2.96 & 91.88 ± 2.16 & 83.26 ± 3.92 & 90.44 ± 6.03 & 89.38 ± 6.77 \\
    
    ~ & VGAT-ensemble & 93.62 ± 0.92 & 78.76 ± 2.92 & 93.36 ± 1.50 & 85.98 ± 2.66 & \textcolor{blue}{94.42 ± 0.47} & \textcolor{blue}{94.23 ± 0.37} \\
    \cline{2-8} ~ & \multicolumn{6}{c}{\textbf{evidential based}} \\ \cline{2-8}
    ~ & GPN & 90.60 ± 2.40 & 75.05 ± 5.70 & 84.12 ± 12.22 & 77.11 ± 12.91 & 94.31 ± 1.98 & \textcolor{red}{94.54 ± 2.17} \\
    
    ~ & SGAT-GKDE & 93.46 ± 2.18 & 78.01 ± 5.37 & 93.79 ± 2.42 & 86.06 ± 4.80 & 91.20 ± 5.45 & 90.56 ± 5.39 \\
    
    ~ & EGAT & 94.97 ± 0.92 & 82.56 ± 2.63 & \textcolor{blue}{96.72 ± 1.20} & \textcolor{blue}{92.54 ± 3.04} & 89.42 ± 4.67 & 88.56 ± 5.60 \\
    
    ~ & EGAT-vacuity-prop & \textcolor{red}{95.77 ± 0.83} & \textcolor{blue}{86.25 ± 2.18} & \textcolor{red}{97.84 ± 0.66} & \textcolor{red}{95.27 ± 1.56} & 92.41 ± 3.52 & 90.82 ± 4.26 \\
    
    ~ & EGAT-evidence-prop & 90.05 ± 1.30 & 69.83 ± 3.10 & 91.37 ± 2.73 & 82.65 ± 4.82 & 88.78 ± 2.35 & 89.50 ± 2.12 \\
    
    ~ & EGAT-vacuity-evidence-prop & 93.75 ± 2.00 & 78.79 ± 4.54 & 94.93 ± 2.82 & 88.94 ± 4.85 & 93.09 ± 1.85 & 92.31 ± 2.17 \\
    \cline{2-8} ~ & \multicolumn{6}{c}{\textbf{ours}} \\ \cline{2-8}
    ~ & EPN & 95.02 ± 2.55 & 84.67 ± 5.14 & 93.71 ± 3.30 & 86.99 ± 6.78 & 89.61 ± 7.65 & 88.67 ± 7.12 \\
    \rowcolor{gray!30}
    ~ & EPN-reg & 95.57 ± 0.93 & 85.54 ± 3.61 & 95.08 ± 1.70 & 90.12 ± 2.24 & 90.76 ± 7.33 & 90.00 ± 7.90 \\
    \bottomrule
    \end{tabular}
    }
    \caption{OOD detection results (\textcolor{red}{best} and \textcolor{blue}{runner-up}) with GAT as backbone on   AmazonPhotos, AmazonComputers, CoauthorCS, and CoauthorPhysics for OS-1, OS-2, and OS-3.}
    \label{tab:ood_gat_2}
\end{table}

\begin{table}[!ht]
    \centering
    \small
    \setlength{\tabcolsep}{20pt}
    \resizebox{\textwidth}{!}{%
    \begin{tabular}{@{}p{2cm}l|cc|cc@{}}
    \toprule
    \multirow{2}{*}{\textbf{Dataset}} & \multirow{2}{*}{\textbf{Model}} & \multicolumn{2}{c|}{\textbf{OS-4 (random)}} & \multicolumn{2}{c}{\textbf{OS-5 (random)}} \\ 
    ~ & ~ & OOD-AUROC$\uparrow$ & OOD-AUPR$\uparrow$ & OOD-AUROC$\uparrow$ & OOD-AUPR$\uparrow$ \\ \midrule
    \multirow{17}{*}{\rotatebox[origin=c]{90}{\textbf{CoraML}}\hspace{0.5cm}} 
    ~ & \multicolumn{4}{c}{\textbf{logit based}} \\ \cline{2-6}
    ~ & VGAT-entropy & 84.37 ± 0.94 & 67.15 ± 2.02 & 90.84 ± 1.24 & 89.05 ± 2.11 \\
    
    ~ & VGAT-max-score & 83.77 ± 0.85 & 65.95 ± 1.80 & 89.71 ± 1.53 & 87.17 ± 2.57 \\
    
    ~ & VGAT-energy & 84.64 ± 1.11 & 68.00 ± 2.82 & 90.79 ± 1.17 & 88.40 ± 2.10 \\
    
    ~ & VGAT-gnnsafe & 87.43 ± 0.73 & \textcolor{blue}{74.35 ± 2.88} & \textcolor{blue}{93.33 ± 0.66} & 92.53 ± 0.80 \\
    
    ~ & VGAT-dropout & 85.04 ± 3.29 & 68.29 ± 4.00 & 90.30 ± 0.97 & 88.89 ± 2.02 \\
    
    ~ & VGAT-ensemble & 84.53 ± 1.26 & 69.15 ± 2.19 & 92.25 ± 0.63 & 91.24 ± 1.09 \\
    \cline{2-6} ~ & \multicolumn{4}{c}{\textbf{evidential based}} \\ \cline{2-6}
    ~ & GPN & 79.51 ± 2.24 & 60.55 ± 2.88 & 90.30 ± 1.94 & 90.23 ± 1.65 \\
    
    ~ & SGAT-GKDE & 82.40 ± 12.13 & 67.54 ± 13.55 & 87.54 ± 11.84 & 85.47 ± 11.50 \\
    
    ~ & EGAT & 85.66 ± 3.01 & 69.51 ± 6.36 & 91.15 ± 0.79 & 89.21 ± 1.70 \\
    
    ~ & EGAT-vacuity-prop & \textcolor{red}{89.37 ± 1.37} & 74.10 ± 2.74 & \textcolor{red}{94.07 ± 0.69} & \textcolor{red}{93.74 ± 0.93} \\
    
    ~ & EGAT-evidence-prop & 67.02 ± 4.63 & 48.09 ± 5.54 & 81.39 ± 3.41 & 81.38 ± 4.03 \\
    
    ~ & EGAT-vacuity-evidence-prop & 83.07 ± 4.21 & 66.33 ± 5.16 & 89.34 ± 2.15 & 87.90 ± 2.36 \\
    \cline{2-6} ~ & \multicolumn{4}{c}{\textbf{ours}} \\ \cline{2-6}
    ~ & EPN & \textcolor{blue}{88.45 ± 1.50} & \textcolor{red}{75.01 ± 4.13} & 90.93 ± 2.28 & 90.50 ± 2.54 \\
    \rowcolor{gray!30}
    ~ & EPN-reg & 86.55 ± 3.08 & 70.89 ± 7.19 & 92.96 ± 0.70 & \textcolor{blue}{92.99 ± 0.88} \\
    \midrule
    \multirow{17}{*}{\rotatebox[origin=c]{90}{\textbf{CiteSeer}}\hspace{0.5cm}} 
    ~ & \multicolumn{4}{c}{\textbf{logit based}} \\ \cline{2-6}
    ~ & VGAT-entropy & 87.57 ± 0.71 & 74.35 ± 1.32 & 91.40 ± 2.09 & 79.76 ± 4.38 \\
    
    ~ & VGAT-max-score & 86.89 ± 0.71 & 72.87 ± 2.05 & 90.82 ± 2.03 & 78.23 ± 4.51 \\
    
    ~ & VGAT-energy & 87.80 ± 0.62 & 74.61 ± 1.41 & 91.66 ± 2.31 & \textcolor{blue}{80.06 ± 5.32} \\
    
    ~ & VGAT-gnnsafe & \textcolor{red}{89.09 ± 0.69} & 75.82 ± 1.34 & \textcolor{red}{92.44 ± 1.99} & \textcolor{red}{81.82 ± 4.72} \\
    
    ~ & VGAT-dropout & 86.98 ± 1.71 & 72.92 ± 4.53 & 90.70 ± 1.08 & 75.59 ± 2.16 \\
    
    ~ & VGAT-ensemble & 88.45 ± 0.77 & 75.68 ± 2.34 & 91.30 ± 2.16 & 79.00 ± 3.63 \\
    \cline{2-6} ~ & \multicolumn{4}{c}{\textbf{evidential based}} \\ \cline{2-6}
    ~ & GPN & 77.92 ± 6.04 & 58.15 ± 7.38 & 90.10 ± 1.74 & 76.00 ± 3.64 \\
    
    ~ & SGAT-GKDE & \textcolor{blue}{89.09 ± 1.39} & \textcolor{blue}{76.75 ± 5.72} & 91.35 ± 1.76 & 78.42 ± 3.58 \\
    
    ~ & EGAT & 84.46 ± 1.97 & 63.46 ± 4.54 & 88.11 ± 1.78 & 69.15 ± 2.96 \\
    
    ~ & EGAT-vacuity-prop & 85.27 ± 1.83 & 63.86 ± 3.70 & \textcolor{blue}{91.79 ± 1.53} & 77.52 ± 4.84 \\
    
    ~ & EGAT-evidence-prop & 87.95 ± 1.01 & \textcolor{red}{79.59 ± 2.01} & 89.34 ± 1.56 & 77.09 ± 3.75 \\
    
    ~ & EGAT-vacuity-evidence-prop & 84.07 ± 1.75 & 61.26 ± 3.31 & 90.22 ± 3.25 & 74.67 ± 6.32 \\
    \cline{2-6} ~ & \multicolumn{4}{c}{\textbf{ours}} \\ \cline{2-6}
    ~ & EPN & 88.31 ± 1.87 & 72.33 ± 4.79 & 86.34 ± 4.94 & 70.72 ± 6.80 \\
    \rowcolor{gray!30}
    ~ & EPN-reg & 87.65 ± 2.17 & 72.24 ± 5.25 & 90.77 ± 2.05 & 78.84 ± 3.55 \\
    \midrule
    \multirow{17}{*}{\rotatebox[origin=c]{90}{\textbf{PubMed}}\hspace{0.5cm}} 
    ~ & \multicolumn{4}{c}{\textbf{logit based}} \\ \cline{2-6}
    ~ & VGAT-entropy & 61.69 ± 9.18 & 28.07 ± 6.20 & 67.14 ± 2.73 & 53.18 ± 2.81 \\
    
    ~ & VGAT-max-score & 61.69 ± 9.18 & 28.07 ± 6.19 & 67.14 ± 2.73 & 53.18 ± 2.81 \\
    
    ~ & VGAT-energy & 61.66 ± 9.34 & 28.21 ± 6.59 & 67.08 ± 2.72 & 53.21 ± 2.76 \\
    
    ~ & VGAT-gnnsafe & 63.85 ± 10.13 & 33.46 ± 9.33 & 67.23 ± 2.82 & 53.77 ± 3.22 \\
    
    ~ & VGAT-dropout & 64.44 ± 8.84 & 31.21 ± 6.75 & 64.28 ± 3.76 & 50.69 ± 2.97 \\
    
    ~ & VGAT-ensemble & 69.59 ± 2.10 & 33.88 ± 2.91 & 65.08 ± 1.87 & 51.99 ± 1.90 \\
    \cline{2-6} ~ & \multicolumn{4}{c}{\textbf{evidential based}} \\ \cline{2-6}
    ~ & GPN & 68.27 ± 3.25 & 39.59 ± 4.83 & 65.52 ± 4.25 & 56.89 ± 4.42 \\
    
    ~ & SGAT-GKDE & 67.72 ± 10.92 & \textcolor{blue}{43.86 ± 15.56} & 69.01 ± 5.34 & 57.44 ± 6.04 \\
    
    ~ & EGAT & \textcolor{red}{72.51 ± 2.88} & \textcolor{red}{47.32 ± 5.61} & \textcolor{blue}{69.15 ± 2.58} & 57.57 ± 3.44 \\
    
    ~ & EGAT-vacuity-prop & 66.57 ± 12.19 & 41.31 ± 14.05 & 65.67 ± 2.71 & 55.06 ± 2.45 \\
    
    ~ & EGAT-evidence-prop & 62.24 ± 2.93 & 30.76 ± 5.12 & 67.46 ± 1.65 & \textcolor{blue}{60.78 ± 2.69} \\
    
    ~ & EGAT-vacuity-evidence-prop & \textcolor{blue}{70.17 ± 4.90} & 37.45 ± 7.83 & \textcolor{red}{73.14 ± 2.74} & \textcolor{red}{62.28 ± 2.86} \\
    \cline{2-6} ~ & \multicolumn{4}{c}{\textbf{ours}} \\ \cline{2-6}
    ~ & EPN & 68.28 ± 5.46 & 36.21 ± 7.18 & 67.01 ± 2.81 & 53.86 ± 3.12 \\
    \rowcolor{gray!30}
    ~ & EPN-reg & 62.74 ± 8.80 & 30.40 ± 7.84 & 65.77 ± 5.67 & 53.30 ± 5.44 \\
    \bottomrule
    \end{tabular}
    }
    \caption{OOD detection results (\textcolor{red}{best} and \textcolor{blue}{runner-up}) with GAT as backbone on   CoraML, CiteSeer, and PubMed for OS-4 and OS-5.}
    \label{tab:ood_gat_3}
\end{table}

\begin{table}[!ht]
    \centering
    \small
    \setlength{\tabcolsep}{20pt}
    \resizebox{\textwidth}{!}{%
    \begin{tabular}{@{}p{2cm}l|cc|cc@{}}
    \midrule
    \multirow{2}{*}{\textbf{Dataset}} & \multirow{2}{*}{\textbf{Model}} & \multicolumn{2}{c|}{\textbf{OS-4 (random)}} & \multicolumn{2}{c}{\textbf{OS-5 (random)}} \\ 
    ~ & ~ & OOD-AUROC$\uparrow$ & OOD-AUPR$\uparrow$ & OOD-AUROC$\uparrow$ & OOD-AUPR$\uparrow$ \\ \midrule
    \multirow{17}{*}{\rotatebox[origin=c]{90}{\textbf{AmazonPhotos}}\hspace{0.5cm}} 
    ~ & \multicolumn{4}{c}{\textbf{logit based}} \\ \cline{2-6}
    ~ & VGAT-entropy & \textcolor{blue}{92.08 ± 2.86} & 92.77 ± 2.70 & 88.35 ± 2.65 & 72.30 ± 5.94 \\
    
    ~ & VGAT-max-score & 91.57 ± 3.01 & 92.23 ± 2.99 & 87.41 ± 2.55 & 68.24 ± 5.36 \\
    
    ~ & VGAT-energy & 90.74 ± 4.37 & 92.19 ± 3.19 & 90.09 ± 2.62 & 74.24 ± 5.75 \\
    
    ~ & VGAT-gnnsafe & 92.18 ± 4.00 & 91.38 ± 2.62 & 91.27 ± 2.22 & 72.64 ± 3.79 \\
    
    ~ & VGAT-dropout & 90.38 ± 3.23 & 91.88 ± 2.71 & 89.93 ± 3.11 & 76.67 ± 4.98 \\
    
    ~ & VGAT-ensemble & 91.10 ± 1.89 & 91.89 ± 1.73 & 90.64 ± 1.54 & 76.63 ± 2.92 \\
    \cline{2-6} ~ & \multicolumn{4}{c}{\textbf{evidential based}} \\ \cline{2-6}
    ~ & GPN & \textcolor{red}{92.63 ± 1.68} & \textcolor{red}{93.19 ± 1.36} & 86.10 ± 2.58 & 68.56 ± 5.16 \\
    
    ~ & SGAT-GKDE & 55.51 ± 8.18 & 63.18 ± 6.65 & 53.14 ± 7.39 & 25.68 ± 4.43 \\
    
    ~ & EGAT & 89.29 ± 3.29 & 91.67 ± 2.47 & 87.65 ± 5.11 & 70.10 ± 8.88 \\
    
    ~ & EGAT-vacuity-prop & 90.91 ± 4.08 & \textcolor{blue}{92.85 ± 2.89} & 89.94 ± 3.57 & 77.44 ± 7.79 \\
    
    ~ & EGAT-evidence-prop & 65.08 ± 4.05 & 71.43 ± 2.98 & 63.91 ± 2.20 & 34.91 ± 1.85 \\
    
    ~ & EGAT-vacuity-evidence-prop & 69.66 ± 6.12 & 73.92 ± 3.52 & 77.25 ± 3.67 & 48.94 ± 5.32 \\
    \cline{2-6} ~ & \multicolumn{4}{c}{\textbf{ours}} \\ \cline{2-6}
    ~ & EPN & 89.82 ± 2.49 & 90.27 ± 3.73 & \textcolor{red}{93.24 ± 2.85} & \textcolor{blue}{80.65 ± 8.45} \\
    \rowcolor{gray!30}
    ~ & EPN-reg & 91.65 ± 2.83 & 92.65 ± 2.73 & \textcolor{blue}{91.91 ± 2.38} & \textcolor{red}{81.00 ± 5.06} \\
    \midrule
    \multirow{17}{*}{\rotatebox[origin=c]{90}{\textbf{AmazonComputers}}\hspace{0.5cm}} 
    ~ & \multicolumn{4}{c}{\textbf{logit based}} \\ \cline{2-6}
    ~ & VGAT-entropy & 83.83 ± 1.41 & 78.18 ± 1.69 & 83.88 ± 3.54 & 89.58 ± 2.68 \\
    
    ~ & VGAT-max-score & 82.37 ± 1.59 & 74.78 ± 2.18 & 82.97 ± 3.72 & 88.90 ± 3.00 \\
    
    ~ & VGAT-energy & 84.52 ± 1.18 & 78.60 ± 1.40 & 83.62 ± 3.40 & 89.19 ± 2.50 \\
    
    ~ & VGAT-gnnsafe & 84.14 ± 1.05 & 71.64 ± 0.67 & 85.11 ± 3.24 & 90.15 ± 2.03 \\
    
    ~ & VGAT-dropout & 82.59 ± 2.48 & 76.44 ± 3.69 & 85.75 ± 2.94 & 91.05 ± 2.56 \\
    
    ~ & VGAT-ensemble & \textcolor{red}{86.43 ± 1.47} & \textcolor{blue}{80.14 ± 2.31} & \textcolor{red}{87.42 ± 1.39} & \textcolor{blue}{92.12 ± 0.94} \\
    \cline{2-6} ~ & \multicolumn{4}{c}{\textbf{evidential based}} \\ \cline{2-6}
    ~ & GPN & 78.38 ± 4.34 & 65.30 ± 5.01 & \textcolor{blue}{86.16 ± 1.68} & 90.87 ± 1.21 \\
    
    ~ & SGAT-GKDE & 53.39 ± 10.83 & 44.10 ± 7.00 & 51.78 ± 8.96 & 67.41 ± 6.56 \\
    
    ~ & EGAT & 81.05 ± 3.06 & 72.67 ± 5.50 & 81.10 ± 4.17 & 87.66 ± 3.81 \\
    
    ~ & EGAT-vacuity-prop & 81.97 ± 3.76 & 77.77 ± 4.16 & 82.31 ± 3.12 & 90.25 ± 2.45 \\
    
    ~ & EGAT-evidence-prop & 61.06 ± 3.50 & 48.49 ± 2.96 & 57.20 ± 3.35 & 72.66 ± 2.16 \\
    
    ~ & EGAT-vacuity-evidence-prop & 70.51 ± 3.41 & 55.89 ± 4.24 & 58.53 ± 5.89 & 74.28 ± 3.32 \\
    \cline{2-6} ~ & \multicolumn{4}{c}{\textbf{ours}} \\ \cline{2-6}
    ~ & EPN & 83.56 ± 2.23 & 79.96 ± 3.17 & 78.81 ± 4.03 & 87.53 ± 2.53 \\
    \rowcolor{gray!30}
    ~ & EPN-reg & \textcolor{blue}{84.98 ± 2.59} & \textcolor{red}{81.05 ± 3.84} & 86.20 ± 2.15 & \textcolor{red}{92.45 ± 1.50} \\
    \midrule
    \multirow{17}{*}{\rotatebox[origin=c]{90}{\textbf{CoauthorCS}}\hspace{0.5cm}} 
    ~ & \multicolumn{4}{c}{\textbf{logit based}} \\ \cline{2-6}
    ~ & VGAT-entropy & 91.91 ± 0.87 & 87.20 ± 1.44 & 92.13 ± 0.49 & 80.05 ± 1.45 \\
    
    ~ & VGAT-max-score & 90.33 ± 1.01 & 84.44 ± 1.64 & 91.29 ± 0.52 & 78.40 ± 1.69 \\
    
    ~ & VGAT-energy & 91.83 ± 1.15 & 86.79 ± 1.52 & 91.83 ± 0.68 & 78.65 ± 1.74 \\
    
    ~ & VGAT-gnnsafe & 94.57 ± 1.10 & 92.00 ± 1.38 & \textcolor{blue}{94.42 ± 0.53} & \textcolor{blue}{85.36 ± 1.52} \\
    
    ~ & VGAT-dropout & 91.88 ± 1.48 & 86.48 ± 2.83 & 92.13 ± 0.67 & 80.91 ± 1.30 \\
    
    ~ & VGAT-ensemble & 92.73 ± 2.13 & 88.56 ± 2.59 & 92.51 ± 0.39 & 80.82 ± 1.21 \\
    \cline{2-6} ~ & \multicolumn{4}{c}{\textbf{evidential based}} \\ \cline{2-6}
    ~ & GPN & 93.78 ± 1.22 & 90.35 ± 1.93 & 87.04 ± 3.16 & 68.67 ± 6.08 \\
    
    ~ & SGAT-GKDE & 54.84 ± 6.74 & 42.32 ± 7.68 & 46.43 ± 4.15 & 28.97 ± 8.42 \\
    
    ~ & EGAT & 93.13 ± 1.80 & 89.72 ± 2.56 & 93.83 ± 1.09 & 83.50 ± 2.85 \\
    
    ~ & EGAT-vacuity-prop & \textcolor{blue}{95.45 ± 1.15} & \textcolor{red}{93.15 ± 1.56} & \textcolor{red}{95.14 ± 1.90} & \textcolor{red}{87.07 ± 4.59} \\
    
    ~ & EGAT-evidence-prop & 80.51 ± 2.78 & 70.27 ± 4.09 & 72.08 ± 2.80 & 46.49 ± 2.65 \\
    
    ~ & EGAT-vacuity-evidence-prop & 90.54 ± 2.56 & 82.38 ± 4.18 & 81.94 ± 3.72 & 58.52 ± 4.38 \\
    \cline{2-6} ~ & \multicolumn{4}{c}{\textbf{ours}} \\ \cline{2-6}
    ~ & EPN & 92.60 ± 2.37 & 87.84 ± 3.74 & 85.98 ± 4.83 & 70.04 ± 6.22 \\
    \rowcolor{gray!30}
    ~ & EPN-reg & \textcolor{red}{95.66 ± 1.34} & \textcolor{blue}{93.22 ± 2.64} & 93.65 ± 1.89 & 82.99 ± 5.69 \\
    \midrule
    \multirow{17}{*}{\rotatebox[origin=c]{90}{\textbf{CoauthorPhysics}}\hspace{0.5cm}} 
    ~ & \multicolumn{4}{c}{\textbf{logit based}} \\ \cline{2-6}
    ~ & VGAT-entropy & 92.99 ± 1.22 & 82.04 ± 2.96 & 88.28 ± 4.96 & 88.76 ± 4.58 \\
    
    ~ & VGAT-max-score & 92.41 ± 1.19 & 79.27 ± 2.68 & 87.32 ± 4.83 & 87.02 ± 4.37 \\
    
    ~ & VGAT-energy & 93.78 ± 1.25 & 84.43 ± 3.15 & 91.72 ± 4.38 & 92.14 ± 4.13 \\
    
    ~ & VGAT-gnnsafe & 95.54 ± 1.04 & 90.19 ± 2.15 & 93.92 ± 4.25 & 94.41 ± 4.00 \\
    
    ~ & VGAT-dropout & 93.60 ± 1.22 & 84.14 ± 3.34 & 87.43 ± 5.24 & 88.71 ± 5.13 \\
    
    ~ & VGAT-ensemble & 93.67 ± 1.29 & 84.63 ± 3.00 & 80.17 ± 7.89 & 82.83 ± 6.76 \\
    \cline{2-6} ~ & \multicolumn{4}{c}{\textbf{evidential based}} \\ \cline{2-6}
    ~ & GPN & 83.22 ± 8.99 & 74.01 ± 11.49 & \textcolor{red}{96.79 ± 1.06} & \textcolor{red}{97.62 ± 0.77} \\
    
    ~ & SGAT-GKDE & 91.48 ± 3.28 & 79.23 ± 8.61 & 91.90 ± 4.86 & 92.79 ± 4.57 \\
    
    ~ & EGAT & 95.06 ± 1.38 & 88.45 ± 3.70 & 87.41 ± 6.26 & 87.79 ± 6.06 \\
    
    ~ & EGAT-vacuity-prop & \textcolor{red}{96.66 ± 0.72} & \textcolor{red}{92.79 ± 1.46} & 90.79 ± 8.32 & 92.33 ± 5.87 \\
    
    ~ & EGAT-evidence-prop & 93.53 ± 2.34 & 85.34 ± 4.36 & 91.82 ± 2.25 & 93.67 ± 1.58 \\
    
    ~ & EGAT-vacuity-evidence-prop & \textcolor{blue}{96.36 ± 1.34} & \textcolor{blue}{91.40 ± 2.48} & \textcolor{blue}{94.53 ± 1.60} & \textcolor{blue}{94.87 ± 1.40} \\
    \cline{2-6} ~ & \multicolumn{4}{c}{\textbf{ours}} \\ \cline{2-6}
    ~ & EPN & 95.53 ± 1.53 & 88.85 ± 4.89 & 92.18 ± 3.47 & 93.12 ± 3.20 \\
    \rowcolor{gray!30}
    ~ & EPN-reg & 94.90 ± 2.26 & 88.83 ± 4.54 & 93.40 ± 3.38 & 94.09 ± 3.67 \\
    \bottomrule
    \end{tabular}
    }
    \caption{OOD detection results (\textcolor{red}{best} and \textcolor{blue}{runner-up}) with GAT as backbone on   AmazonPhotos, AmazonComputers, CoauthorCS, and CoauthorPhysics for OS-4 and OS-5.}
    \label{tab:ood_gat_4}
\end{table}

\textbf{GCN and GAT Comparison.} We show the EPN's performance comparison between GCN and GAT as the backbone across all the LOC settings and datasets in Table~\ref{tab:gcn_gat_comp}. Overall, GAT consistently outperforms GCN across most datasets and settings, particularly on larger datasets like AmazonPhotos, AmazonComputers, and CoauthorPhysics. In the OS-1 (last) setting, GAT achieves higher OOD-AUROC and OOD-AUPR scores compared to GCN on datasets such as CoraML (91.21 ± 0.76 vs. 89.97 ± 2.48) and CiteSeer (90.96 ± 1.99 vs. 88.23 ± 2.79). The trend remains similar across OS-2, OS-3, and OS-5, where GAT shows more robust performance, particularly on PubMed and CoauthorPhysics, with differences in OOD-AUROC as high as 8\% (e.g., OS-4 for PubMed: GAT 66.92 ± 8.04 vs. GCN 53.66 ± 3.72). Notably, GAT performs better on the more challenging LOC settings, like random splits in OS-4 and OS-5, where its improvements over GCN are substantial. These results suggest that GAT provides a stronger latent representation for OOD detection across varying conditions and datasets.

\begin{table}[!htbp]
    \centering
    \small
    \setlength{\tabcolsep}{4pt}
    \resizebox{\textwidth}{!}{%
    \begin{tabular}{@{}ll|cc|cc|cc|cc|cc@{}}
    \toprule
        \multirow{2}{*}{\textbf{Dataset}} & \multirow{2}{*}{\textbf{Backbone}} & \multicolumn{2}{c|}{\textbf{OS-1 (last)}} & \multicolumn{2}{c|}{\textbf{OS-2 (first)}} & \multicolumn{2}{c|}{\textbf{OS-3 (random)}} & \multicolumn{2}{c|}{\textbf{OS-4 (random)}} & \multicolumn{2}{c}{\textbf{OS-5 (random)}} \\ 
        ~ & ~ & OOD-AUROC$\uparrow$ & OOD-AUPR$\uparrow$ & OOD-AUROC$\uparrow$ & OOD-AUPR$\uparrow$ & OOD-AUROC$\uparrow$ & OOD-AUPR$\uparrow$ & OOD-AUROC$\uparrow$ & OOD-AUPR$\uparrow$ & OOD-AUROC$\uparrow$ & OOD-AUPR$\uparrow$ \\ \midrule
        \multirow{2}{*}{\textbf{CoraML}} & GCN & 89.97 ± 2.48 & 86.01 ± 4.82 & 85.91 ± 6.18 & 75.53 ± 11.08 & 88.96 ± 1.46 & 89.26 ± 1.74 & 85.46 ± 4.94 & \textbf{73.14 ± 7.73} & 91.27 ± 2.73 & 89.88 ± 3.06  \\ 
        ~ & GAT & \textbf{91.21 ± 0.76} & \textbf{88.59 ± 1.83} & \textbf{87.41 ± 3.42} & \textbf{79.84 ± 6.71} & \textbf{91.19 ± 1.20} & \textbf{91.83 ± 0.85} & \textbf{86.55 ± 3.08} & 70.89 ± 7.19 & \textbf{92.96 ± 0.70} & \textbf{92.99 ± 0.88}  \\ \midrule
        \multirow{2}{*}{\textbf{CiteSeer}} & GCN & 88.23 ± 2.79 & 69.69 ± 4.97 & 90.86 ± 1.51 & 79.18 ± 4.27 & 87.27 ± 2.52 & 73.24 ± 5.35 & \textbf{88.78 ± 1.52} & \textbf{74.07 ± 4.81} & 90.53 ± 3.06 & 78.03 ± 5.92  \\ 
        ~ & GAT & \textbf{90.96 ± 1.99} & \textbf{75.92 ± 4.67} & \textbf{92.56 ± 1.51} & \textbf{82.60 ± 3.64} & \textbf{88.62 ± 3.65} & \textbf{75.12 ± 5.16} & 87.65 ± 2.17 & 72.24 ± 5.25 & \textbf{90.77 ± 2.05} & \textbf{78.84 ± 3.55}  \\ \midrule
        \multirow{2}{*}{\textbf{PubMed}} & GCN & 67.38 ± 3.85 & 53.66 ± 3.72 & \textbf{65.25 ± 7.45} & \textbf{33.01 ± 6.89} & 53.65 ± 6.11 & 41.47 ± 4.14 & \textbf{69.39 ± 3.78} & \textbf{36.84 ± 5.64} & 64.78 ± 4.79 & \textbf{53.61 ± 4.34}  \\ 
        ~ & GAT & \textbf{68.67 ± 2.56} & \textbf{55.83 ± 2.34} & 64.45 ± 5.89 & 31.09 ± 6.14 & \textbf{66.92 ± 8.04} & \textbf{52.62 ± 8.42} & 62.74 ± 8.80 & 30.40 ± 7.84 & \textbf{65.77 ± 5.67} & 53.30 ± 5.44  \\ \midrule
        \multirow{2}{*}{\textbf{AmazonPhotos}} & GCN & 86.49 ± 5.40 & 81.37 ± 6.67 & \textbf{88.79 ± 5.61} & \textbf{85.96 ± 7.93} & \textbf{91.49 ± 3.74} & \textbf{84.33 ± 7.45} & 83.49 ± 7.72 & 87.38 ± 6.14 & 90.82 ± 7.08 & 80.53 ± 10.59  \\ 
        ~ & GAT & \textbf{93.47 ± 1.37} & \textbf{90.32 ± 2.56} & 87.83 ± 3.77 & 84.81 ± 3.66 & 89.52 ± 3.66 & 78.81 ± 8.23 & \textbf{91.65 ± 2.83} & \textbf{92.65 ± 2.73} & \textbf{91.91 ± 2.38} & \textbf{81.00 ± 5.06}  \\ \midrule
        \multirow{2}{*}{\textbf{AmazonComputers}} & GCN & 83.26 ± 6.06 & 68.91 ± 8.41 & 81.99 ± 9.82 & 91.49 ± 4.83 & 70.96 ± 9.16 & 79.18 ± 8.00 & 70.28 ± 7.69 & 62.52 ± 7.80 & 76.43 ± 5.16 & 85.97 ± 4.02  \\ 
        ~ & GAT & \textbf{90.31 ± 2.64} & \textbf{78.11 ± 5.13} & \textbf{91.00 ± 3.28} & \textbf{95.44 ± 1.87} & \textbf{82.42 ± 8.27} & \textbf{86.30 ± 6.44} & \textbf{84.98 ± 2.59} & \textbf{81.05 ± 3.84} & \textbf{86.20 ± 2.15} & \textbf{92.45 ± 1.50}  \\ \midrule
        \multirow{2}{*}{\textbf{CoauthorCS}} & GCN & \textbf{95.09 ± 1.37} & \textbf{94.47 ± 1.29} & \textbf{91.03 ± 4.06} & \textbf{76.13 ± 9.48} & \textbf{93.30 ± 3.77} & \textbf{76.65 ± 12.16} & \textbf{96.96 ± 1.10} & \textbf{95.39 ± 1.29} & 92.98 ± 3.32 & 80.12 ± 9.41  \\ 
        ~ & GAT & 94.21 ± 1.75 & 92.61 ± 2.33 & 90.48 ± 3.80 & 72.41 ± 9.77 & 91.53 ± 2.20 & 72.75 ± 6.82 & 95.66 ± 1.34 & 93.22 ± 2.64 & \textbf{93.65 ± 1.89} & \textbf{82.99 ± 5.69}  \\ \midrule
        \multirow{2}{*}{\textbf{CoauthorPhysics}} & GCN & 93.59 ± 2.41 & 79.31 ± 5.13 & 87.14 ± 7.33 & 78.99 ± 9.06 & 89.05 ± 9.34 & 88.37 ± 9.43 & 92.21 ± 3.34 & 82.51 ± 5.95 & 86.70 ± 3.50 & 89.00 ± 2.94  \\ 
        ~ & GAT & \textbf{95.57 ± 0.93} & \textbf{85.54 ± 3.61} & \textbf{95.08 ± 1.70} & \textbf{90.12 ± 2.24} & \textbf{90.76 ± 7.33} & \textbf{90.00 ± 7.90} & \textbf{94.90 ± 2.26} & \textbf{88.83 ± 4.54} & \textbf{93.40 ± 3.38} & \textbf{94.09 ± 3.67} \\ 
        \bottomrule
    \end{tabular}
    }
    \caption{Backbone: we compare the  OOD detection performance ($\uparrow$) with GCN or GAT as the backbone. The best results are bold.}\label{tab:gcn_gat_comp}
\end{table}

\textbf{Robutness on features.} In the default setting, we use the hidden states from the last layer of the pretrained model (commonly referred to as logits) as the feature input for our EPN. In Table~\ref{tab:layer}, we compare the performance of the EPN when using the output from either the last layer or the second-to-last layer as input features. This comparison highlights the impact of different feature extraction layers on the overall performance, providing insights into which layer offers more informative representations for the OOD detection task in terms of the datasets.

\begin{table}[!htbp]
    \centering
    \small
    \setlength{\tabcolsep}{4pt}
    \resizebox{\textwidth}{!}{%
    \begin{tabular}{@{}ll|cc|cc|cc|cc|cc@{}}
    \hline
        \multirow{2}{*}{\textbf{Dataset}} & \multirow{2}{*}{\textbf{Model}} & \multicolumn{2}{c|}{\textbf{OS-1 (last)}} & \multicolumn{2}{c|}{\textbf{OS-2 (first)}} & \multicolumn{2}{c|}{\textbf{OS-3 (random)}} & \multicolumn{2}{c|}{\textbf{OS-4 (random)}} & \multicolumn{2}{c}{\textbf{OS-5 (random)}} \\ 
        ~ & ~ & OOD-AUROC$\uparrow$ & OOD-AUPR$\uparrow$ & OOD-AUROC$\uparrow$ & OOD-AUPR$\uparrow$ & OOD-AUROC$\uparrow$ & OOD-AUPR$\uparrow$ & OOD-AUROC$\uparrow$ & OOD-AUPR$\uparrow$ & OOD-AUROC$\uparrow$ & OOD-AUPR$\uparrow$ \\ \hline
        \multirow{2}{*}{\textbf{CoraML}} & middle\_layer & 87.27 ± 5.34 & 82.02 ± 8.13 & 76.10 ± 11.61 & 63.09 ± 12.34 & 83.57 ± 9.14 & 83.78 ± 7.90 & 83.86 ± 4.22 & 67.47 ± 7.87 & 89.16 ± 4.76 & 87.36 ± 5.96  \\ 
        ~ & final layer & \textbf{89.54 ± 2.06} & \textbf{84.35 ± 3.59} & \textbf{80.31 ± 4.67} & \textbf{67.09 ± 6.07} & \textbf{86.56 ± 3.13} & \textbf{86.21 ± 3.31} & \textbf{85.77 ± 2.41} & \textbf{70.98 ± 5.38} & \textbf{90.86 ± 2.05} & \textbf{89.84 ± 2.16}  \\ \hline
        \multirow{2}{*}{\textbf{CiteSeer}} & middle\_layer & 85.06 ± 2.79 & 62.71 ± 5.23 & 89.08 ± 2.39 & 74.76 ± 4.61 & 85.61 ± 3.92 & 69.74 ± 5.40 & 82.89 ± 3.27 & 62.32 ± 5.67 & 90.26 ± 0.78 & 76.30 ± 3.39  \\ 
        ~ & final layer & \textbf{88.23 ± 2.79} & \textbf{69.69 ± 4.97} & \textbf{90.86 ± 1.51} & \textbf{79.18 ± 4.27} & \textbf{87.27 ± 2.52} & \textbf{73.24 ± 5.35} & \textbf{88.78 ± 1.52} & \textbf{74.07 ± 4.81} & \textbf{90.53 ± 3.06} & \textbf{78.03 ± 5.92}  \\ \hline
        \multirow{2}{*}{\textbf{PubMed}} & middle\_layer & 65.83 ± 4.62 & 53.61 ± 5.09 & \textbf{66.36 ± 6.17} & 31.83 ± 5.67 & 47.11 ± 12.12 & 37.78 ± 7.82 & 63.23 ± 8.37 & 31.02 ± 8.02 & \textbf{66.56 ± 4.20} & \textbf{54.54 ± 4.25}  \\ 
        ~ & final layer & \textbf{67.38 ± 3.85} & \textbf{53.66 ± 3.72} & 65.25 ± 7.45 & \textbf{33.01 ± 6.89} & \textbf{53.65 ± 6.11} & \textbf{41.47 ± 4.14} & \textbf{69.39 ± 3.78} & \textbf{36.84 ± 5.64} & 64.78 ± 4.79 & 53.61 ± 4.34  \\ \hline
        \multirow{2}{*}{\textbf{AmazonPhotos}} & middle\_layer & \textbf{84.54 ± 5.55} & 77.12 ± 8.94 & \textbf{90.06 ± 6.40} & \textbf{87.79 ± 9.44} & \textbf{89.98 ± 2.21} & \textbf{78.36 ± 6.20} & \textbf{84.13 ± 5.41} & \textbf{87.77 ± 4.41} & 87.39 ± 9.12 & 74.25 ± 17.78  \\ 
        ~ & final layer & 84.48 ± 6.64 & \textbf{78.42 ± 7.36} & 88.61 ± 4.27 & 86.07 ± 5.40 & 88.34 ± 5.87 & 78.18 ± 10.86 & 83.76 ± 5.14 & 86.89 ± 4.47 & \textbf{92.47 ± 3.14} & \textbf{83.89 ± 5.27}  \\ \hline
        \multirow{2}{*}{\textbf{AmazonComputers}} & middle\_layer & \textbf{84.62 ± 2.60} & \textbf{69.79 ± 5.44} & 83.70 ± 5.08 & 92.81 ± 2.64 & \textbf{71.65 ± 7.10} & \textbf{80.37 ± 6.39} & \textbf{76.39 ± 4.32} & \textbf{68.84 ± 5.53} & \textbf{78.48 ± 5.30} & \textbf{87.06 ± 3.98}  \\ 
        ~ & final layer & 80.45 ± 4.57 & 66.92 ± 6.29 & \textbf{85.09 ± 4.42} & \textbf{93.09 ± 2.49} & 70.39 ± 11.95 & 78.10 ± 9.22 & 72.34 ± 7.10 & 64.04 ± 7.82 & 75.34 ± 5.21 & 85.33 ± 3.94  \\ \hline
        \multirow{2}{*}{\textbf{CoauthorCS}} & middle\_layer & 90.34 ± 5.41 & 88.82 ± 8.40 & 90.13 ± 4.15 & 69.40 ± 12.09 & 88.56 ± 6.16 & 60.68 ± 15.86 & 91.99 ± 4.78 & 86.16 ± 8.28 & 89.96 ± 4.39 & 70.87 ± 12.10  \\ 
        ~ & final layer & \textbf{92.85 ± 1.86} & \textbf{91.00 ± 2.38} & \textbf{93.31 ± 2.74} & \textbf{77.89 ± 7.92} & \textbf{91.30 ± 7.46} & \textbf{67.95 ± 15.37} & \textbf{94.04 ± 2.59} & \textbf{89.71 ± 5.65} & \textbf{93.60 ± 3.71} & \textbf{80.74 ± 10.03}  \\ \hline
        \multirow{2}{*}{\textbf{CoauthorPhysics}} & middle\_layer & \textbf{94.97 ± 0.92} & \textbf{82.97 ± 2.43} & 86.98 ± 5.32 & 77.17 ± 6.24 & \textbf{92.49 ± 3.07} & \textbf{91.86 ± 3.26} & \textbf{94.68 ± 1.46} & \textbf{86.25 ± 3.71} & \textbf{89.83 ± 5.39} & \textbf{91.78 ± 4.31}  \\ 
        ~ & final layer & 93.59 ± 2.41 & 79.31 ± 5.13 & \textbf{87.14 ± 7.33} & \textbf{78.99 ± 9.06} & 89.05 ± 9.34 & 88.37 ± 9.43 & 92.21 ± 3.34 & 82.51 ± 5.95 & 86.70 ± 3.50 & 89.00 ± 2.94 \\ \hline
    \end{tabular}
    }
    \caption{Feature: we compare the  OOD detection performance ($\uparrow$) with last or second-last layer output from the pretrained model as the input of EPN. The best results are bold. }
    \label{tab:layer}
    
\end{table}

\textbf{Robutness on activation function.} We use the Exponential or SoftPlus as the last layer's activation function and report the result in Table~\ref{tab:act}. Overall, the Exponential activation function shows better performance in terms of OOD-AUROC and OOD-AUPR across several datasets compared to SoftPlus. Specifically, in datasets such as CoauthorCS and CoauthorPhysics, the Exponential activation consistently achieves higher OOD-AUROC, particularly in scenarios like OS-3 and OS-5, with notable gains of around 4-5 points. On the other hand, SoftPlus activation has comparable or slightly better performance in certain cases, such as in PubMed for OS-2 and OS-4. However, the Exponential activation tends to provide more consistent robustness across random splits and diverse datasets.
\begin{table}[!htbp]
    \centering
    \small
    \setlength{\tabcolsep}{4pt}
    \resizebox{\textwidth}{!}{%
    \begin{tabular}{@{}ll|cc|cc|cc|cc|cc@{}}
    \hline
        \multirow{2}{*}{\textbf{Dataset}} & \multirow{2}{*}{\textbf{Model}} & \multicolumn{2}{c|}{\textbf{OS-1 (last)}} & \multicolumn{2}{c|}{\textbf{OS-2 (first)}} & \multicolumn{2}{c|}{\textbf{OS-3 (random)}} & \multicolumn{2}{c|}{\textbf{OS-4 (random)}} & \multicolumn{2}{c}{\textbf{OS-5 (random)}} \\ 
        ~ & ~ & OOD-AUROC$\uparrow$ & OOD-AUPR$\uparrow$ & OOD-AUROC$\uparrow$ & OOD-AUPR$\uparrow$ & OOD-AUROC$\uparrow$ & OOD-AUPR$\uparrow$ & OOD-AUROC$\uparrow$ & OOD-AUPR$\uparrow$ & OOD-AUROC$\uparrow$ & OOD-AUPR$\uparrow$ \\ \hline
        \multirow{2}{*}{\textbf{CoraML}} & softplus & \textbf{90.57 ± 1.38} & \textbf{87.02 ± 1.87} & \textbf{85.53 ± 3.05} & \textbf{73.24 ± 5.70} & \textbf{88.77 ± 1.60} & \textbf{87.73 ± 2.42} & \textbf{87.63 ± 0.94} & \textbf{73.01 ± 1.61} & 90.55 ± 1.71 & 89.52 ± 1.51  \\
        ~ & exp & 89.54 ± 2.06 & 84.35 ± 3.59 & 80.31 ± 4.67 & 67.09 ± 6.07 & 86.56 ± 3.13 & 86.21 ± 3.31 & 85.77 ± 2.41 & 70.98 ± 5.38 & \textbf{90.86 ± 2.05} & \textbf{89.84 ± 2.16}  \\ \hline
        \multirow{2}{*}{\textbf{CiteSeer}} & softplus & \textbf{88.24 ± 2.24} & \textbf{70.03 ± 4.95} & 90.55 ± 1.67 & 78.78 ± 3.39 & 86.64 ± 3.00 & 72.60 ± 5.82 & 85.81 ± 2.39 & 68.41 ± 5.40 & 90.36 ± 1.40 & 76.56 ± 3.26  \\
        ~ & exp & 88.23 ± 2.79 & 69.69 ± 4.97 & \textbf{90.86 ± 1.51} & \textbf{79.18 ± 4.27} & \textbf{87.27 ± 2.52} & \textbf{73.24 ± 5.35} & \textbf{88.78 ± 1.52} & \textbf{74.07 ± 4.81} & \textbf{90.53 ± 3.06} & \textbf{78.03 ± 5.92}  \\ \hline
        \multirow{2}{*}{\textbf{PubMed}} & softplus & \textbf{67.77 ± 3.91} & \textbf{54.23 ± 4.31} & 61.91 ± 12.33 & 31.03 ± 9.02 & 50.59 ± 5.92 & 39.37 ± 4.41 & \textbf{70.03 ± 2.79} & \textbf{39.31 ± 4.15} & \textbf{66.15 ± 2.27} & 52.41 ± 1.97  \\
        ~ & exp & 67.38 ± 3.85 & 53.66 ± 3.72 & \textbf{65.25 ± 7.45} & \textbf{33.01 ± 6.89} & \textbf{53.65 ± 6.11} & \textbf{41.47 ± 4.14} & 69.39 ± 3.78 & 36.84 ± 5.64 & 64.78 ± 4.79 & \textbf{53.61 ± 4.34}  \\ \hline
        \multirow{2}{*}{\textbf{AmazonPhotos}} & softplus & \textbf{88.57 ± 4.94} & \textbf{82.54 ± 6.72} & 83.20 ± 10.47 & 80.35 ± 10.43 & 76.77 ± 5.81 & 58.31 ± 6.12 & \textbf{86.28 ± 2.65} & \textbf{88.53 ± 1.94} & 91.31 ± 2.28 & 80.27 ± 5.39  \\
        ~ & exp & 84.48 ± 6.64 & 78.42 ± 7.36 & \textbf{88.61 ± 4.27} & \textbf{86.07 ± 5.40} & \textbf{88.34 ± 5.87} & \textbf{78.18 ± 10.86} & 83.76 ± 5.14 & 86.89 ± 4.47 & \textbf{92.47 ± 3.14} & \textbf{83.89 ± 5.27}  \\ \hline
        \multirow{2}{*}{\textbf{AmazonComputers}} & softplus & 77.21 ± 7.10 & 55.53 ± 11.28 & \textbf{90.64 ± 1.30} & \textbf{95.82 ± 0.73} & \textbf{78.96 ± 4.57} & \textbf{82.05 ± 3.97} & \textbf{75.22 ± 4.29} & \textbf{64.51 ± 6.68} & 72.60 ± 3.54 & 83.36 ± 2.19  \\
        ~ & exp & \textbf{80.45 ± 4.57} & \textbf{66.92 ± 6.29} & 85.09 ± 4.42 & 93.09 ± 2.49 & 70.39 ± 11.95 & 78.10 ± 9.22 & 72.34 ± 7.10 & 64.04 ± 7.82 & \textbf{75.34 ± 5.21} & \textbf{85.33 ± 3.94}  \\ \hline
        \multirow{2}{*}{\textbf{CoauthorCS}} & softplus & 90.14 ± 3.31 & 85.90 ± 5.35 & 81.64 ± 8.33 & 56.55 ± 14.99 & 82.38 ± 10.26 & 55.23 ± 16.75 & 91.18 ± 3.75 & 85.90 ± 6.39 & 77.49 ± 7.76 & 53.94 ± 11.67  \\
        ~ & exp & \textbf{92.85 ± 1.86} & \textbf{91.00 ± 2.38} & \textbf{93.31 ± 2.74} & \textbf{77.89 ± 7.92} & \textbf{91.30 ± 7.46} & \textbf{67.95 ± 15.37} & \textbf{94.04 ± 2.59} & \textbf{89.71 ± 5.65} & \textbf{93.60 ± 3.71} & \textbf{80.74 ± 10.03}  \\ \hline
        \multirow{2}{*}{\textbf{CoauthorPhysics}} & softplus & \textbf{94.98 ± 2.28} & \textbf{83.10 ± 5.83} & \textbf{91.04 ± 4.71} & \textbf{82.99 ± 6.29} & 86.41 ± 5.64 & 84.39 ± 5.96 & 92.14 ± 4.65 & 81.21 ± 8.54 & \textbf{88.18 ± 7.44} & \textbf{89.31 ± 7.38}  \\
        ~ & exp & 93.59 ± 2.41 & 79.31 ± 5.13 & 87.14 ± 7.33 & 78.99 ± 9.06 & \textbf{89.05 ± 9.34} & \textbf{88.37 ± 9.43} & \textbf{92.21 ± 3.34} & \textbf{82.51 ± 5.95} & 86.70 ± 3.50 & 89.00 ± 2.94 \\ \hline
    \end{tabular}
    }
    \caption{Activation function: we compare the  OOD detection performance ($\uparrow$) with Exponential or SoftPlus as the last layer's activation function. The best results are bold. }
    \label{tab:act}
\end{table}

\subsection{OOD Detection in Image Classification Tasks}
We conduct preliminary experiments on image classification tasks to evaluate the generalization ability of our proposed framework. Our original EPN design incorporates a label propagation layer within a graph structure. To adapt the framework for image classification, we remove the label propagation layer and replace the graph neural network (GNN) components with a standard image classification network. Specifically, we use LeNet in the experiments. 

For our evaluation, we follow the setup in \cite{malinin2018predictive}, using the MNIST dataset for training and Omniglot as the out-of-distribution (OOD) dataset. We compare our model against two baseline approaches: batch-ensemble \citep{chen2023batch} and packed-ensemble \citep{laurent2023packed}, leveraging publicly available implementations\footnote{\url{https://github.com/ENSTA-U2IS-AI/torch-uncertainty}}. To ensure a fair comparison, we tune the hyperparameters of all models, including the baselines, based on the OOD detection performance on the pseudo-OOD validation set, which is FasionMNIST in our experiments.

\begin{table*}[h!]
\centering
\caption{OOD Detection on Image Classification Task}
\vspace{1mm}
\begin{tabular}{c|c|c|c}
\toprule
         Models & ID-ACC & OOD-AUROC & OOD-AUPR \\
\midrule
        Energy & 99.17 $\pm$ 0.26 & 98.05 $\pm$ 0.74 & 98.07 $\pm$ 0.63 \\
        Batch-ensemble & \textbf{99.21} $\pm$ 0.09 & 95.63 $\pm$ 3.40 & 95.15 $\pm$ 3.62 \\
        Packed-ensemble & 99.16 $\pm$ 0.08 & 97.58 $\pm$ 0.95 & 96.82 $\pm$ 1.15 \\
        ENN & 99.06 $\pm$ 0.06 & \textbf{99.81} $\pm$ 0.24 & \textbf{99.83} $\pm$ 0.23 \\
        EPN-reg & 99.03 $\pm$ 0.00 & 98.84 $\pm$ 0.72 & 98.48 $\pm$ 0.40\\
\bottomrule
\end{tabular}
\label{tab:ood_image}
\end{table*}

The results of these experiments, presented in Table~\ref{tab:ood_image}, indicate that our proposed model (EPN-reg) delivers the best epistemic uncertainty quantification (OOD detection AUPR and AUROC), outperforming all baselines by 2.25\% (AUPR) compared to the worst model, and is comparable to EGNN. These preliminary results demonstrate the effectiveness of our proposed model on epistemic uncertainty quantification.

\end{document}